\newcommand{\Implies}[2]{$\text{\ref{#1}}\implies\text{\ref{#2}}$}% X => Y
\newtheorem{theorem}{Theorem}
\newtheorem{definition}{Definition}
\newtheorem{lemma}{Lemma}
\newtheorem{claim}{Claim}[section]
\newtheorem{proposition}{Proposition}
\newtheorem{corollary}{Corollary}
\DeclareMathOperator{\argmax}{arg\,max}
\title{Exactly Computing the Local Lipschitz Constant of ReLU networks}
\author{%
  Matt Jordan\\
  UT Austin \\
  \texttt{mjordan@cs.utexas.edu}\\
  \And 
  Alexandros G. Dimakis\\
  UT Austin \\
  \texttt{dimakis@austin.utexas.edu}\\
  % examples of more authors
  % \And
  % Coauthor \\
  % Affiliation \\
  % Address \\
  % \texttt{email} \\
  % \AND
  % Coauthor \\
  % Affiliation \\
  % Address \\
  % \texttt{email} \\
  % \And
  % Coauthor \\
  % Affiliation \\
  % Address \\
  % \texttt{email} \\
  % \And
  % Coauthor \\
  % Affiliation \\
  % Address \\
  % \texttt{email} \\
}
\begin{document}

\maketitle

\begin{abstract}
The local Lipschitz constant of a neural network is a useful metric with applications in robustness, generalization, and fairness evaluation. We provide novel analytic results relating the local Lipschitz constant of nonsmooth vector-valued functions to a maximization over the norm of the generalized Jacobian. We present a sufficient condition for which backpropagation always returns an element of the generalized Jacobian, and reframe the problem over this broad class of functions. We show strong inapproximability results for estimating Lipschitz constants of ReLU networks, and then formulate an algorithm to compute these quantities exactly. We leverage this algorithm to evaluate the tightness of competing Lipschitz estimators and the effects of regularized training on the Lipschitz constant.
\end{abstract}

% MAIN PAPER COMPONENTS 

% 1. Introduction 

\section{Introduction}
We are interested in computing the Lipschitz constant of 
neural networks with ReLU activations. 
Formally, for a network $f$ with multiple inputs
and outputs, we are interested in the quantity 
\begin{equation}
    \sup_{x\neq y} \frac{||f(x)-f(y)||_\beta}{||x-y||_\alpha}.
\end{equation}
We allow the norm of the numerator and denominator 
to be arbitrary and further consider the case where $x,y$ are
constrained in an open subset of $\mathbb{R}^n$ leading to the more general problem of computing the \textit{local} Lipschitz constant. 

Estimating or bounding the Lipschitz constant of a neural network is an important and well-studied problem. 
For the Wasserstein GAN formulation~\cite{Arjovsky2017-ko} the discriminator is required to have a bounded Lipschitz constant, and there are several techniques to enforce this~\cite{Arjovsky2017-ko, Cisse2017-oz, Petzka2017-cf}.
For supervised learning \citet{Bartlett2017-od} have shown that classifiers with lower Lipschitz constants have better generalization properties. It has also been observed that networks with smaller gradient norms are more robust to adversarial attacks. Bounding the (local) Lipschitz constant has been used widely for certifiable robustness against targeted adversarial attacks  \cite{Szegedy2013-yt, Weng2018-gr, yang2020adversarial}. Lipschitz bounds under fair metrics may also be used as a means to certify the individual fairness of a model \cite{yurochkin2020training, dwork2011fairness}. 

The Lipschitz constant of a function is fundamentally 
related to the supremal norm of its Jacobian matrix. Previous work has demonstrated the relationship between these two quantities for functions that are scalar-valued and smooth \cite{lipopt, Paulavicius2006-no}. However, neural networks used for multi-class classification with ReLU activations do not meet either of these assumptions. We establish 
an analytical result that allows us to formulate the local Lipschitz constant of a vector-valued nonsmooth function as an optimization over the generalized Jacobian. We access the generalized Jacobian by means of the chain rule. As we discuss, the chain rule may produce incorrect results~\cite{kakade2018provably}
for nonsmooth functions, even ReLU networks. 
To address this problem, we present a sufficient condition over the parameters of a ReLU network such that the chain rule always returns an element of the generalized Jacobian, allowing us to solve the proposed optimization problem. 

Exactly computing Lipschitz constants of scalar-valued neural networks under the $\ell_2$ norm
was shown to be NP-hard \cite{Virmaux2018-ti}. 
In this paper we establish strong inapproximability results showing that it is hard to even approximate Lipschitz constants of scalar-valued ReLU networks, for $\ell_1$ and $\ell_\infty$ norms.

A variety of algorithms exist that estimate Lipschitz constants for various norms. 
To the best of our knowledge, none of these techniques are exact: they are either upper bounds, or heuristic estimators with no provable guarantees. 
In this paper we present the first technique to provably \emph{exactly} compute Lipschitz constants of ReLU networks under the $\ell_1, \ell_\infty$ norms. Our method is called LipMIP and relies on Mixed-Integer Program (MIP) solvers. 
As expected from our hardness results, our algorithm runs in exponential time in the worst case. At any intermediate time our algorithm may be stopped early to yield valid upper bounds. 

We demonstrate our algorithm on various applications. We evaluate a variety of Lipschitz estimation techniques to definitively evaluate their relative error compared to the true Lipschitz constant. We apply our algorithm to yield reliable empirical insights about how changes in architecture and various regularization schemes affect the Lipschitz constants of ReLU networks.

Our contributions are as follows:

\begin{itemize}
    \item We present novel analytic results connecting the Lipschitz constant of an arbitrary, possibly nonsmooth, function to the supremal norm of generalized Jacobians. 
    \item We present a sufficient condition for which the chain rule will always yield an element of the generalized Jacobian of a ReLU network. 
    \item We show that that it is provably hard to approximate the Lipschitz constant of a network to within a factor that scales almost linearly with input dimension. 
    \item We present a Mixed-Integer Programming formulation (LipMIP) that is able to exactly compute the local Lipschitz constant of a scalar-valued ReLU network over a polyhedral domain. 
    \item We analyze the efficiency and accuracy of LipMIP against other Lipschitz estimators. We provide experimental data demonstrating how Lipschitz constants change under training.
\end{itemize}

% 3. Analytical results 
\section{Gradient Norms and Lipschitz Constants}
First we define the problem of interest. There have been several recent papers that leverage an analytical result relating the Lipschitz constant of a function to the maximal dual norm of its gradient \cite{Weng2018-lf, Weng2018-gr, lipopt}. This analytical result is limited in two aspects: namely it only applies to functions that are both scalar-valued and continuously differentiable. Neural networks with ReLU nonlinearities are nonsmooth and for multi-class classification or unsupervised learning settings, typically not scalar-valued. To remedy these issues, we will present a theorem relating the Lipschitz constant to the supremal norm of an element of the generalized Jacobian. We stress that this analytical result holds for all Lipschitz continuous functions, though we will only be applying this result to ReLU networks in the sequel. 

The quantity we are interested in computing is defined as follows: 
\begin{definition}
The \textbf{local} $(\alpha, \beta)$\textbf{-Lipschitz constant} of a function $f:\mathbb{R}^d\rightarrow \mathbb{R}^m$ over an open set $\mathcal{X}\subseteq \mathbb{R}^d$ is defined as the following quantity:
\begin{equation}
\begin{split}
    L^{(\alpha, \beta)}(f, \mathcal{X}) := \sup\limits_{x,y\in\mathcal{X}} \frac{||f(y)-f(x)||_\beta}{||x-y||_\alpha}
\end{split}\quad \quad
\begin{split}
    (x \neq y)
\end{split}
\end{equation}
And if $L^{(\alpha, \beta)}(f, \mathcal{X})$ exists and is finite, we say that $f$ is $(\alpha,\beta)$-locally Lipschitz over $\mathcal{X}$.
\end{definition}

If $f$ is scalar-valued, then we denote the above quantity $L^{\alpha}(f,\mathcal{X})$ where $||\cdot||_\beta=|\cdot|$ is implicit. For smooth, scalar-valued $f$, it is well-known that 
\begin{equation}\label{eq:diff-Lipschitz}
    L^{\alpha}(f,\mathcal{X}) = \sup_{x\in\mathcal{X}} ||\nabla f(x)||_{\alpha^*},
\end{equation}
where $||z||_{\alpha^*}:= \sup_{||y||_\alpha \leq 1} y^Tz$ is the dual norm  of $||\cdot||_\alpha$ \cite{lipopt, Paulavicius2006-no}.
We seek to extend this result to be applicable to vector-valued nonsmooth Lipschitz continuous functions. As the Jacobian is not well-defined everywhere for this class of functions, we recall the definition of Clarke's generalized Jacobian \cite{clarke1975generalized}:
\begin{definition} 
The (Clarke) \textbf{generalized Jacobian} of $f$ at $x$, denoted $\delta_f(x)$, is the convex hull of the set of limits of the form $\lim\limits_{i\rightarrow \infty} \nabla f(x_i)$ for any sequence $(x_i)_{i=1}^\infty$ such that $\nabla f(x_i)$ is well-defined and $x_i\rightarrow x$.
\end{definition}
Informally, $\delta_f(x)$ may be viewed as the convex hull of the Jacobian of nearby differentiable points. We remark that for smooth functions, $\delta_f(x)=\{\nabla f(x)\}$ for all $x$, and for convex nonsmooth functions, $\delta_f(\cdot)$ is the subdifferential operator.

The following theorem relates the norms of the generalized Jacobian to the local Lipschitz constant.
\begin{theorem}\label{thm:main-thm}
Let $||\cdot||_\alpha, ||\cdot||_\beta$ be arbitrary convex norms over $\mathbb{R}^d,\mathbb{R}^m$ respectively, and let $f:\mathbb{R}^d\rightarrow\mathbb{R}^m$ be $(\alpha,\beta)$-Lipschitz continuous over an open set $\mathcal{X}$. Then the following equality holds:
\begin{equation}\label{eq:main-thm-eq}
    L^{(\alpha, \beta)}(f, \mathcal{X}) = \sup\limits_{G\in\delta_f(\mathcal{X})} ||G^T||_{\alpha, \beta}
\end{equation}
where $\delta_f(\mathcal{X}):=\{G\in \delta_f(x)\mid x\in \mathcal{X}\}$ and $||M||_{\alpha, \beta}:=\sup\limits_{||v||_\alpha \leq 1} ||Mv||_\beta$.
\end{theorem}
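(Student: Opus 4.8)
The plan is to prove the equality by establishing the two inequalities $\le$ and $\ge$ separately. Throughout I will treat $G^T$ as the ordinary $m\times d$ Jacobian of $f$ at a point of differentiability, so that the element $G\in\delta_f(x)$ stores its transpose (matching the chain-rule/backprop convention), and $\|G^T\|_{\alpha,\beta}=\sup_{\|v\|_\alpha\le 1}\|G^Tv\|_\beta$ is the induced operator norm. Two elementary facts will be used repeatedly: the dual-norm identity $\|w\|_\beta=\sup_{\|u\|_{\beta^*}\le 1}u^Tw$, and the fact that $M\mapsto\|M\|_{\alpha,\beta}$ is itself a norm on matrices, hence continuous and convex.

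For the upper bound $L^{(\alpha,\beta)}(f,\mathcal{X})\le\sup_{G\in\delta_f(\mathcal{X})}\|G^T\|_{\alpha,\beta}$, I would fix distinct $x,y\in\mathcal{X}$ and pass from the vector-valued increment to a scalar one through the dual norm. For each $u$ with $\|u\|_{\beta^*}\le 1$, the function $g_u:=u^Tf$ is scalar-valued and Lipschitz, so Lebourg's nonsmooth mean value theorem produces a point $z$ on the segment $[x,y]$ and a generalized gradient $\zeta\in\delta_{g_u}(z)$ with $g_u(y)-g_u(x)=\zeta^T(y-x)$. Clarke's scalarization (chain) rule gives $\delta_{g_u}(z)\subseteq\{Gu:G\in\delta_f(z)\}$, so $\zeta=Gu$ for some $G\in\delta_f(z)$, and therefore $g_u(y)-g_u(x)=u^TG^T(y-x)\le\|G^T(y-x)\|_\beta\le\|G^T\|_{\alpha,\beta}\,\|y-x\|_\alpha$. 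Because this bound is uniform in $u$, taking the supremum over $u$ recovers $\|f(y)-f(x)\|_\beta$ on the left, and since $z\in\mathcal{X}$ the factor $\|G^T\|_{\alpha,\beta}$ is dominated by the supremum over $\delta_f(\mathcal{X})$; dividing by $\|y-x\|_\alpha$ and taking the supremum over $x,y$ closes this direction.

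For the reverse inequality it suffices to show $\|G^T\|_{\alpha,\beta}\le L^{(\alpha,\beta)}(f,\mathcal{X})$ for every $x_0\in\mathcal{X}$ and every $G\in\delta_f(x_0)$, which I would do in three layers matching the construction of $\delta_f$. First, at a point of differentiability I pick a unit vector $v^*$ attaining $\|G^T\|_{\alpha,\beta}=\|G^Tv^*\|_\beta$ (it exists by compactness of the $\alpha$-unit ball) and use the expansion $f(x_0+tv^*)-f(x_0)=tG^Tv^*+o(t)$, valid since $\mathcal{X}$ is open; the difference quotient then tends to $\|G^Tv^*\|_\beta=\|G^T\|_{\alpha,\beta}$ as $t\to 0^+$, and every such quotient is at most $L^{(\alpha,\beta)}(f,\mathcal{X})$. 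Second, for a limit $G^T=\lim_i\nabla f(x_i)$ with $x_i\to x_0$, continuity of the operator norm upgrades the bound from the differentiable points $x_i$ to the limit. Third, for a general $G\in\delta_f(x_0)$, writing it as a convex combination of such limits and invoking convexity of $\|\cdot\|_{\alpha,\beta}$ transfers the bound to the whole convex hull.

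The main obstacle is the upper-bound step: for vector-valued nonsmooth $f$ there is no direct mean value theorem, so the crux is the scalarization argument that reduces to Lebourg's theorem and then re-couples the resulting gradient to the full generalized Jacobian via Clarke's chain rule. A secondary point requiring care is that the segment $[x,y]$ used by the mean value theorem must lie in $\mathcal{X}$ for the generalized Jacobian at $z$ to belong to $\delta_f(\mathcal{X})$; this holds whenever $\mathcal{X}$ is convex (as for the polyhedral and box domains of interest), and is the natural hypothesis under which the stated equality holds.
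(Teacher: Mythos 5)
Your proof is correct, and your upper-bound argument takes a genuinely different route from the paper's. Where you scalarize via the dual norm and then invoke two black-box results from nonsmooth analysis --- Lebourg's mean value theorem to produce $\zeta\in\delta_{g_u}(z)$ with $g_u(y)-g_u(x)=\zeta^T(y-x)$, and Clarke's scalarization rule $\delta_{u^Tf}(z)\subseteq\{Gu : G\in\delta_f(z)\}$ to re-couple $\zeta$ to the generalized Jacobian --- the paper builds the same bound by hand: it also dualizes with $h_c(t)=c^Tf(x+t(y-x))$, but then uses absolute continuity to write $|h_c(1)-h_c(0)|=|\int_0^1 h_c'(t)\,d\mu|$, bounds $|h_c'(t)|$ by directional derivatives, and proves a separate approximation lemma (via Rademacher's theorem) showing that supremal directional derivative norms are dominated by gradient norms at nearby differentiable points. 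Your version is shorter and cleaner at the price of importing Lebourg and the Clarke chain rule; the paper's is self-contained from Rademacher. Your three-layer lower bound (differentiable point via a maximizing direction $v^*$, then limits via continuity of the operator norm, then convex hulls via convexity of $\|\cdot\|_{\alpha,\beta}$) is in fact more carefully structured than the paper's, which first collapses $\delta_f(\mathcal{X})$ to $\mathrm{Diff}(\mathcal{X})$ with a slightly loose claim that every extreme point of $\delta_f(x)$ \emph{equals} $\nabla f(x')$ for some differentiable $x'$ (it is only a limit of such gradients, which suffices under the supremum --- exactly the repair your limit step supplies). Finally, your closing caveat about the segment $[x,y]$ is well taken and applies equally to the paper's own proof, which integrates $h_c$ along the straight segment without noting that this requires the segment to stay in $\mathcal{X}$; the equality is genuinely false for disconnected open $\mathcal{X}$ (take $f$ locally constant with different values on two disjoint balls), so convexity of $\mathcal{X}$ --- satisfied by the norm-ball and polytope-neighborhood domains used downstream --- is the right implicit hypothesis, and flagging it is a point in your proof's favor rather than a gap.
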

This result relies on the fact that Lipschitz continuous functions are differentiable almost everywhere (Rademacher's Theorem). As desired our result recovers equation \ref{eq:diff-Lipschitz} for scalar-valued smooth functions. Developing techniques to optimize the right-hand-side of equation \ref{eq:main-thm-eq} will be the central algorithmic focus of this paper.

% 4. Discussion about general position neural nets
\section{ReLU Networks and the Chain Rule}
Theorem \ref{thm:main-thm} relates the Lipschitz constant to an optimization over generalized Jacobians. Typically we access the Jacobian of a function through backpropagation, which is simply an efficient implementation of the familiar chain rule. However the chain rule is only provably correct for functions that are compositions of continuously differentiable functions, and hence does not apply to ReLU networks \cite{kakade2018provably}. In this section we will provide a sufficient condition over the parameters of a ReLU network such that any standard implementation of the chain rule will always yield an element of the generalized Jacobian. 

\paragraph{The chain rule for nonsmooth functions:} To motivate the discussion, we turn our attention to neural networks with ReLU nonlinearities. We say that a function is a ReLU network if it may be written as a composition of affine operators and element-wise ReLU nonlinearities, which may be encoded by the following recursion: 
\begin{equation}\label{eq:relunet-forward}
    \begin{split}
        f(x) = c^T\sigma(Z_d(x))
    \end{split}\quad\quad 
    \begin{split}
        Z_i(x) = W_i\sigma(Z_{i-1}(x)) + b_i
    \end{split}\quad\quad 
    \begin{split}
        Z_0(x):=x
    \end{split}
\end{equation}
where $\sigma(\cdot)$ here is the ReLU operator applied element-wise. We present the following example where the chain rule yields a result not contained in the generalized Jacobian. The univariate identity function may be written as $I(x):= 2x - \sigma(x)+ \sigma(-x)$. Certainly at every point $x$, $\delta_I(x)=\{1\}$. However as Pytorch's automatic differentiation package defines $\sigma'(0)=0$, Pytorch will compute $I'(0)$ as 2 \cite{paszke2017automatic}. Indeed, this is exactly the case where naively replacing the feasible set $\delta_f(\mathcal{X})$ in Equation \ref{eq:main-thm-eq} by the set of Jacobians returned by the chain rule will yield an incorrect calculation of the Lipschitz constant. To correctly relate the set of generalized Jacobians to the set of elements returnable by an implementation of the chain rule, we introduce the following definition: 
\begin{definition}
Consider any implementation of the chain rule which may arbitrarily assign any element of the generalized gradient $\delta_\sigma(0)$ for each required partial derivative $\sigma'(0)$. We define the set-valued function $\nabla^\#f(\cdot)$ as the collection of answers yielded by \textbf{any} such chain rule.
\end{definition}
The subdifferential of the ReLU function at zero is the closed interval $[0,1]$, so the chain rule as implemented in PyTorch and Tensorflow will yield an element contained in $\nabla^\#f(\cdot)$. Our goal will be to demonstrate that, for a broad class of ReLU networks, the feasible set in Equation \ref{eq:main-thm-eq} may be replaced by the set $\{G \in \nabla^\#f(x) \mid x\in \mathcal{X}\}$.

\paragraph{General Position ReLU Networks:} Taking inspiration from hyperplane arrangements, we refer to this sufficient condition as \textit{general position}. Letting $f:\mathbb{R}^d\rightarrow \mathbb{R}^m$ be a ReLU network with $n$ neurons, we can define the function $g_i(x):\mathbb{R}^d\rightarrow \mathbb{R}$ for all $i\in[n]$ as the input to the $i^{th}$ ReLU of $f$ at $x$. Then we consider the set of inputs for which each $g_i$ is identically zero: we refer to the set $K_i:=\{x\mid g_i(x)=0\}$ as the $i^{th}$ \textit{ReLU kernel} of $f$. We say that a polytope $P$ is $k$-dimensional if the affine hull of $P$ has dimension exactly $k$. Then we define general position ReLU networks as follows:
\begin{definition}
We say that a ReLU network with $n$ neurons is in \textbf{general position} if, for every subset of neurons $S\subseteq [n]$, the intersection $\cap_{i\in S} K_i$ is a finite union of $(d-|S|)$-dimensional polytopes.
\end{definition}
We emphasize that this definition requires that particular ReLU kernel is a finite union of $(d-1)$-dimensional polytopes, i.e. the `bent hyperplanes' referred to in \cite{Hanin2019-xv}. For a general position neural net, no $(d+1)$ ReLU kernels may have a nonempty intersection. We now present our theorem on the correctness of chain rule for general position ReLU networks.

\begin{theorem}
Let $f$ be a general position ReLU network, then for every $x$ in the domain of $f$, the set of elements returned by the generalized chain rule is exactly the generalized Jacobian: 
\begin{equation}
    \nabla^\#f(x) = \delta_f(x)
\end{equation}
\end{theorem}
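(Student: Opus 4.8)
The plan is to prove the two inclusions $\nabla^\# f(x) \subseteq \delta_f(x)$ and $\delta_f(x) \subseteq \nabla^\# f(x)$ separately, with the general position hypothesis doing its essential work in the second (harder) direction. For the first inclusion, I would argue that every matrix returned by the generalized chain rule arises by choosing, for each neuron $i$, a subgradient $s_i \in \delta_\sigma(g_i(x)) = [0,1]$ (where the choice is forced to be $\{0\}$ or $\{1\}$ unless $g_i(x)=0$) and composing the resulting layer-wise linear maps. I would show such a product lies in the convex hull of genuine limiting Jacobians, hence in $\delta_f(x)$. This direction does not need general position: it follows because each chain-rule choice at a boundary neuron corresponds to a convex combination of the two one-sided derivatives, and convexity of $\delta_f$ closes the argument.

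\textbf{The core difficulty lies in the reverse inclusion} $\delta_f(x) \subseteq \nabla^\# f(x)$: I must show that \emph{every} element of the generalized Jacobian is actually realizable by some legal assignment of $\sigma'(0)$ values. By Theorem~\ref{thm:main-thm}'s underlying fact (Rademacher) and the definition of $\delta_f$, it suffices to show that $\delta_f(x)$ is the convex hull of limits $\lim_i \nabla f(x_i)$ taken over differentiable $x_i \to x$, and that each such limit equals a chain-rule product with $\sigma'(0)$ replaced by a value in $\{0,1\}$ matching the side from which the sequence approaches. The plan is to fix $x$, let $S = \{i : g_i(x) = 0\}$ be the set of neurons active at the kernel boundary, and show that for every sign pattern $\epsilon \in \{+,-\}^S$ there is a differentiable point $x'$ near $x$ on which neuron $i$ has activation sign $\epsilon_i$ for all $i \in S$ (while non-boundary neurons retain their sign). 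At such $x'$ the true gradient $\nabla f(x')$ equals the chain-rule product using $\sigma'=1$ or $\sigma'=0$ according to $\epsilon_i$, and taking $x' \to x$ yields the corresponding vertex of $\nabla^\# f(x)$.

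\textbf{This is exactly where general position is needed.} The hypothesis guarantees that $\cap_{i \in S} K_i$ has dimension $d - |S|$, so locally the $|S|$ bent hyperplanes behave like $|S|$ transversally intersecting surfaces; consequently all $2^{|S|}$ sign combinations are achieved in arbitrarily small neighborhoods of $x$, and each sign cell has nonempty interior (positive $d$-dimensional measure), ensuring it contains differentiable points. I would make this rigorous by an implicit-function-type or transversality argument: near $x$ the functions $g_i$ for $i \in S$ are affine on each linear region and their gradients are linearly independent (else the intersection would have dimension exceeding $d-|S|$, violating general position), so the map $x \mapsto (g_i(x))_{i \in S}$ is locally a submersion onto a neighborhood of the origin in $\mathbb{R}^{|S|}$, hitting every orthant. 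Finally, since $\nabla^\# f(x)$ is by definition the set of products over all choices $s_i \in [0,1]$, and the realizable vertices $s_i \in \{0,1\}$ generate this set under convex combination, I would conclude $\delta_f(x) = \text{conv}(\text{vertices}) = \nabla^\# f(x)$. The main obstacle I anticipate is the transversality step: carefully establishing that general position forces linear independence of the relevant gradients on the pertinent linear region, so that all sign patterns are genuinely attainable by nearby differentiable points rather than collapsing onto a lower-dimensional set.
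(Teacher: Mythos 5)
You have identified the correct key fact (general position forces all $2^{|S|}$ sign patterns of the boundary neurons to be realized by full-dimensional linear regions near $x$), but you have attached it to the wrong inclusion, and your proof of $\nabla^\# f(x) \subseteq \delta_f(x)$ is where the argument genuinely breaks. Your claim that ``each chain-rule choice at a boundary neuron corresponds to a convex combination of the two one-sided derivatives'' silently assumes that every joint $\{0,1\}$-assignment over the boundary set $S$ is the activation pattern of an actual full-dimensional region touching $x$. Without general position this is false, and your argument would then prove that the chain rule \emph{always} lands in $\delta_f(x)$ for every ReLU network --- contradicted by the paper's own motivating example $I(x) = 2x - \sigma(x) + \sigma(-x)$, where $\delta_I(0) = \{1\}$ but $\nabla^\# I(0) = [0,2]$: the two kernels coincide, the patterns $(0,0)$ and $(1,1)$ are unrealizable, and assigning $\sigma'(0)=0$ at both neurons returns $2 \notin \delta_I(0)$. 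Peeling off one $s_i$ at a time does legitimately express any chain-rule output as a convex combination of integral-assignment outputs (this multilinearity is exactly how the paper proves convexity of $\nabla^\# f(x)$), but the integral endpoints are genuine one-sided derivatives, i.e.\ elements of $\delta_f(x)$, only when their sign pattern is achieved by a full-dimensional region --- which is precisely what general position buys and what your first-direction sketch omits.

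Conversely, the inclusion you labeled the core difficulty, $\delta_f(x) \subseteq \nabla^\# f(x)$, is essentially automatic: any limit $\lim_i \nabla f(x_i)$ over differentiable $x_i \to x$ is, after passing to a subsequence with constant activation pattern, a chain-rule product with an integral assignment consistent with the nonzero neurons at $x$, hence lies in $\nabla^\# f(x)$; convexity of $\nabla^\# f(x)$ then absorbs the convex hull, with no use of general position. The good news is that the substantive machinery you developed for your ``hard'' direction --- all sign orthants of $(g_i)_{i\in S}$ hit by full-dimensional cells in every neighborhood of $x$ --- is exactly what repairs the broken direction, and is the content of Part 1 of the paper's proof, established there by induction: remove a boundary neuron of maximal depth, observe its kernel is affine on each linear region of the subnetwork, and use the general position hypothesis to show the corresponding hyperplane \emph{cuts} each of the $2^{|S|-1}$ regions of the inductive step. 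Your one-shot submersion sketch needs comparable care, as you anticipate: the map $x \mapsto (g_i(x))_{i\in S}$ is only piecewise affine, the gradients of the $g_i$ change across regions that are themselves carved by the other kernels, so linear independence must be argued cell-by-cell as the cells are constructed --- the paper's recursive cutting-hyperplane lemmas are one way to discharge exactly the obstacle you flag.
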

In particular this theorem implies that, for general position ReLU nets, 
\begin{equation}\label{eq:mip-optimization}
    L^{(\alpha, \beta)}(f, \mathcal{X}) = \sup_{G\in \nabla^\#f(\mathcal{X})} ||G^T||_{\alpha, \beta}
\end{equation}
We will develop algorithms to solve this optimization problem predicated upon the assumption that a ReLU network is in general position. As shown by the following theorem, almost every ReLU network satisfies this condition.
\begin{theorem}
The set of ReLU networks not in general position has Lebesgue measure zero over the parameter space.
\end{theorem}

% 5. Inapproximability 
\section{Inapproximability of the Local Lipschitz Constant}
In general, we seek algorithms that yield estimates of the Lipschitz constant of ReLU networks with provable guarantees. In this section we will address the complexity of Lipschitz estimation of ReLU networks. We show that under mild complexity theoretic assumptions, no deterministic polynomial time algorithm can provably return a tight estimate of the Lipschitz constant of a ReLU network

Extant work discussing the complexity of Lipschitz estimation of ReLU networks has only shown that computing $L^2(f, \mathbb{R}^d)$ is NP-hard \cite{Virmaux2018-ti}. This does not address the question of whether efficient approximation algorithms exist. We relate this problem to the problem of approximating the maximum independent set of a graph. Maximum independent set is one of the hardest problems to approximate: if $G$ is a graph with $d$ vertices, then assuming the Exponential Time Hypothesis\footnote{This states that 3SAT cannot be solved in sub-exponential time \cite{impagliazzo-eth}. If true, this would imply $P\neq NP$.}, it is hard to approximate the maximum independent set of $G$ with an approximation ratio of $\Omega(d^{1-c})$ for any constant $c$. Our result achieves the same inapproximability result, where $d$ here refers to the encoding size of the ReLU network, which scales at least linearly with the input dimension and number of neurons. 

\begin{theorem}\label{thm:inapprox-infty}
Let $f$ be a scalar-valued ReLU network, not necessarily in general position, taking inputs in $\mathbb{R}^d$. Then assuming the exponential time hypothesis, there does not exist a polynomial-time approximation algorithm with ratio $\Omega(d^{1-c})$ for computing $L^\infty(f, \mathcal{X})$ and $L^1(f, \mathcal{X})$, for any constant $c >0$. 
\end{theorem}

% 6. Algorithm 
\section{Computing Local Lipschitz Constants With Mixed-Integer Programs}

The results of the previous section indicate that one cannot develop any polynomial-time algorithm to estimate the local Lipschitz constant of ReLU network with nontrivial provable guarantees. Driven by this negative result, we can instead develop algorithms that exactly compute this quantity but do not run in polynomial time in the worst-case. Namely we will use a mixed-integer programming (MIP) framework to formulate the optimization problem posed in Equation \ref{eq:mip-optimization} for general position ReLU networks. For ease of exposition, we will consider scalar-valued ReLU networks under the $\ell_1,\ell_\infty$ norms, thereby using MIP to exactly compute $L^1(f,\mathcal{X})$ and $L^\infty(f, \mathcal{X})$. Our formulation may be extended to vector-valued networks and a wider variety of norms, which we will discuss in Appendix \ref{app:extensions}. 

While mixed-integer programming requires exponential time in the worst-case, implementations of mixed-integer programming solvers typically have runtime that is significantly lower than the worst-case. Our algorithm is unlikely to scale to massive state-of-the-art image classifiers, but we nevertheless argue the value of such an algorithm in two ways. First, it is important to provide a ground-truth as a frame of reference for evaluating the relative error of alternative Lipschitz estimation techniques. Second, an algorithm that provides provable guarantees for Lipschitz estimation allows one to make accurate claims about the properties of neural networks. We empirically demonstrate each of these use-cases in the experiments section. 

We state the following theorem about the correctness of our MIP formulation and will spend the remainder of the section describing the construction yielding the proof.
\begin{theorem}
Let $f:\mathbb{R}^d\rightarrow \mathbb{R}$ be a general position ReLU network and let $\mathcal{X}$ be an open set that is the neighborhood of a bounded polytope in $\mathbb{R}^d$. Then there exists an efficiently-encodable mixed-integer program whose optimal objective value is $L^\alpha(f, \mathcal{X})$, where $||\cdot||_\alpha$ is either the $\ell_1$ or $\ell_\infty$ norm.
\end{theorem}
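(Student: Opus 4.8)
The plan is to build a single mixed-integer program whose feasible region encodes all pairs $(x,G)$ with $x$ in the bounded polytope underlying $\mathcal{X}$ and $G\in\nabla^\#f(x)$, and whose linear objective equals the dual norm $\|G\|_{\alpha^*}$. Combining Theorem~\ref{thm:main-thm} with the general-position chain-rule theorem yields identity~\eqref{eq:mip-optimization}, and since $f$ is scalar-valued the matrix norm $\|G^\top\|_{\alpha,\beta}$ collapses to $\|\nabla^\#f(x)\|_{\alpha^*}$, which is the $\ell_1$ norm when $\alpha=\ell_\infty$ and the $\ell_\infty$ norm when $\alpha=\ell_1$. The first step is to replace the open neighborhood $\mathcal{X}$ by the closure $\bar P$ of the bounded polytope it surrounds: because the essentially bounded gradient norm is what is being supremized, and the boundary gradients of $\bar P$ are realized by the general-position switch structure, the supremum over the open set equals the maximum over the compact, linearly-describable domain $\bar P$, over which the MIP can legitimately optimize.

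Second, I would encode the forward recursion~\eqref{eq:relunet-forward}. I introduce a continuous variable for each pre-activation $g_i(x)$ and one binary switch $a_i\in\{0,1\}$ per neuron. Using boundedness of $\bar P$, I precompute finite per-neuron bounds $l_i\le g_i(x)\le u_i$ (by interval propagation or by solving auxiliary LPs) and impose the standard big-M constraints linking $a_i$ to $\mathrm{sign}(g_i(x))$ and the post-activation to $\sigma(g_i(x))$. These constraints force $a_i$ to the true activation bit whenever $g_i(x)\neq 0$ and leave it free to be either $0$ or $1$ exactly on the ReLU kernel $g_i(x)=0$, which is precisely the ambiguity captured by $\nabla^\#f$. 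Third, I encode the backward pass. The chain-rule gradient is the multilinear product $c^\top\,\mathrm{diag}(a_d)\,W_d\cdots\mathrm{diag}(a_1)\,W_1$; I introduce intermediate gradient-flow variables layer by layer so that each stage multiplies a bounded continuous vector by one diagonal matrix of binaries, making every term a binary-times-bounded-continuous product that linearizes exactly with big-M (the continuous factors are bounded since the weights are fixed and the switches lie in $[0,1]$). For any fixed feasible $(x,a)$ these constraints pin down the chain-rule gradient $G$ associated with switch assignment $a$. Finally, the objective is linearized: for the $\ell_1$ case I maximize $\sum_j|G_j|$ with per-coordinate sign binaries, and for the $\ell_\infty$ case I maximize $\max_j|G_j|$ with a coordinate-selection binary. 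Since the variable and constraint counts are linear in the number of neurons and the input dimension, the program is efficiently encodable.

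The main obstacle is the correctness argument, namely that confining the switches to $\{0,1\}$ loses nothing relative to the supremum over $\nabla^\#f(\mathcal{X})$, whose elements on ReLU kernels form convex hulls that also include fractional assignments $\sigma'(0)\in[0,1]$. Here I would invoke convexity of the dual norm: for each fixed $x$ the set $\nabla^\#f(x)=\delta_f(x)$ is a convex polytope whose extreme points are exactly the $\{0,1\}$ switch assignments, so a convex objective attains its maximum at such an extreme point. Hence the binary feasible set already realizes the inner supremum in~\eqref{eq:mip-optimization}, and the MIP optimum equals $L^\alpha(f,\mathcal{X})$. The remaining technical care is verifying that each big-M linearization is tight on $\bar P$ and that the precomputed bounds $l_i,u_i$ are finite; both follow from boundedness of the polytope, and I expect these to be routine once the convexity argument above secures exactness.
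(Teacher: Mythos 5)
Your proposal is correct and follows essentially the same route as the paper: both reduce via Equation~\eqref{eq:mip-optimization} to maximizing the dual gradient norm, encode the forward and backward passes with one binary activation variable per neuron (left free exactly on the ReLU kernels) linearized by big-$M$ constraints using precomputed bounds, linearize the $\ell_1$/$\ell_\infty$ objective, and justify restricting to $\{0,1\}$ switch assignments by noting that the convex norm objective is maximized at extreme points of $\nabla^\#f(x)=\delta_f(x)$. The paper merely packages the identical construction more abstractly, through MIP-encodable affine/conditional/switch primitives with hyperbox abstract interpretation supplying the bounds, so the two arguments coincide in substance.
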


\paragraph{Mixed-Integer Programming:} Mixed-integer programming may be viewed as the extension of linear programming where some variables are constrained to be integral. The feasible sets of mixed-integer programs, may be defined as follows:
\begin{definition}
A \textbf{mixed-integer polytope} is a set $M\subseteq \mathbb{R}^n\times\{0,1\}^m$ that satisfies a set of linear inequalities:
\begin{equation}\label{eq:mip-definition}
    M:= \{(x,a) \subseteq \mathbb{R}^n\times\{0,1\}^m\;\;|\;\; Ax +Ba\leq c\}
\end{equation}
\end{definition}
Mixed-integer programming then optimizes a linear function over a mixed-integer polytope.

From equation \ref{eq:mip-optimization}, our goal is to frame $\nabla^\#f(\mathcal{X})$ as a mixed-integer polytope. More accurately, we aim to frame $\{||G^T||_\alpha \mid G\in \nabla^\#f(\mathcal{X})\}$ as a mixed-integer polytope. The key idea for how we do this is encapsulated in the following example. Suppose $\mathcal{X}$ is some set and we wish to solve the optimization problem $\max_{x\in \mathcal{X}} (g\circ f) (x)$. Letting $\mathcal{Y}:=\{f(x)\mid x\in \mathcal{X}\}$ and $\mathcal{Z}:=\{g(y)\mid y\in \mathcal{Y}\}$, we see that 
\begin{equation}\label{eq:mip-composition}
    \max_{x\in\mathcal{X}} (g\circ f) (x) = \max_{y\in\mathcal{Y}} g(y) = \max_{z\in\mathcal{Z}} z
\end{equation}
Thus, if $\mathcal{X}$ is a mixed-integer polytope, and $f$ is such that $f(\mathcal{X})$ is also a mixed-integer polytope and similar for $g$, then the optimization problem may be solved under the MIP framework. 

From the example above, it suffices to show that $\nabla^\#f(\cdot)$ is a composition of functions $f_i$ with the property that $f_i$ maps mixed-integer polytopes to mixed-integer polytopes without blowing up in encoding-size. We formalize this notion with the following definition: 

\begin{definition}
We say that a function $g$ is \textbf{MIP-encodable} if, for every mixed-integer polytope $M$, the image of $M$ mapped through $g$ is itself a mixed-integer polytope.
\end{definition}
As an example, we show that the affine function $g(x):= Dx + e$ is MIP-encodable, where $g$ is applied only to the continuous variables. Consider the canonical mixed-integer polytope $M$ defined in equation \ref{eq:mip-definition}, then $g(M)$ is the mixed-integer polytope over the existing variables $(x, a)$, with the dimension lifted to include the new continuous variable $y$ and a new equality constraint:
\begin{equation}
    g(M) := \{(y, a) \mid (Ax + Ba \leq c) \; \land \; (y = Dx + e)\}.
\end{equation}

To represent $\{||G^T||_\alpha \mid x\in \nabla^\#f (\mathcal{X})\}$ as a mixed-integer polytope, there are two steps. First we must demonstrate a set of primitive functions such that $||\nabla^\#f(x)||_\alpha$ may be represented as a composition of these primitives, and then we must show that each of these primitives are MIP-encodable. In this sense, the following construction allows us to `unroll' backpropagation into a mixed-integer polytope.

\paragraph{MIP-encodable components of ReLU networks:} 
We introduce the following three primitive operators and show that $||\nabla^\#f||_\alpha$ may be written as a composition of these primitive operators. These operators are the affine, conditional, and switch operators, defined below:\\
\textbf{Affine operators:} For some fixed matrix $W$ and vector $b$, $A:\mathbb{R}^n\rightarrow \mathbb{R}^m$ is an affine operator if it is of the form $A(x):=Wx + b$.\\ 
The \textbf{conditional operator} $C:\mathbb{R}\rightarrow \mathcal{P}(\{0,1\})$ is defined as 
\begin{equation}\label{eq:def-conditional}
    C(x) = \begin{cases}
            \{1\} &\text{ if } x > 0\\
            \{0\} &\text{ if } x < 0\\
            \{0, 1\} &\text{ if } x = 0.
           \end{cases}
\end{equation}
The \textbf{switch operator} $S:\mathbb{R}\times\{0, 1\}\rightarrow \mathbb{R}$ is defined as 
\begin{equation}
    S(x, a) = x\cdot a.
\end{equation}

Then we have the two following lemmas which suffice to show that $\nabla^\#f(\cdot)$ is a MIP-encodable function: 

\begin{lemma}\label{lemma:grad-ACS}
Let $f$ be a scalar-valued general position ReLU network. Then $f(x)$, $\nabla^\#f(x)$, $||\cdot||_1$, and $||\cdot||_\infty$ may all be written as a composition of affine, conditional and switch operators.
\end{lemma}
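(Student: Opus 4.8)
\textbf{Proof proposal for Lemma \ref{lemma:grad-ACS}.}

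The plan is to argue by induction on the depth of the network, tracking simultaneously the forward computation $Z_i(x)$, the forward value $f(x)$, and the backward (gradient) computation $\nabla^\#f(x)$, showing that each can be expressed as a composition of affine, conditional, and switch operators. For the forward pass, observe that the recursion in Equation \ref{eq:relunet-forward} is built from affine maps $W_i(\cdot) + b_i$ composed with element-wise ReLU. The key observation is that the ReLU itself decomposes: writing $\sigma(z) = S(z, a)$ where the gate $a$ is any element of $C(z)$, we see that a single ReLU neuron is exactly a switch operator whose boolean argument is produced by feeding the pre-activation through the conditional operator. Thus $\sigma(z_j) = S(z_j, C(z_j))$ coordinate-wise, and $f(x)$ is a composition of affines and these switch/conditional pairs. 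Crucially, the $\{0,1\}$ set-valued output of $C$ at $z_j = 0$ is precisely what encodes the ambiguity in $\sigma'(0) \in [0,1]$ that $\nabla^\#f$ is designed to capture, so the set of gate patterns reachable through $C$ matches the set of chain-rule derivative choices.

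Next I would handle the backward pass. By the chain rule, $\nabla^\#f(x)$ is a product of layer-wise terms: the Jacobian of an affine layer is the constant matrix $W_i$, and the Jacobian of the element-wise ReLU at layer $i$ is a diagonal $0/1$ matrix whose $j^{th}$ diagonal entry is exactly the gate value $a_{ij} \in C(g_{ij}(x))$ chosen for that neuron. The essential point is that $\nabla^\#f$ reuses the \emph{same} gate variables $a_{ij}$ that appear in the forward pass — this is the definitional content of $\nabla^\#f$, since backpropagation multiplies by $\sigma'$ evaluated at the pre-activations, and the admissible values of $\sigma'(0)$ are exactly the elements of $C(0) = \{0,1\}$. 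Therefore, writing out the backward accumulation $\frac{\partial f}{\partial Z_{i-1}} = \frac{\partial f}{\partial Z_i} \cdot W_i \cdot \mathrm{diag}(a_i)$, each diagonal multiplication is a coordinate-wise switch operator $S(\cdot, a_{ij})$ applied to the partially-accumulated gradient, and each multiplication by $W_i$ is affine. Hence the entire gradient vector is a composition of affine and switch operators sharing conditional-generated gates with the forward pass.

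Finally, the norms $\|\cdot\|_1$ and $\|\cdot\|_\infty$ must be written in the same primitive language. For $\|v\|_\infty = \max_j |v_j|$, I would express the absolute value $|v_j| = S(v_j, c_j) - S(v_j, 1 - c_j)$ where $c_j \in C(v_j)$ selects the sign (equivalently $|v_j| = v_j(2c_j - 1)$, which is affine in $v_j$ after the switch), and express the maximum over coordinates recursively via pairwise $\max(a,b) = a + S(b - a, c)$ with $c \in C(b-a)$. Likewise $\|v\|_1 = \sum_j |v_j|$ is an affine combination of the absolute-value terms. The main obstacle I anticipate is not any single decomposition but rather the bookkeeping to ensure that the conditional operator's set-valued behavior at zero is handled consistently across the forward pass, backward pass, and norm computation — in particular, verifying that allowing $C$ to return either $0$ or $1$ at a zero argument generates exactly the set $\nabla^\#f(x)$ and no spurious extra elements, and that the sign-selection gates in the norm do not artificially inflate the optimum. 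Once the forward/backward decomposition is established, the norm step is comparatively routine, so I would allocate most of the care to stating the shared-gate structure precisely enough that the subsequent MIP-encodability lemma can treat each primitive independently.
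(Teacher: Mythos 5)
Your proposal follows essentially the same route as the paper's proof: decompose the forward pass via $\sigma(z) = S(z, C(z))$, express the backward recursion $Y_i(x) = W_{i+1}^T\mathrm{Diag}(\Lambda_i(x))Y_{i+1}(x)$ using elementwise switches $S(Y_{i+1}(x), \Lambda_i(x))$ that share the forward-pass gates, and build the norms from absolute-value and pairwise-max gadgets with $\max(a,b) = a + S(b-a, C(b-a))$. The only (immaterial) difference is your absolute-value gadget $|v| = S(v,c) - S(v, 1-c)$ with $c \in C(v)$, where the paper instead uses $|v| = S(v, C(v)) + S(-v, C(-v))$, avoiding the complemented gate; both are valid, and your flagged worry about set-valued $C$ at zero producing spurious elements is handled not here but by the convexity argument in the paper's Theorem 2 (the integral gate choices realize exactly the extreme points of $\nabla^\#f(x)$, which suffices for maximizing a norm).
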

This is easy to see for $f(x)$ by the recurrence in Equation \ref{eq:relunet-forward}; indeed this construction is used in the MIP-formulation for evaluating robustness of neural networks \cite{Lomuscio2017-ol, Fischetti2018-mi, Tjeng2017-qp, Dutta2017-mn, Cheng2017-xq, Xiao2018-aj}. For $\nabla^\#f$, one can define the recurrence:
\begin{equation}
    \begin{split}
        \nabla^\#f(x) = W_1^TY_1(x)
    \end{split}\quad\quad
    \begin{split}
        Y_i(x) = W_{i+1}^T \text{Diag}(\Lambda_i(x)) Y_{i+1}(x)
    \end{split}\quad\quad
    \begin{split}
        Y_{d+1}(x) = c
    \end{split}
\end{equation}
where $\Lambda_i(x)$ is the conditional operator applied to the input to the $i^{th}$ layer of $f$. Since $\Lambda_i(x)$ takes values in $\{0,1\}^*$, $\text{Diag}(\Lambda(x))Y_{i+1}(x)$ is equivalent to $S(Y_{i+1}(x), \Lambda_i(x))$. 

\begin{lemma}\label{lemma:acs-mip-encodable}
Let $g$ be a composition of affine, conditional and switch operators, where global lower and upper bounds are known for each input to each element of the composition. Then $g$ is a MIP-encodable function.
\end{lemma}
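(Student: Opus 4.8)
The plan is to argue by induction on the structure of the composition, reducing the lemma to two facts: (i) the class of MIP-encodable functions is closed under composition, and (ii) each of the three primitive operators is individually MIP-encodable given global bounds on its input. Granting these, any finite composition $g = g_k \circ \cdots \circ g_1$ of primitives is MIP-encodable, since feeding a mixed-integer polytope $M$ through $g_1$ yields a mixed-integer polytope $M_1$ by the primitive case, and iterating gives $g(M) = g_k(\cdots g_1(M)\cdots)$, again a mixed-integer polytope. Fact (i) is immediate from the definition: if $g_1(M)$ is a mixed-integer polytope and $g_2$ sends every mixed-integer polytope to a mixed-integer polytope, then so is $g_2(g_1(M))$. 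As in the affine example already worked out in the text, the convention throughout is that no variables are ever discarded — each operator lifts the ambient space by one fresh output coordinate and appends linear constraints — so that the encoding grows only additively in the number of primitives, which is also what the surrounding ``efficiently-encodable'' theorem needs. This bookkeeping also handles the set-valued nature of the conditional: ``image'' is interpreted as the projection of the augmented feasible set, and it suffices to keep that augmented set a mixed-integer polytope.

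It therefore remains to handle the three primitives, of which the affine case is already established. For the conditional operator $C$ with input $x$ obeying known global bounds $l \le x \le u$, I would introduce a fresh binary variable $a$ and append the two big-$M$ inequalities $x \le u\,a$ and $x \ge l\,(1-a)$. A case analysis on $a$ shows these cut out exactly the graph $\{(x,a) \mid a \in C(x)\}$: the assignment $a = 1$ is feasible precisely when $x \ge 0$ and $a = 0$ precisely when $x \le 0$, which reproduces Equation \ref{eq:def-conditional} including the overlap at $x = 0$. For the switch operator $S(x,a) = x\cdot a$ with $l \le x \le u$, I would introduce a fresh continuous output $y$ and append $l\,a \le y \le u\,a$ together with $x - u(1-a) \le y \le x - l(1-a)$; checking $a = 0$ (which forces $y = 0$) and $a = 1$ (which forces $y = x$) separately confirms that these linearize the product exactly. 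In each case the appended constraints are linear in the continuous and binary variables, so the lifted feasible set is again of the form $\{(x,a) \mid Ax + Ba \le c\}$, i.e.\ a mixed-integer polytope.

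The main obstacle — and the reason the hypothesis of known global bounds is not merely cosmetic — is the exactness of these big-$M$ encodings for the conditional and switch operators. Without finite $l,u$, neither the bilinear term $x\cdot a$ nor the discontinuous set-valued indicator admits a description by finitely many linear inequalities over $(x,a)\in\mathbb{R}\times\{0,1\}$; the bounds are precisely what convert this nonconvex behavior into a finite polyhedral one. I would therefore devote the bulk of the proof to verifying both inclusions of the two set equalities above — that every admissible input-output pair satisfies the constraints, and conversely that every feasible point of the constraint system is admissible — since this is exactly where correctness can silently fail if a bound is invalid or if the $x = 0$ boundary is mishandled. The remaining point, that each primitive invocation actually has the finite $l,u$ it requires, follows directly from the lemma's hypothesis that global lower and upper bounds are known for each input to each element of the composition.
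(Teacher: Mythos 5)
Your proposal is correct and takes essentially the same approach as the paper: reduce the lemma to the MIP-encodability of each primitive, then give an exact big-$M$ linear system for the graph of each operator on $[l,u]$ and verify both directions by case analysis on the binary variable (your switch constraints are literally the paper's Equation \ref{eq:switch-constraints}, and your explicit composition induction is left implicit in the paper's ``it suffices to show each primitive is MIP-encodable''). The only divergence is in detail: for the conditional you use the standard pair $x \le u\,a$, $x \ge l\,(1-a)$, which cleanly realizes the set-valued behavior at $x=0$, whereas the paper writes a slightly different pair of inequalities.
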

As we have seen, affine operators are trivially MIP-encodable. For the conditional and switch operators, global lower and upper bounds are necessary for MIP-encodability. Provided that our original set $\mathcal{X}$ is bounded, there exist several efficient schemes for propagating upper and lower bounds globally. Conditional and switch operators may be incorporated into the composition by adding only a constant number of new linear inequalities for each new variable. These constructions are described in full detail in Appendix \ref{app-lipmip-construction}.

\noindent \textbf{Formulating LipMIP:} 
To put all the above components together, we summarize our algorithm. Provided a bounded polytope $\mathcal{P}$, we first compute global lower and upper bounds to each conditional and switch operator in the composition that defines $||\nabla^\#f(\cdot)||_\alpha$ by propagating the bounds of $\mathcal{P}$. We then iteratively move components of the composition into the feasible set as in Equation \ref{eq:mip-composition} by lifting the dimension of the feasible set and incorporating new constraints and variables. This yields a valid mixed-integer program which can be optimized by off-the-shelf solvers to yield $L^\alpha(f,\mathcal{X})$ for either the $\ell_1$ or $\ell_\infty$ norms.

\noindent \textbf{Extensions:} 
While our results focus on evaluating the $\ell_1$ and $\ell_\infty$ Lipschitz constants of scalar-valued ReLU networks, we note that the above formulation is easily extensible to vector-valued networks over a variety of norms. We present this formulation, including an application to untargeted robustness verification through the use of a novel norm in Appendix \ref{app:extensions}.
We also note that any convex relaxation of our formulation will yield a provable upper bound to the local Lipschitz constant. Mixed-integer programming formulations have natural linear programming relaxations, by relaxing each integral constraint to a continuous constraint. We denote this linear programming relaxation as LipLP. Most off-the-shelf MIP solvers may also be stopped early, yielding valid upper bounds for the Lipschitz constant.

% 2. Related work 
 \section{Related Work} 
\paragraph{Related Theoretical Work:} The analytical results in section 2 are based on elementary analytical techniques, where the formulation of generalized Jacobians is famously attributed to Clarke \cite{clarke1975generalized}. The problems with automatic differentiation over nonsmooth functions have been noted several times before \cite{kakade2018provably, griewank2008evaluating, khan2013evaluating}. In particular, in \cite{kakade2018provably}, the authors provide a randomized algorithm to yield an element of the generalized Jacobian almost surely. We instead present a result where the standard chain rule will return the correct answer everywhere for almost every ReLU network. The hardness of Lipschitz estimation was first proven by \cite{Virmaux2018-ti}, and the only related inapproximability result is the hardness of approximating robustness to $\ell_1$-bounded adversaries in \cite{Weng2018-gr}. 

\paragraph{Connections to Robustness Certification:} We note the deep connection between certifying the robustness of neural networks and estimating the Lipschitz constant. Mixed-integer programming has been used to exactly certify the robustness of ReLU networks to adversarial attacks \cite{Lomuscio2017-ol, Fischetti2018-mi, Tjeng2017-qp, Dutta2017-mn, Cheng2017-xq, Xiao2018-aj}. Broadly speaking, the mixed-integer program formulated in each of these works is the same formulation we develop to emulate the forward-pass of a ReLU network. Our work may be viewed as an extension of these techniques where we emulate the forward and backward pass of a ReLU network with mixed-integer programming, instead of just the forward pass. We also note that the subroutine we use for bound propagation is exactly the formulation of FastLip \cite{Weng2018-gr}, which can be viewed as a form of reachability analysis, for which there is a deep body of work in the adversarial robustness setting \cite{SinghGagandeep2019-ki, tran2020verification, Zico_Kolter2017-va, singh2018fast}.

\paragraph{Lipschitz Estimation Techniques:} There are many recent works providing techniques to estimate the local Lipschitz constant of ReLU networks. These can be broadly categorized by the guarantees they provide and the class of neural networks and norms they apply to. Extant techniques may either provide lower bounds, heuristic estimates \cite{Virmaux2018-ti, Weng2018-lf}, or provable upper bounds \cite{Fazlyab2019-im, Weng2018-gr, lipopt} to the Lipschitz constant. These techniques may estimate $L^{\alpha}(f, \mathcal{X})$ for $||\cdot||_\alpha$ being an arbitrary $\ell_p$ norm \cite{Weng2018-gr, Virmaux2018-ti, Weng2018-lf}, or only the $\ell_2$ norm \cite{Fazlyab2019-im}. Several of these techniques provide only global Lipschitz estimates \cite{Fazlyab2019-im,  Virmaux2018-ti}, where others are applicable to both local and global estimates \cite{lipopt, Weng2018-gr, Weng2018-lf}. Finally, some techniques are applicable to neural networks with arbitrary nonlinearities \cite{Virmaux2018-ti, Weng2018-lf}, neural networks with only continuously differentiable nonlinearities \cite{lipopt}, or just ReLU nonlinearities \cite{Virmaux2018-ti, Weng2018-gr}. We compare the performance of our proposed algorithm against the performance of several of these techniques in the experimental section.

% 7. Experiments 
\section{Experiments} 
We have described an algorithm to exactly compute the Lipschitz constant of a ReLU network. We now demonstrate several applications where this technique has value. First we will compare the performance and accuracy of the techniques introduced in this paper to other Lipschitz estimation techniques. Then we will apply LipMIP to a variety of networks with different architectures and different training schemes to examine how these changes affect the Lipschitz constant. Full descriptions of the computing environment and experimental details are contained in Appendix \ref{app:experiments}. We have also included extra experiments analyzing random networks, how estimation changes during training, and an application to vector-valued networks in Appendix \ref{app:experiments}.

\noindent \textbf{Accuracy vs. Efficiency:} As is typical in approximation techniques, there is frequently a tradeoff between efficiency and accuracy. This is the case for Lipschitz estimation of neural nets. While ours is the first algorithm to provide quality guarantees about the returned estimate, it is worthwhile to examine how accurate the extant techniques for Lipschitz estimation are. We compare against the following estimation techniques: CLEVER \cite{Weng2018-lf}, FastLip \cite{Weng2018-gr}, LipSDP \cite{Fazlyab2019-im}, SeqLip \cite{Virmaux2018-ti} and our MIP formulation (LipMIP) and its LP-relaxation (LipLP). We also provide the accuracy of a random lower-bounding technique where we report the maximum gradient dual norm over a random selection of test points (RandomLB) and a naive upper-bounding strategy (NaiveUB) where we report the product of the operator norm of each affine layer and scale by $\sqrt{d}$ due to equivalence of norms. In Table \ref{table:synth-mnist},  we demonstrate the runtime and relative error of each considered technique. We evaluate each technique over the unit hypercube across random networks, networks trained on synthetic datasets, and networks trained to distinguish between MNIST 1's and 7's.

\noindent \textbf{Effect of Training On Lipschitz Constant:}
As other techniques do not provide reliable estimates of the Lipschitz constant, we argue that these are insufficient for making broad statements about how the parameters or training scheme of a neural network affect the Lipschitz constant. In Figure \ref{fig:estimator_during_training} (left), we compare the returned estimate from a variety of techniques as a network undergoes training on a synthetic dataset. Notice how the estimates decrease in quality as training proceeds. On the other hand, in Figure \ref{fig:estimator_during_training} (right), we use LipMIP to provide reliable insights as to how the Lipschitz constant changes as a neural network is trained on a synthetic dataset under various training schemes. A similar experiment where we vary network architecture is presented in Appendix \ref{app:experiments}.

\begin{table*}[t]
\centering

\begin{tabular}{@{}rll|r|l|r@{}}
\multicolumn{1}{c}{\textbf{}}            &                                         & \multicolumn{2}{|c|}{\textbf{Binary MNIST}}                                          & \multicolumn{2}{c}{\textbf{Synthetic Dataset}}                                      \\ \midrule\midrule
\multicolumn{1}{r|}{\textbf{Method}}     & \multicolumn{1}{l|}{\textbf{Guarantee}} & \multicolumn{1}{c|}{\textbf{Time (s)}} & \multicolumn{1}{c|}{\textbf{Rel. Err.}} & \multicolumn{1}{c|}{\textbf{Time (s)}} & \multicolumn{1}{c}{\textbf{Rel. Err.}} \\ \midrule
\multicolumn{1}{r|}{\textbf{RandomLB}}   & \multicolumn{1}{l|}{Lower}              & $     0.334 \pm      0.019$            & $    -41.96\%$                               & $     0.297 \pm      0.004$            & $    -32.68\%$                              \\ \midrule
\multicolumn{1}{r|}{\textbf{CLEVER}}     & \multicolumn{1}{l|}{Heuristic}          & $     20.574 \pm      4.320$            & $    -36.97\%$                               & $     1.849 \pm      0.054$            & $+28.45\%$                                  \\ \midrule
\multicolumn{1}{r|}{\textbf{LipMIP}} & \multicolumn{1}{l|}{Exact}              & $   69.187 \pm   70.114$            & {$     \mathbf{0.00\%}$}                      & $    38.844 \pm     34.906$            & $ \mathbf{0.00\%}$                          \\ \midrule
\multicolumn{1}{r|}{\textbf{LipLP}}      & \multicolumn{1}{l|}{Upper}              & $     0.226 \pm      0.023$            & $   + 39.39\%$                              & $     0.030 \pm      0.002$            & $    +362.43\%$                             \\ \midrule
\multicolumn{1}{r|}{\textbf{FastLip}}    & \multicolumn{1}{l|}{Upper}              & $     0.002 \pm      0.000$            & $    +63.41\%$                              & $     0.001 \pm      0.000$            & $    +388.14\%$                             \\ \midrule
\multicolumn{1}{r|}{\textbf{LipSDP}}     & \multicolumn{1}{l|}{Upper}              & $     20.570 \pm      2.753$            & $     +113.92\%$                              & $     2.704 \pm      0.019$            & $     +39.07\%$                             \\ \midrule
\multicolumn{1}{r|}{\textbf{SeqLip}}     & \multicolumn{1}{l|}{Heuristic}          & $     0.022 \pm      0.005$            & $     +119.53\%$                              & $     0.016 \pm      0.002$            & $    + 98.98\%$                             \\ \midrule
\multicolumn{1}{r|}{\textbf{NaiveUB}}    & \multicolumn{1}{l|}{Upper}              & $     0.000 \pm      0.000$            & $   +212.68\%$                              & $     0.000 \pm      0.000$            & $    +996.96\%$                             \\ \bottomrule
\end{tabular}

\caption{\label{table:synth-mnist}
\small{Lipschitz Estimation techniques applied to networks of size [784, 20, 20, 20, 2] trained to distinguish MNIST 1's from 7's evaluated over $\ell_\infty$ balls of size 0.1, and networks of size [10, 20, 30, 20, 2] trained on synthetic datasets evaluated over the unit hypercube. Our method is the slowest, but provides a provably exact answer. This allows us to reliably gauge the accuracy and efficiency of the other techniques.
}
}
\vspace{-1.5em}
\end{table*}

\begin{figure}[ht]
    \centering
    \includegraphics[width=0.48\textwidth]{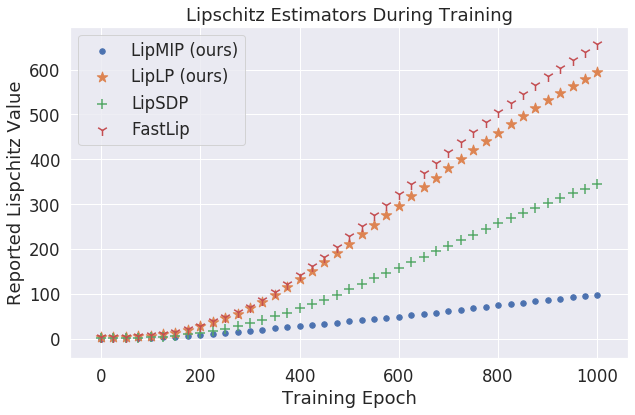} 
        \includegraphics[width=0.48\textwidth]{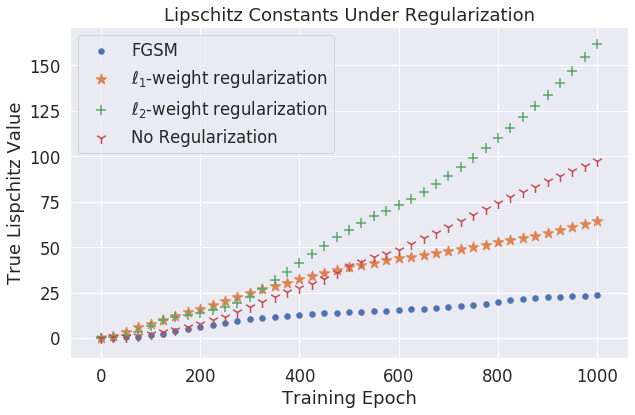}

    \caption{ \label{fig:estimator_during_training}
    \small{(Left): We plot how the bounds provided by Lipschitz estimators as we train a neural network on a synthetic dataset. We notice that as training proceeds, the absolute error of estimation techniques increases relative to the true Lipschitz constant computed with our method (blue dots). (Right): We plot the effect of regularization on Lipschitz constants during training. We fix a dataset and network architecture and train with different regularization methods and evaluate the Lipschitz constant with LipMIP. Observe that these values increase as training proceeds. Surprisingly, $\ell_2$ weight regularization increases the Lipschitz constant even compared to the no regularization baseline. Adversarial training (FGSM) is the most effective Lipschitz regularizer we found in this experiment.}}
    \vspace{-1.5em}

\end{figure}

% 8. Conclusion 
\section{Conclusion and Future Work}

We framed the problem of local Lipschitz computation of a ReLU network as an optimization over generalized Jacobians, yielding an analytical result that holds for all Lipschitz continuous vector-valued functions. We further related this to an optimization over the elements returnable by the chain rule and demonstrated that even approximately solving this optimization problem is hard. We propose a technique to exactly compute this value using mixed integer programming solvers. Our exact method takes exponential time in the worst case but admits natural LP relaxations that trade-off accuracy for efficiency. We use our algorithm to evaluate other Lipschitz estimation techniques and evaluate how the Lipschitz constant changes as a network undergoes training or changes in architecture. 

There are many interesting future directions. We have only started to explore relaxation 
approaches based on LipMIP and a polynomial time method that scales to large networks may be possible. 
The reliability of an exact Lipschitz evaluation technique may also prove useful in developing both new empirical insights and mathematical conjectures.

\begin{ack}
This research has been supported by NSF Grants CCF 1763702,1934932, AF 1901292,
2008710, 2019844 research gifts by Western Digital, WNCG IAP, computing resources from TACC and the Archie Straiton Fellowship.
\end{ack}
\section{Broader Impact}

As deep learning begins to see use in situations where safety or fairness are critical, it is increasingly important to have tools to audit and understand these models. The Lipschitz computation technique we have outlined in this work is one of these tools. As we have discussed, an upper bound on the Lipschitz constant of a model may be used to efficiently generate certificates of robustness against adversarial attacks. Lipschitz estimates have the advantage over other robustness certificates in that they may be used to make robustness claims about large subsets of the input space, rather than certifying that a particular input is robust against adversarial attacks. Lipschitz estimation, if comuputable with respect to fair metrics, may be utilized to generate certificates of individual fairness (see \cite{dwork2011fairness, yurochkin2020training} for examples of this formulation of fair metrics and individual fairness). Our approach is the first to provide a scheme for Lipschitz estimation with respect to arbitrary norms, which may include these fair metrics. 

Exact verification of neural networks has the added benefit that we are guaranteed to be generate the correct answer and not just a sound approximation. We argue that a fundamental understanding of the behavior of these models needs to be derived from both theoretical results and accurate empirical validation. As we have demonstrated, our technique is able to provide accurate measurements of the Lipschitz constant of small-scale neural networks. The computational complexity of the problem suggests that such accurate measurements are not tractably attainable for networks with millions of hyperparameters. Our experiments demonstrate that our technique is scalable to networks large enough that insights may be drawn, such as claims about how regularized training affects the Lipschitz constant. Further, exact verification techniques may be used as benchmarks to verify the accuracy of the more efficient verification techniques. Future Lipschitz estimation techniques, assuming that they do not provide provable guarantees, will need to assert the accuracy of their reported answers: it is our hope that this will be empirically done by comparisons against exact verification techniques, where the accuracy claims may then be extrapolated to larger networks.

\bibliography{main}
\newpage
\appendix 
\addcontentsline{toc}{section}{Appendix} % Add the appendix text to the document TOC
\part{Appendix} % Start the appendix part
\parttoc % Insert the appendix TOC
% Section 1: Proofs for analysis sections 
\section{Analytical proofs}

We first start with formal definitions and known facts. We present our results in general for vector-valued functions, but we will make remarks about the implications for scalar-valued networks along the way.
\subsection{Definitions and Preliminaries}
\subsubsection{Norms}
As we will be frequently referring to arbitrary norms, we recall the formal definition:
\begin{definition}
A \textbf{norm} $||\cdot||$ over vector space $V$ is a nonnegative valued function that meets the following three properties:
\begin{itemize}
    \item \textbf{Triangle Inequality:} For all $x,y\in V$, $||x +y || \leq ||x|| + ||y||$
    \item \textbf{Absolute Homogeneity:} For all $x\in V$, and any field element $a$, $||ax||=|a|\cdot||x||$.
    \item \textbf{Point Separation:} If $||x||=0$, then $x=0$, the zero vector of $V$.
\end{itemize}
\end{definition}
The most common norms are the $\ell_p$ norms over $\mathbb{R}^d$, with $||x||_p:=\left(\sum_{i}|x_i|^p\right)^{1/p}$, though these are certainly not all possible norms over $\mathbb{R}^d$. We can also describe norms over matrices. One such norm that we frequently discuss is a norm over matrices in $\mathbb{R}^{m\times n}$ and is induced by norms over $\mathbb{R}^d$ and $\mathbb{R}^m$:
\begin{definition}
Given norm $||\cdot||_\alpha$ over $\mathbb{R}^d$, and norm $||\cdot||_\beta$ over $\mathbb{R}^m$, the matrix norm $||\cdot||_{\alpha, \beta}$ over $\mathbb{R}^{m\times n}$ is defined as 
\begin{equation}
    ||A||_{\alpha,\beta} := \sup\limits_{||x||_\alpha\leq 1}||Ax||_\beta = \sup_{x\neq 0}\frac{||Ax||_\beta}{||x||_\alpha}
\end{equation}
\end{definition}
A convenient way to keep the notation straight is that $A$, above, can be viewed as a linear operator which maps elements from a space which has norm $||\cdot||_\alpha$ to a space which has norm $||\cdot||_\beta$, and hence is equipped with the norm $||A||_{\alpha, \beta}$. 
As long as $||\cdot||_\alpha,||\cdot||_\beta$ are norms, then $||\cdot ||_{\alpha,\beta}$ is a norm as well in that the three properties listed above are satisfied. 

Every norm induces a dual norm, defined as 
\begin{equation}
    ||x||_* := \sup_{||y||\leq 1} |\langle x, y \rangle|
\end{equation}
Where the $\langle \cdot, \cdot \rangle$ is the standard inner product for vectors over $\mathbb{R}^d$ or matrices $\mathbb{R}^{m\times d}$. We note that if matrix $A$ is a row-vector, then $||A||_{\alpha, |\cdot|}=||A||_{\alpha^*}$ by definition.

We also have versions of Holder's inequality for arbitrary norms over $\mathbb{R}^d$:
\begin{proposition}\label{prop:holder-prop}
Let $||\cdot||_\alpha$ be a norm over $\mathbb{R}^d$, with dual norm $||\cdot||_{\alpha^*}$. Then, for all $x, y\in\mathbb{R}^d$
\begin{equation}
    x^Ty \leq ||x||_\alpha \cdot ||y||_{\alpha^*}
\end{equation}
\end{proposition}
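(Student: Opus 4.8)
The plan is to reduce the inequality directly to the definition of the dual norm by a single normalization argument. First I would dispose of the degenerate case: if $x = 0$, then the point-separation axiom gives $||x||_\alpha = 0$, the left-hand side $x^Ty$ is zero, and the right-hand side is $0\cdot||y||_{\alpha^*}=0$, so the bound holds with equality. Hence I may assume $x\neq 0$, which by point separation forces $||x||_\alpha > 0$ and lets me divide by it.

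For the main case, the key step is to feed the normalized vector into the supremum that defines the dual norm. I would set $z := x/||x||_\alpha$ and observe, by absolute homogeneity, that $||z||_\alpha = ||x||_\alpha/||x||_\alpha = 1$, so $z$ is feasible for the supremum $||y||_{\alpha^*} = \sup_{||w||_\alpha\leq 1} w^Ty$. Feasibility of $z$ therefore yields $z^Ty \leq ||y||_{\alpha^*}$, that is, $\frac{x^Ty}{||x||_\alpha}\leq ||y||_{\alpha^*}$, and multiplying through by the positive scalar $||x||_\alpha$ gives exactly $x^Ty \leq ||x||_\alpha\cdot||y||_{\alpha^*}$, as claimed.

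The only point requiring a moment of care is the consistency of the two forms of the dual-norm definition that appear in the excerpt: the one written with $|\langle x,y\rangle|$ and the one without the absolute value. I would note that these agree because the unit ball $\{w : ||w||_\alpha \leq 1\}$ is symmetric under $w\mapsto -w$ (again by absolute homogeneity), so $\sup_{||w||_\alpha\leq 1} w^Ty = \sup_{||w||_\alpha\leq 1}|w^Ty|$; this also confirms that the feasible set is nonempty (it contains $0$) and that the supremum is well-defined and nonnegative. Beyond this bookkeeping there is no substantive obstacle: the proposition is essentially a restatement of the defining supremum, and the entire content of the argument is the normalization that converts the constrained supremum over the unit ball into a bound valid for an arbitrary $x$.
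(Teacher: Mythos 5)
Your proof is correct and follows essentially the same route as the paper's: normalize $x$ to a unit vector and bound $z^Ty$ by the defining supremum of $\|y\|_{\alpha^*}$. Your treatment is in fact slightly more careful than the paper's one-line argument, since you explicitly handle the $x=0$ case and reconcile the absolute-value and non-absolute-value forms of the dual-norm definition via symmetry of the unit ball, both of which the paper glosses over with a WLOG.
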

\begin{proof}
Indeed, assuming WLOG that neither $x$ nor $y$ are zero, and letting $u=\frac{x}{||x||_\alpha}$, we have 
\begin{equation}
    x^Ty = ||x||_\alpha \cdot u^Ty \leq ||x||_\alpha \cdot  \sup_{||u||_\alpha \leq 1} u^Ty = ||x||_\alpha \cdot ||y||_{\alpha^*}
\end{equation}
\end{proof}
We can make a similar claim about the matrix norms defined above, $||\cdot||_{\alpha, \beta}$:
\begin{proposition}\label{prop:matrix-prop}
Letting $||\cdot||_{\alpha, \beta}$ be a matrix norm induced by norms $||\cdot||_\alpha$ over $\mathbb{R}^d$, and $||\cdot||_\beta$ over $\mathbb{R}^m$, for any $A\in \mathbb{R}^{m\times n}$, $x\in\mathbb{R}^d$:
\begin{equation}
    ||Ax||_\beta \leq ||A||_{\alpha, \beta} ||x||_\alpha
\end{equation}
\end{proposition}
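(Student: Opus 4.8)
The plan is to mirror exactly the normalization argument used to establish Proposition~\ref{prop:holder-prop} (the Hölder-type inequality), since the induced matrix norm $||\cdot||_{\alpha,\beta}$ is itself defined as a supremum of $||Av||_\beta$ over the $||\cdot||_\alpha$-unit ball. The key observation is that the defining supremum immediately bounds $||Au||_\beta$ for any unit vector $u$, and absolute homogeneity lets us rescale an arbitrary $x$ to a unit vector without losing track of the factor $||x||_\alpha$.

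Concretely, I would first dispense with the degenerate case $x=0$: then $Ax=0$, so $||Ax||_\beta=0=||A||_{\alpha,\beta}\cdot 0$ by point separation, and the inequality holds with equality. Assuming henceforth $x\neq 0$, I set $u:=\frac{x}{||x||_\alpha}$, which has $||u||_\alpha=1$ by absolute homogeneity of $||\cdot||_\alpha$. Since $u$ lies in the feasible set of the supremum defining $||A||_{\alpha,\beta}$, we have
\begin{equation}
    ||Au||_\beta \leq \sup_{||v||_\alpha\leq 1}||Av||_\beta = ||A||_{\alpha,\beta}.
\end{equation}

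Next I would unwind the left-hand side using linearity of the map $v\mapsto Av$ together with absolute homogeneity of $||\cdot||_\beta$: because $A\left(\frac{x}{||x||_\alpha}\right)=\frac{1}{||x||_\alpha}Ax$ and $||x||_\alpha>0$ is a scalar, we get $||Au||_\beta=\frac{||Ax||_\beta}{||x||_\alpha}$. Substituting this into the displayed bound yields $\frac{||Ax||_\beta}{||x||_\alpha}\leq ||A||_{\alpha,\beta}$, and multiplying through by the positive quantity $||x||_\alpha$ gives the claim $||Ax||_\beta\leq ||A||_{\alpha,\beta}||x||_\alpha$.

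I do not anticipate a genuine obstacle here, as this is a direct structural analogue of the preceding Hölder proof; the only points requiring care are handling $x=0$ separately (so that division by $||x||_\alpha$ is justified) and invoking absolute homogeneity of $||\cdot||_\beta$ to pull the scalar $1/||x||_\alpha$ outside the norm. One could also note the minor bookkeeping point that for the matrix norm to be well-defined one reads $A$ as an operator from the $||\cdot||_\alpha$-space to the $||\cdot||_\beta$-space, i.e. with $n=d$, but this does not affect the argument.
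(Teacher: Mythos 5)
Your proof is correct and follows essentially the same normalization argument as the paper: rescale $x$ to a unit vector, bound $||Au||_\beta$ by the defining supremum, and multiply back by $||x||_\alpha$ using absolute homogeneity. Your explicit treatment of the $x=0$ case is a small bonus in rigor over the paper's ``WLOG $x$ nonzero,'' but the substance is identical.
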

\begin{proof}
Indeed, assuming WLOG that $x$ is nonzero, letting $y=x/||x||_\alpha$ such that $||y||_\alpha = 1$, we have \begin{equation}
    ||Ax||_{\beta}=||x||_\alpha ||Ay||_\beta\leq \sup_{||y||_\alpha \leq 1} ||x||_\alpha ||Ay||_\beta=||x||_\alpha ||A||_{\alpha,\beta}
\end{equation}
\end{proof}

\subsubsection{Lipschitz Continuity and Differentiability}

When $f:\mathbb{R}^d\rightarrow\mathbb{R}^m$ is a vector-valued over some open set $\mathcal{X}\subseteq\mathbb{R}^d$ we say that it is $(\alpha,\beta)$-Lipschitz continuous if there exists a constant $L$ for norms $||\cdot||_\alpha$, $||\cdot||_\beta$ such that all $x,y\in\mathcal{X}$, 
\begin{equation}
    ||f(x)-f(y)||_\beta \leq L \cdot ||x-y||_\alpha
\end{equation}
Then the Lipschitz constant, $L^{(\alpha,\beta)}(f,\mathcal{X})$, is the infimum over all such $L$. Equivalently, one can define $L^{\alpha, \beta}(f, \mathcal{X})$ as
\begin{equation}
    L^{(\alpha,\beta)}(f, \mathcal{X})= \sup\limits_{x,y\in\mathcal{X},; x\neq y} \frac{||f(x)-f(y)||_\beta}{||x-y||_\alpha}
\end{equation}

We say that $f$ is differentiable at $x$ if there exists some linear operator $\nabla f(x)\in\mathbb{R}^{n\times m}$ such that 
\begin{equation}
\lim_{h\rightarrow 0} \frac{f(x+h)-f(x)-\nabla f(x)^Th}{||h||} =0
\end{equation}
A linear operator such that the above equation holds is defined as the Jacobian \footnote{We typically write the Jacobian of a function $f:\mathbb{R}^d\rightarrow \mathbb{R}^m$ as $\nabla f(x)^T\in\mathbb{R}^{m\times n}$. This is because we like to think of the Jacobian of a scalar-valued function, referred to as the gradient and denoted as $\nabla f(x)$, as a vector/column-vector}

The directional derivative of $f$ along direction $v\in\mathbb{R}^d$ is defined as 
\begin{equation}
    d_v f(x):= \lim_{t\rightarrow 0} \frac{f(x+tv)-f(x)}{t}
\end{equation}
Where we note that we are taking limits of a vector-valued function. We now add the following known facts:
\begin{itemize}
    \item If $f$ is lipschitz continuous, then it is absolutely continuous.
    \item If $f$ is differentiable at $x$, all directional derivatives exist at $x$. The converse is not true, however. 
    \item If $f$ is differentiable at $x$, then for any vector $v$, $d_v f(x)=\nabla f(x)^T v$.
    \item \textbf{(Rademacher's Theorem):} If $f$ is Lipschitz continuous, then $f$ is differentiable everywhere except for a set of measure zero, under the standard Lebesgue measure in $\mathbb{R}^d$  \cite{Heinonen2005-rp}.
\end{itemize}

Finally we introduce some notational shorthand. Letting $f:\mathbb{R}^d\rightarrow \mathbb{R}^m$, be Lipschitz continuous and defined over an open set $\mathcal{X}$, we denote $\text{Diff}(\mathcal{X})$ refer to the differentiable subset of $\mathcal{X}$. Let $\mathcal{D}$ be the set of $(x,v)\in\mathbb{R}^{2n}$ for which $d_v f(x)$ exists and $x\in\mathcal{X}$.  Additionally, let $\mathcal{D}_v$ be the set $\mathcal{D}_v = \{x\;|\; (x,v)\in\mathcal{D}\}$.

\subsection{Proof of Theorem 1}
Now we can state our first lemma, which claims that for any norm, the maximal directional derivative is attained at a differentiable point of $f$:
\begin{lemma}\label{lemma:app-analysis-lemma}
For any $(\alpha, \beta)$ Lipschitz continuous function $f$, norm $||\cdot||_\beta$ over $\mathbb{R}^m$, any $v\in\mathbb{R}^d$, letting $\mathcal{D}_v:=\{x\;|\; (x,v)\in \mathcal{D}\}$, we have: \begin{equation}
    \sup_{x \in\mathcal{D}_v} ||d_vf(x)||_\beta \leq  \sup_{x\in\text{\emph{Diff}}(\mathcal{X})} ||\nabla f(x)^T v||_\beta 
\end{equation}
\end{lemma}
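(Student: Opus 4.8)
The plan is to fix an arbitrary $x \in \mathcal{D}_v$ and show that $\|d_v f(x)\|_\beta \le M$, where I abbreviate $M := \sup_{x' \in \text{Diff}(\mathcal{X})} \|\nabla f(x')^T v\|_\beta$; taking the supremum over $x \in \mathcal{D}_v$ then gives the claim. The difficulty I anticipate is that the segment $\{x + tv\}$ may lie entirely inside the measure-zero non-differentiable set guaranteed by Rademacher's Theorem, so I cannot simply integrate $\nabla f(\cdot)^T v$ along it. I would get around this by integrating along nearby parallel lines that do meet $\text{Diff}(\mathcal{X})$ almost everywhere and then passing to the limit using continuity of $f$.

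Concretely, assume $v \neq 0$ (the case $v = 0$ is trivial, as both sides vanish) and let $N \subseteq \mathcal{X}$ be the non-differentiable set, which has Lebesgue measure zero by Rademacher's Theorem. Decomposing $\mathbb{R}^d = H \oplus \mathbb{R}v$ for a hyperplane $H$ transverse to $v$ and applying Fubini's theorem to $N$, I obtain that for almost every $w \in H$ the line $s \mapsto w + sv$ meets $N$ in a set of one-dimensional measure zero; call such $w$ \emph{good}, and note the good $w$ are dense in $H$. For a good $w$, the restriction $g_w(s) := f(w + sv)$ is Lipschitz in $s$, hence absolutely continuous, so the fundamental theorem of calculus gives $g_w(b) - g_w(a) = \int_a^b g_w'(s)\,ds$ for any $a < b$; and at the (almost every) values of $s$ for which $f$ is differentiable at $w + sv$, the known identity $g_w'(s) = d_v f(w + sv) = \nabla f(w + sv)^T v$ applies. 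Taking $\|\cdot\|_\beta$ and using the integral triangle inequality together with the pointwise bound $\|\nabla f(w + sv)^T v\|_\beta \le M$ (valid for a.e. $s$, since a.e. such point lies in $\text{Diff}(\mathcal{X})$), I conclude $\|f(w + bv) - f(w + av)\|_\beta \le (b - a)M$.

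To transfer this bound back to $x$, I would write $x = w_0 + s_0 v$, choose a sequence of good points $w_k \to w_0$ in $H$, and set $x_k := w_k + s_0 v \to x$. For a fixed small $t > 0$ the estimate above with $a = s_0$, $b = s_0 + t$ yields $\|f(x_k + tv) - f(x_k)\|_\beta \le tM$; letting $k \to \infty$ and invoking continuity of $f$ and of the norm gives $\|f(x + tv) - f(x)\|_\beta \le tM$, i.e. $\|(f(x + tv) - f(x))/t\|_\beta \le M$. Finally, letting $t \to 0$ and using that $x \in \mathcal{D}_v$ (so the defining limit of $d_v f(x)$ exists) produces $\|d_v f(x)\|_\beta \le M$, as required. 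The main obstacle is exactly the possibility that the line through $x$ in direction $v$ is entirely non-differentiable; the Fubini-plus-density argument feeding into a continuity limit is what resolves it. I would also take care that the fundamental theorem of calculus and the integral triangle inequality are being applied to vector-valued ($\mathbb{R}^m$) functions, which is legitimate since both hold componentwise for absolutely continuous functions.
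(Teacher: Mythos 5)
Your proof is correct, but it takes a genuinely different route from the paper's. The paper argues pointwise: for a fixed $x\in\mathcal{D}_v$ near the supremum it approximates $d_v f(x)$ by a finite difference quotient, uses the (absolute) continuity implied by Lipschitzness to replace $x$ by a nearby differentiable point $x'$ furnished by Rademacher's theorem while perturbing the quotient by at most a controlled $\epsilon$, and then shrinks the step so that the quotient at $x'$ converges to $\nabla f(x')^T v$; this exhibits, for every $\epsilon>0$, a differentiable point whose gradient applied to $v$ is within $\epsilon$ of $\sup_{x\in\mathcal{D}_v}\|d_v f(x)\|_\beta$. You instead first establish the uniform segment estimate $\|f(y+tv)-f(y)\|_\beta\le tM$ via Rademacher plus Fubini (almost every line parallel to $v$ meets the nondifferentiability null set in one-dimensional measure zero) and the fundamental theorem of calculus for the absolutely continuous restriction $g_w$, then transfer it to an arbitrary $x\in\mathcal{D}_v$ by continuity along good parallel lines, and only at the end differentiate using the existence of $d_v f(x)$. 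Your route buys a cleaner separation of the two limiting processes: in the paper's argument the perturbation radius and the step $t$ are entangled (the differentiable point depends on $t$, and one must still argue its quotient converges to its gradient), whereas your segment bound holds for all small $t$ simultaneously, so the final limit $t\to0^+$ is immediate; it also needs only the one-sided limit. Two small points to tidy, neither a gap: justify that the relevant segments lie in the open set $\mathcal{X}$ (take $t\|v\|$ smaller than the radius of a ball about $x$ contained in $\mathcal{X}$, so the nearby parallel segments $[x_k, x_k+tv]$ also lie in $\mathcal{X}$ and $g_{w_k}$ is Lipschitz there), and note that the integral triangle inequality $\bigl\|\int g\bigr\|_\beta\le\int\|g\|_\beta$ for an arbitrary norm is not purely componentwise but follows by dualizing with $\sup_{\|c\|_{\beta^*}\le1}c^T(\cdot)$ or by Riemann-sum approximation.
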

\paragraph{Remark:} For scalar-valued functions and norm $||\cdot||_\alpha$ over $\mathbb{R}^d$, one can equivalently state that for all vectors $v$ with $||v||_\alpha=1$:
\begin{equation}\label{eq:lem1-conclusion}
    \sup_{x\in \mathcal{D}_v}|d_vf(x)| \leq \sup_{x\in\text{Diff}(\mathcal{X})} ||\nabla f(x)||_{\alpha^*} 
\end{equation}
\begin{proof}
Essentially the plan is to say each of the following quantities are within $\epsilon$ of each other: $||d_vf(x)||_\beta$, the limit definition of $||d_vf(x)||_\beta$, the limit definition of $||d_vf(x')||_\beta$ for nearby differentiable $x'$, and the norm of the gradient at $x'$ applied to the direction $v$.

We fix an arbitrary $v\in\mathbb{R}^d$. It suffices to show that for every $\epsilon>0$, there exists some differentiable $x' \in \text{Diff}(\mathcal{X})$ such that $||\nabla f(x')^T \cdot v ||\geq \sup_{x\in\mathcal{D}_v} ||d_vf(x)||-\epsilon$.

By the definition of $\sup$, for every $\epsilon>0$, there exists an $x\in\mathcal{D}_v$ such that 
\begin{equation}\label{eq:shell-game0}
||d_vf(x)||_\beta\geq \sup_{y\in \mathcal{D}_v} ||\delta_f(y)||_\beta -\epsilon/4
\end{equation}
 Then for all $\epsilon> 0$, by the limit definition of $d_v f(x)$ there exists a $\delta>0$ such that for all $t$ with $|t|<\delta$
\begin{equation}\label{eq:shell-game1}
    \Big|\Big|d_vf(x)- \Big(\frac{f(x+tv)-f(x)}{||tv||_\alpha}\Big)\Big|\Big|_\beta \leq \epsilon /4
\end{equation}
Next we note that, since lipschitz continuity implies absolute continuity of $f$, and $t$ is now a fixed constant, the function $h(x):=\frac{f(x)}{||tv||_\alpha}$ is absolutely continuous. Hence there exists some $\delta'$ such that for all $y\in\mathcal{X}$, $z$ with $||z||_\alpha \leq \delta'$ 
\begin{equation}
\frac{||f(y+z)-f(y)||_{\beta}}{||tv||_\alpha} = ||h(y+z)-h(y)||_\beta \leq \epsilon/4
\end{equation}
Hence, by Rademacher's theorem, there exists some differentiable $x'$ within a $\delta'$-neighborhood of $x$, such that both $\frac{||f(x')-f(x)||_\beta}{||tv||_\alpha}<\epsilon/4$ and $\frac{||f(x'+tv)-f(x+tv)||_\beta}{||tv||_\alpha}<\epsilon/4$, hence by the triangle inequality for $||\cdot||_\beta$
\begin{align}\label{eq:shell-game2}
    \Big|\Big|h(x+tv)-h(x)\Big|\Big||_\beta &\leq \Big|\Big|h(x+tv)-h(x'+tv)\Big|\Big|_\beta+ \Big|\Big|h(x)-h(x')\Big|\Big|_\beta + \Big|\Big|h(x'+tv)-h(x')\Big|\Big|_\beta\\
    &\leq \epsilon/2 + \Big|\Big|h(x'+tv)-h(x')\Big|\Big|_\beta \notag \\
    &= \epsilon/2 +  \frac{||f(x'+tv)-f(x')||_\beta}{||tv||_\alpha} \notag
\end{align}
Combining equations \ref{eq:shell-game1} and \ref{eq:shell-game2} we have that 
\begin{equation}\label{eq:shell-game3}
    ||d_vf(x)||_\beta \leq 3\epsilon/4 + \frac{||f(x'+tv)-f(x')||_\beta}{||tv||_\alpha}
\end{equation}
Taking limits over $\delta\rightarrow 0$, we get that the final term in equation $\ref{eq:shell-game3}$ becomes $3\epsilon/4+||d_vf(x')||_\beta$, which is equivalent to $3\epsilon/4+||\nabla f(x)^T v||_\beta$. Hence we have that 
\begin{equation}
    ||\nabla f(x')^T\cdot v||_\beta  \geq \sup_{x\in\mathcal{D}_v}||d_vf(x)||_\beta -\epsilon
\end{equation} as desired, as our choice of $v$ was arbitrary. 

\end{proof}

Now we can restate and prove our main theorem. 
\begin{theorem}\label{thm:main-app}
Let $||\cdot||_\alpha$, $||\cdot||_\beta$ be arbitrary norms over $\mathbb{R}^d, \mathbb{R}^m$, and let $f:\mathbb{R}^d\rightarrow \mathbb{R}^m$ be locally $(\alpha,\beta)$-Lipschitz continuous over an open set $\mathcal{X}$. The following equality holds:
\begin{equation}
    L^{(\alpha, \beta)}(f,\mathcal{X}) =  \sup_{G\in \delta_f(\mathcal{X})}||G^T||_{\alpha, \beta}
\end{equation}
\end{theorem}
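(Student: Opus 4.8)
The plan is to prove the equality by establishing the two matching inequalities, after unwinding the definitions $||G^T||_{\alpha,\beta}=\sup_{||v||_\alpha\le 1}||G^T v||_\beta$ and $L^{(\alpha,\beta)}(f,\mathcal{X})=\sup_{x\ne y}||f(y)-f(x)||_\beta/||x-y||_\alpha$. Throughout I would use that at a differentiable point $x$ the generalized Jacobian collapses to $\delta_f(x)=\{\nabla f(x)\}$, so $\nabla f(x)\in\delta_f(\mathcal{X})$, and that $M\mapsto ||M^T||_{\alpha,\beta}$ is a genuine (hence continuous and convex) norm on matrices.

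For the inequality $\sup_{G\in\delta_f(\mathcal{X})}||G^T||_{\alpha,\beta}\le L^{(\alpha,\beta)}(f,\mathcal{X})$, I would first bound the Jacobian at differentiable points. Fixing a differentiable $x$ and a direction $v$ with $||v||_\alpha\le 1$, openness of $\mathcal{X}$ gives $x+tv\in\mathcal{X}$ for small $t$, and
\begin{equation}
||\nabla f(x)^T v||_\beta=\lim_{t\to 0}\frac{||f(x+tv)-f(x)||_\beta}{|t|}\le L^{(\alpha,\beta)}(f,\mathcal{X})\,||v||_\alpha\le L^{(\alpha,\beta)}(f,\mathcal{X}),
\end{equation}
so $||\nabla f(x)^T||_{\alpha,\beta}\le L^{(\alpha,\beta)}(f,\mathcal{X})$. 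To pass to an arbitrary $G\in\delta_f(x)$, I would invoke the definition of the generalized Jacobian: such $G$ is a convex combination of limits $\lim_i\nabla f(x_i)$ of Jacobians at differentiable $x_i$. Continuity of the matrix norm carries the bound through the limits, and convexity (the triangle inequality) carries it through convex combinations, giving $||G^T||_{\alpha,\beta}\le L^{(\alpha,\beta)}(f,\mathcal{X})$ for every $G\in\delta_f(\mathcal{X})$. This direction uses only openness of $\mathcal{X}$.

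For the reverse inequality, I would fix $x\ne y$, set $v=y-x$, and study $\phi(t):=f(x+tv)$ on $[0,1]$. As a composition of the Lipschitz $f$ with an affine map, $\phi$ is absolutely continuous, so by the vector-valued fundamental theorem of calculus
\begin{equation}
f(y)-f(x)=\phi(1)-\phi(0)=\int_0^1 \phi'(t)\,dt=\int_0^1 d_v f(x+tv)\,dt,
\end{equation}
where $\phi'(t)=d_v f(x+tv)$ exists for almost every $t$ and the corresponding points lie in $\mathcal{D}_v$. Pulling $||\cdot||_\beta$ inside the integral by the triangle inequality, then applying Lemma \ref{lemma:app-analysis-lemma} followed by Proposition \ref{prop:matrix-prop}, I would bound the integrand for almost every $t$ by
\begin{equation}
||d_v f(x+tv)||_\beta\le\sup_{x'\in\mathcal{D}_v}||d_v f(x')||_\beta\le\sup_{x'\in\text{Diff}(\mathcal{X})}||\nabla f(x')^T v||_\beta\le\Big(\sup_{G\in\delta_f(\mathcal{X})}||G^T||_{\alpha,\beta}\Big)||v||_\alpha.
\end{equation}
Integrating over $[0,1]$ yields $||f(y)-f(x)||_\beta\le\big(\sup_{G\in\delta_f(\mathcal{X})}||G^T||_{\alpha,\beta}\big)||y-x||_\alpha$, and taking the supremum over $x\ne y$ gives the claim. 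This is exactly where Lemma \ref{lemma:app-analysis-lemma} does the essential work, since the segment may cross points that are only directionally (not fully) differentiable, and the lemma dominates their directional derivatives by Jacobian norms at genuinely differentiable points.

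I expect the main obstacle to be the rigorous justification of the integral representation in the reverse direction: one must verify that the almost-everywhere derivative of the one-dimensional restriction $\phi$ coincides with $d_v f(x+tv)$, that the integrand is measurable, and that the fundamental theorem of calculus applies to the vector-valued absolutely continuous $\phi$. A secondary subtlety is that integrating along the straight segment requires that segment to lie in $\mathcal{X}$; this holds when $\mathcal{X}$ is convex (as for the neighborhood of a polytope in the applications), and for a general non-convex open set the equality can in fact fail, so I would either assume convexity or restrict attention to pairs $x,y$ whose connecting segment remains in $\mathcal{X}$.
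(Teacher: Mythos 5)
Your proof is correct, and its skeleton matches the paper's: both directions reduce the problem to gradients at points of $\text{Diff}(\mathcal{X})$, and the hard inequality $L^{(\alpha,\beta)}(f,\mathcal{X})\le\sup_G||G^T||_{\alpha,\beta}$ is in both cases obtained by integrating along the segment from $x$ to $y$ and invoking Lemma~\ref{lemma:app-analysis-lemma} to dominate directional derivatives at merely directionally differentiable points by Jacobian norms at fully differentiable ones. You diverge in three ways, each to your benefit. First, you apply the fundamental theorem of calculus directly to the vector-valued restriction $\phi(t)=f(x+tv)$ (legitimate, since each component is Lipschitz hence absolutely continuous), whereas the paper scalarizes via the dual norm, writing $||f(x)-f(y)||_\beta=\sup_{||c||_{\beta^*}\le1}|c^T(f(x)-f(y))|$ and running the FTC argument on $h_c(t)=c^Tf(x+t(y-x))$ with an application of H\"older (Proposition~\ref{prop:holder-prop}); your route skips that machinery and the paper's somewhat awkward auxiliary function $g$. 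Second, for the easy direction you bound $||\nabla f(x)^Tv||_\beta$ by a difference quotient and then pass to arbitrary $G\in\delta_f(x)$ via limits of gradients plus convexity of the matrix norm; this is cleaner than the paper's corresponding reduction, which asserts that every extreme point of $\delta_f(x)$ is literally $\nabla f(x')$ for some differentiable $x'$ --- extreme points are in general only \emph{limits} of such gradients, so your limit-then-convex-hull argument is the rigorous version of what the paper gestures at (the paper's other direction, an $\epsilon$-argument producing a near-maximizing pair $x=z+tv$, $y=z$, is equivalent in content to your difference-quotient bound). Third, your closing caveat is a genuine catch rather than excess caution: the paper's proof also integrates along the straight segment and thus implicitly assumes $[x,y]\subseteq\mathcal{X}$, yet the theorem is stated for an arbitrary open $\mathcal{X}$, where the equality can fail --- e.g.\ take $\mathcal{X}$ a union of two disjoint balls with $f$ locally constant on each but taking different values, so the right-hand side is $0$ while the left-hand side is positive. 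The convexity (or segment-containment) hypothesis you propose is exactly what the intended application satisfies, since there $\mathcal{X}$ is a neighborhood of a bounded polytope, i.e.\ a Minkowski sum of convex sets.
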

\paragraph{Remarks:} Before we proceed with the proof, we make some remarks. First, note that if $f$ is scalar-valued and continuously differentiable, then $\nabla f(x)^T$ is a row-vector, and $||\nabla f(x)^T||_{\alpha,\beta} = ||\nabla f(x)||_{\alpha^*}$, recovering the familiar known result. Second, to gain some intuition for this statement, consider the case where $f(x)=Ax+b$ is an affine function. Then $\nabla f(x)^T=A$, and by applying the theorem and leveraging the definition of $L^{(\alpha, \beta)}(f,\mathcal{X})$, we have 
\begin{equation}
    L^{(\alpha,\beta)}(f,\mathcal{X}):= \sup_{x\neq y\in \mathcal{X}} \frac{||A(x-y)||_\beta}{||x-y||_\alpha} = ||A||_{\alpha,\beta},
\end{equation}
where the last equality holds because $\mathcal{X}$ is open.

\begin{proof}
It suffices to prove the following equality:
\begin{equation}\label{eq:analysis-proof-1}
    L^{(\alpha, \beta)}(f,\mathcal{X}) = \sup_{x\in\text{Diff}(\mathcal{X})} ||\nabla f(x)^T||_{\alpha, \beta}
\end{equation}
This follows naturally as if $x\in\text{Diff}(\mathcal{X})$ then $\delta_f(x)=\{\nabla f(x)\}$. On the other hand, if $x\not \in \text{Diff}(\mathcal{X})$, then for every extreme point $G$ in $\delta_f(x)$, there exists an $x'\in\text{Diff}(\mathcal{X})$ such that $\nabla f(x')=G$ (by definition). As we seek to optimize over a norm, which is by definition convex, there exists an extreme point of $\delta_f(x)$ which attains the optimal value. Hence, we proceed by showing that 
Equation \ref{eq:analysis-proof-1} holds.

We show that for all $x,y\in\mathcal{X}$ that $\frac{||f(x)-f(y)||_\beta}{||x-y||_\alpha}$ is bounded above by $\sup_{x\in\text{Diff}(\mathcal{X})}||\nabla f(x)||_{\alpha,\beta}$. Then we will show the opposite inequality. 

Fix any $x,y\in\mathcal{X}$, and note that since the dual of a dual norm is the original norm, 
\begin{equation}
    ||f(x)-f(y)||_\beta = \sup_{||c||_{\beta^*}\leq1} |c^T(f(x)-f(y)|
\end{equation}
Moving the $\sup$ to the outside, we have 
\begin{equation}
    ||f(x)-f(y)||_\beta =\sup{||c||}_{\beta^*} |h_c(0)-h_c(1)|
\end{equation}
for $h_c:\mathbb{R}\rightarrow\mathbb{R}$ defined as $h_c(t):=c^Tf(x+t(y-x))$. Then certainly $h_c$ is lipschitz continous on the interval $[0,1]$, and the limit $h_c'(t)$ exists almost everywhere, defined as 
\begin{equation}
    h_c'(t):=\lim_{\delta \rightarrow 0}\frac{c^T(f(x+(t+\delta)(y-x))-c^T(f(x+t(y-x))}{|\delta|} = c^T\delta_{(y-x)}f(x+t(y-x))
\end{equation}
Further, there exists a lebesgue integrable function $g(t)$ that equals $h_c'(t)$ almost everywhere and 
\begin{equation}
|h(0)-h(1)|=\big|\int_0^1 g(t)d\mu|
\end{equation}
We can assume without loss of generality that 
\begin{equation}
    g(t)=\begin{cases}
          h_c'(t)&\text{ if } h_c'(t) \text{ exists}\\
          \sup_{s \in[0,1]} |h'(s)| &\text{ otherwise}
    \end{cases}
\end{equation}
where the supremum is defined over all points where $h_c'(t)$ is defined. Then because $g$ agrees almost everywhere with $h_c'$ and is bounded pointwise, we have the following chain of inequalities:
\begin{align}
    ||f(x)-f(y)||_\beta&=\sup_{||c||_{\beta^*} \leq 1} |h_c(0)-h_c(1)| =\sup_{||c||_{\beta^*} \leq 1} \Big|\int_0^1 |g(t)d\mu\Big| \\
    &\leq \sup_{||c||_{\beta^*} \leq 1} \int_0^1 |g(t)|d\mu\\
    &\leq \sup_{||c||_{\beta^*}\leq 1} \int_0^1 \sup_{s\in[0,1]}|h_c'(s)| d\mu\\
    &\leq \sup_{||c||_{\beta^*}\leq 1} \int_0^1 \sup_{s\in[0,1]}|c^T\delta_{(y-x)}f(x+s(y-x))|d\mu \\
    &\leq \sup_{||c||_{\beta^*}\leq 1} \int_0^1 \sup_{z\in\mathcal{D}_(y-x)}|c^T\delta_{(y-x)}f(z)|d\mu \\
    &\leq \sup_{||c||_{\beta*}\leq 1} \int_0^1 ||c||_{\beta^*}\sup_{z\in\mathcal{D}_(y-x)}||\delta_{(y-x)} f(z)||_\beta d\mu \label{eq:holder-application} \\
    &\leq \sup_{z\in\text{Diff}(\mathcal{X})} ||\nabla f(z)(y-x)||_\beta \label{eq:anal-lemma-app}\\
    &\leq \sup_{z\in\text{Diff}(\mathcal{X})} ||\nabla f(z)||_{\alpha,\beta} ||x-y||_\alpha
\end{align}
Where Equation \ref{eq:holder-application} holds by Proposition \ref{prop:holder-prop}, Equation \ref{eq:anal-lemma-app} holds by Lemma \ref{lemma:app-analysis-lemma}, and the final inequality holds by Proposition \ref{prop:matrix-prop}. Dividing by $||x-y||_\alpha$ yields the desired result. 

On the other hand, we wish to show, for every $\epsilon> 0$, the existence of an $x,y\in\mathcal{X}$ such that
\begin{equation}\label{eq:thm-pt2}
    \frac{||f(x)-f(y)||_\beta}{||x-y||_\alpha} \geq  \sup_{x\in\text{Diff}(\mathcal{X})} ||\nabla f(x)||_{\alpha, \beta} -\epsilon
\end{equation}
Fix $\epsilon > 0$ and consider any point $z\in \mathcal{X}$ with $||\nabla f(z)^T||_{\alpha,\beta} \geq \sup_{x\in\mathcal{X}} ||\nabla f(x)^T||_{\alpha,\beta} -\epsilon/2$.

Then $||\nabla f(z)^T||_{\alpha,\beta}=\sup_{||v||_\alpha\leq 1}||\nabla f(z)^Tv||_\beta=\sup_{||v||_\alpha\leq 1}||d_vf(z)||_\beta$.
By the definition of the directional derivative, there exists some $\delta>0$ such that for all $|t|<\delta$, 
\begin{equation}
    \frac{||f(z+tv)-f(z)||_\beta}{||tv||\alpha} \geq ||d_vf(z)||_\beta -\epsilon/2 \geq \sup_{x\in\text{Diff}(\mathcal{X})}||\nabla f(x)^T||_{\alpha,\beta}-\epsilon
\end{equation}
Hence setting $x=z+tv$ and $y=v$, we recover equation \ref{eq:thm-pt2}.
\end{proof}
\newpage

% Section 2: Proofs for General Position section 
\section{Chain Rule and General Position proofs}

In this section, we provide the formal proofs of statements made in Section 3. \subsection{Preliminaries}

\textbf{Polytopes}: We use the term polytope to refer to subsets of $\mathbb{R}^d$ of the form $\{x\mid Ax \leq b\}$. The affine hull of a polytope is the smallest affine subspace which contains it. The dimension of a polytope is the dimension of its affine hull. The relative interior of a polytope $P$ is the interior of $P$ within the affine hull of $P$ (i.e., lower-dimensional polytopes have empty interior, but not nonempty relative interior unless the polytope has dimension 0).

\textbf{Hyperplanes}: A hyperplane is an affine subspace of $\mathbb{R}^d$ of codimension 1. A hyperplane may equivalently be viewed as the zero-locus of an affine function: $H:= \{x\mid a^Tx=b\}$. A hyperplane partitions $\mathbb{R}^d$ into two closed halfspaces, $H^+, H^-$ defined by $H^+:=\{x\mid a^Tx \geq b\}$ and similarly for $H^-$. When the inequality is strict, we define the open halfspaces as $H_o^+, H_o^-$. We remark that if $U$ is an affine subspace of $\mathbb{R}^d$ and $H$ is a hyperplane that does not contain $U$, then $H\cap U$ is a subspace of codimension 1 relative to $U$. If this is the case, then $H\cap U$ is a subspace of dimension $dim(U)-1$. A hyperplane $H$ is called a separating hyperplane of a convex set $C$ if $H\cap C=\emptyset$. $H$ is called a supporting hyperplane of $C$ if $H\cap C \neq \emptyset$ and $C$ is contained in either $H^+$ or $H^-$. 

\textbf{ReLU Kernels}: For a ReLU network, define the functions $g_i(x)$ as the input to the $i^{th}$ ReLU of $f$. We define the $i^{th}$ ReLU kernel as the set for which $g_i=0$:
\begin{equation}
    K_i := \{x\mid g_i(x)=0\}
\end{equation}

\textbf{The Chain Rule:} The chain rule is a means to compute derivatives of compositions of smooth functions. Backpropagation is a dynamic-programming algorithm to perform the chain rule, increasing efficiency by memoization. This is most easily viewed as performing a backwards pass over the computation graph, where each node has associated with it a partial derivative of its output with respect to its input. As mentioned in the main paper, the chain rule may perform incorrectly when elements of the composition are nonsmooth, such as the ReLU operator. Indeed, the ReLU $\sigma$ has a derivative which is well defined everywhere except for zero, for which it has a subdifferential of $[0,1]$.

\begin{definition}
Consider any implementation of the chain rule which may arbitrarily assign any element of the generalized gradient $\delta_\sigma(0)$ for each required partial derivative $\sigma'(0)$. We define the set-valued function $\nabla^\#f(\cdot)$ as the collection of answers yielded by \textbf{any} such chain rule.
\end{definition}
While we note that our mixed-integer programming formulation treats $\nabla^\#(f)$ in this set-valued sense, most implementations of automatic differentiation choose either $\{0, 1\}$ to be the evaluation of $\sigma'(0)$ such that $\nabla^\#f$ is not set valued (e.g.,in PyTorch and Tensorflow, $\sigma'(0)=0$). Our theory holds for our set-valued formulation, but in the case of automatic differentiation packages, as long as $\sigma'(0) \in [0, 1]$, our results will hold.

\textbf{A Remark on Hyperplane Arrangements}:
As noted in the main paper, our definition of general position neural networks is spiritually similar to the notion of general position hyperplane arrangements. A hyperplane arrangement $\mathcal{A}:=\{H_1,\dots, H_n\}$ is a collection of hyperplanes in $\mathbb{R}^d$ and is said to be in general position if the intersection of any $k$ hyperplanes is a $(d-k)$ dimensional subspace. Further, if a ReLU network only has one hidden layer, each ReLU kernel is a hyperplane. Thus, hyperplane arrangements are a subset of ReLU kernel arrangements.

\subsection{Proof of Theorem 2}
\begin{figure}
    \centering
    \includegraphics[scale=0.5]{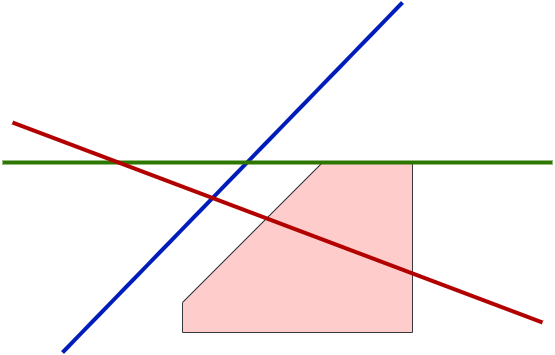}
    \caption{Examples of the three classes of hyperplanes with respect to a polytope $P$ (pink). The blue hyperplane is a separating hyperplane of $P$, the green hyperplane is a supporting hyperplane of $P$, and the red hyperplane is a cutting hyperplane of $P$.}
\end{figure}
Before restating Theorem 2 and the proof, we introduce the following lemmas:

\begin{lemma}\label{lemma:relint3}
Let $\{K_i\}_{i=1}^m$ be the ReLU kernels of a general position neural net, $f$. Then for any $x$ contained in exactly $k$ of them, say WLOG $K_1,\dots, K_k$, $x$ lies in the relative interior of one of the polyhedral components of $\cap_{i=1}^{k} K_i$.
\end{lemma}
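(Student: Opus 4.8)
The plan is to prove the statement by exhibiting an explicit polyhedral decomposition of $\bigcap_{i=1}^k K_i$ coming from the linear regions of $f$, and then showing that every point of the relative boundary of a full-dimensional component must lie on an \emph{additional} ReLU kernel. Since $x$ lies in exactly $k$ of them, this is forbidden, so $x$ must lie in the relative interior of some component.

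First I would set up the decomposition. Let $R$ range over the (finitely many) maximal regions on which the full activation pattern of $f$ is constant; on the closure $\overline{R}$ each preactivation $g_i$ agrees with an affine function $g_i^R$, and $\overline{R}=\{y : \epsilon_l^R\, g_l^R(y)\ge 0 \ \forall l\}$ for the sign pattern $(\epsilon_l^R)_l$ of $R$. Since neuron $i$'s preactivation sign is part of the pattern, $K_i$ meets $\overline R$ only in the facet $\{g_i^R=0\}\cap\overline R$. Consequently any $y\in\bigcap_{i=1}^k K_i$ with $y\in\overline R$ satisfies $g_i^R(y)=0$ for all $i\le k$, so $y\in P_R:=F_R\cap\overline R$, where $F_R:=\{y:g_i^R(y)=0,\ i\le k\}$. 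I would record that $\bigcap_{i=1}^k K_i=\bigcup_R P_R$, that general position forces $F_R$ to be an affine flat of dimension exactly $d-k$ whenever $P_R$ is a $(d-k)$-dimensional component, and hence that $x$ lies in at least one such full-dimensional component $P_R$.

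The heart of the argument is to control $\mathrm{relbd}(P_R)$ inside $F_R$. The facets of $P_R$ are the sets $F_R\cap\{g_l^R=0\}\cap\overline R$ for the neurons $l$ whose halfspace constraint is active. The key dichotomy is: for $l\le k$ one has $F_R\subseteq\{g_l^R=0\}$ by the very definition of $F_R$, so that constraint is identically tight on $F_R$ and contributes no proper face; for $l>k$, general position applied to the set $\{1,\dots,k,l\}$ (whose kernel intersection is $(d-k-1)$-dimensional) guarantees $g_l^R$ is non-constant on $F_R$, so the facet is proper, and every point of it satisfies $g_l=g_l^R=0$, i.e.\ lies in $K_l$. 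Passing to lower-dimensional faces the same way, I conclude $\mathrm{relbd}(P_R)\subseteq\bigcup_{l>k}K_l$. The final dichotomy for $x$ then closes the argument: since $x$ lies in the $(d-k)$-dimensional component $P_R$, it is either in $\mathrm{relint}(P_R)$ — the desired conclusion — or in $\mathrm{relbd}(P_R)$, in which case $x\in K_l$ for some $l>k$, contradicting that $x$ lies in exactly the $k$ kernels $K_1,\dots,K_k$.

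The main obstacle I anticipate is making the boundary-containment step fully rigorous: one must argue carefully that the defining halfspaces of $\overline R$ are exactly the single-neuron constraints $\epsilon_l^R g_l^R\ge 0$, that the facets of $P_R$ therefore arise only from such constraints, and that the $l\le k$ constraints are redundant on $F_R$ while the $l>k$ constraints are genuinely cutting and land in $K_l$. This is precisely where the general position hypothesis does all the work, supplying the correct codimensions — $F_R$ is $(d-k)$-dimensional and each extra kernel drops the dimension by exactly one — so that no degeneracy allows a boundary facet to escape the kernels $K_l$ with $l>k$.
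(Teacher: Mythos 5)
Your proof is correct and follows essentially the same strategy as the paper's: place $x$ in a $(d-k)$-dimensional polyhedral component of $\bigcap_{i=1}^{k}K_i$ and show that every relative-boundary point of that component lies on an additional kernel $K_l$ with $l>k$, which is impossible because $x$ lies on exactly $k$ kernels. The paper asserts the boundary-containment step by a one-line appeal to the polyhedral complex induced by the kernels, whereas you derive it explicitly from the activation-region decomposition; your two small imprecisions --- claiming $\dim F_R = d-k$ (general position constrains the global kernel intersections, not the per-region affinizations, so this needs justification) and saying $g_l^R$ is \emph{non-constant} on $F_R$ (it could be a nonzero constant when $K_l$ is parallel to $F_R$) --- are harmless, since the argument really only needs that no $g_l^R$ with $l>k$ vanishes identically on $P_R$, and that is exactly what your dimension count via general position on $\{1,\dots,k,l\}$ delivers: otherwise the $(d-k)$-dimensional convex set $P_R$ would sit inside a finite union of $(d-k-1)$-dimensional polytopes.
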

\begin{proof}
Since $f$ is in general position, $\cap_{i=1}^k K_i$ is a union of $(d-k)$-dimensional polytopes. Let $P$ be one of the polytopes in this union such that $x\in P$. Since $P$ is an $(d-k)$-face in the polyhedral complex induced by $\{K_i\}_{i=1}^m$, each point on the boundary of $P$ is the intersection of at least $k+1$ ReLU kernels of $f$. Thus $x$ cannot be contained in the boundary of $P$ and must reside in the relative interior.
\end{proof}

The rest of the components are geometric. We introduce the notion of a cutting hyperplane:
\begin{definition}
We say that a hyperplane $H$ is a \textbf{cutting hyperplane} of a polytope $P$ if it is neither a separating nor supporting hyperplane of $P$. 
\end{definition}
We now state and prove several properties of cutting hyperplanes: 
\begin{lemma}\label{lemma:cut-properties}
The following are equivalent:
\begin{enumerate}[(a)]
    \item $H$ is a cutting hyperplane of $P$. \label{lemma:cut-a}
    \item $H$ contains a point in the relative interior of $P$, and $H\cap P\neq P$. \label{lemma:cut-b}
    \item $H$ cuts $P$ into two polytopes with the same dimension as $P$: $dim(P\cap H^+)=dim(P\cap H^-)=dim(P)$ and $H\cap P \neq P$.\label{lemma:cut-c}
\end{enumerate}
\end{lemma}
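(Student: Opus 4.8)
The plan is to establish the three equivalences as a single cycle of implications, (a)$\Rightarrow$(b)$\Rightarrow$(c)$\Rightarrow$(a). Write $H=\{x\mid a^Tx=b\}$, set $\ell(x):=a^Tx-b$, and let $k:=\dim(P)$. The first move is to unwind the definitions: by the preliminaries $H$ fails to be separating exactly when $H\cap P\neq\emptyset$, and fails to be supporting exactly when $P$ is contained in neither closed halfspace; so condition (a) is equivalent to saying that $\ell$ attains a strictly positive value at some $p_+\in P$ and a strictly negative value at some $p_-\in P$.

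For (a)$\Rightarrow$(b): the point $p_+\in P\setminus H$ immediately gives $H\cap P\neq P$. To produce a relative-interior point on $H$, I take any $r\in\mathrm{relint}(P)$ (nonempty for a nonempty polytope); if $\ell(r)=0$ we are done, and otherwise I walk along the segment from $r$ toward whichever of $p_+,p_-$ lies strictly on the opposite side of $H$. By the line-segment principle every point of this segment with parameter in $(0,1]$ remains in $\mathrm{relint}(P)$, while $\ell$ is affine and changes sign along the segment, so it vanishes at an interior parameter, yielding a relative-interior point of $P$ on $H$.

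For (b)$\Rightarrow$(c): the hypothesis $H\cap P\neq P$ transfers verbatim and also forces $\mathrm{aff}(P)\not\subseteq H$, since otherwise $P\subseteq\mathrm{aff}(P)\subseteq H$ and $H\cap P=P$. By the remark that a hyperplane not containing an affine subspace meets it in codimension one, $H\cap\mathrm{aff}(P)$ is a $(k-1)$-dimensional subspace through the given relative-interior point $r$. Choosing a relative ball $B\subseteq P$ around $r$ inside $\mathrm{aff}(P)$, this codimension-one subspace cuts $B$ so that $B$ meets both open halfspaces; each of $B\cap H_o^+$ and $B\cap H_o^-$ is a nonempty relatively open subset of $\mathrm{aff}(P)$, hence has affine dimension $k$, and lies in $P\cap H^+$, $P\cap H^-$ respectively, giving $\dim(P\cap H^+)=\dim(P\cap H^-)=k$. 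For (c)$\Rightarrow$(a): again $H\cap P\neq P$ gives $\mathrm{aff}(P)\not\subseteq H$, so $P\cap H$ has dimension at most $k-1$. If $H$ were separating then $P$ would lie in a single open halfspace, making the opposite intersection $P\cap H^\mp$ empty and hence of dimension below $k$, contradicting (c); and if $H$ were supporting, say $P\subseteq H^+$, then $P\cap H^-=P\cap H$ would have dimension $\leq k-1<k$, again contradicting (c), with the case $P\subseteq H^-$ symmetric. Thus $H$ is neither separating nor supporting, i.e. cutting.

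The main obstacle I anticipate is the careful bookkeeping of dimensions in the degenerate cases: invoking $\mathrm{relint}(P)\neq\emptyset$ together with the line-segment principle in (a)$\Rightarrow$(b), and making sure that "$\dim(P\cap H^\pm)=k$" genuinely forces both nonemptiness and full dimensionality, so that an empty or lower-dimensional intersection delivers the contradictions in (c)$\Rightarrow$(a). Everything else reduces to the standard fact, cited in the preliminaries, that a hyperplane not containing $\mathrm{aff}(P)$ intersects it in codimension one.
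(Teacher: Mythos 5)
Your proof is correct and follows essentially the same route as the paper: the identical implication cycle (a)$\Rightarrow$(b)$\Rightarrow$(c)$\Rightarrow$(a), with matching arguments for (b)$\Rightarrow$(c) (a relative ball around the relative-interior point cut by the codimension-one subspace $H\cap\mathrm{aff}(P)$) and for (c)$\Rightarrow$(a) (nonemptiness rules out separation, full dimension of $P\cap H^{\pm}$ rules out support). Your only divergence is in (a)$\Rightarrow$(b), where you replace the paper's informal claim that ``$H\cap U$ must not lie on the boundary of $P$'' with an explicit line-segment-principle argument locating a zero of the affine functional in $\mathrm{relint}(P)$ --- a more rigorous rendering of the same step, and you also make explicit the $\mathrm{aff}(P)\not\subseteq H$ bookkeeping that the paper leaves implicit.
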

\begin{proof} 
Throughout we will denote the affine hull of $P$ as $U$.\\
\Implies{lemma:cut-a}{lemma:cut-b}: By assumption, $H$ is neither a supporting nor separating hyperplane. Since neither $H^+\cap P$ nor $H^-\cap P$ is $P$, $H\cap P\neq P$. Thus $H\cap P$ is a codimension 1 subspace, with respect to $U$. Since $H$ is not a supporting hyperplane, $H\cap U$ must not lie on the boundary of $P$ (relative to $U$). Thus $H\cap U$ contains a point in the relative interior of $P$ and so does $H$.

\Implies{lemma:cut-b}{lemma:cut-c}: By assumption $H\cap P \neq P$. Consider some point, $x$, inside $H$ and the relative interior of $P$. By definition of relative interior, there is some neighborhood $N_\epsilon(x)$ such that $N_\epsilon(x)\cap U\subset P$. Thus there exists some $x' \in N_\epsilon(x)$ such that $(N_{\epsilon'}(x')\cap U) \subset (H_o^+\cap P)$ and thus the affine hull of $H^+\cap P$ must have the same dimension as $U$. Similarly for $H^-\cap P$.

\Implies{lemma:cut-c}{lemma:cut-a}: Since $P\cap H^+$ and $P\cap H^-$ are nonempty, then $P\cap H$ is nonempty and thus $H$ is not a separating hyperplane of $P$. Suppose for the sake of contradiction that $H^+\cap P= P$. Then $H_o^-\cap P=\emptyset$, this implies that $dim(H\cap P)=dim(P)$ which only occurs if $P\subseteq H$ which is a contradiction. Repeating this for $H^-\cap P$, we see that $H$ is not a supporting hyperplane of $P$.
\end{proof}

\begin{lemma}\label{lemma:facet-cut}
Let $F$ be a $(k)$-dimensional face of a polytope $P$. If $H$ is a cutting hyperplane of $F$, then $H$ is a cutting hyperplane of $P$.
\end{lemma}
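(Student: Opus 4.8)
The plan is to work directly from the raw definition of a cutting hyperplane rather than the dimensional characterizations: $H$ is cutting for a set precisely when it is \emph{neither} separating \emph{nor} supporting. The whole argument reduces to showing that the hypothesis forces $F$ to contain points strictly on both open sides of $H$, and then transferring this two-sidedness from $F$ up to the larger polytope $P\supseteq F$.

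First I would unpack the hypothesis that $H$ is a cutting hyperplane of $F$. Being not separating gives $H\cap F\neq\emptyset$, and being not supporting gives (in the presence of $H\cap F\neq\emptyset$) that $F$ is contained in neither closed halfspace $H^+$ nor $H^-$. Writing the defining functional of $H$ as $\ell(x):=a^Tx-b$, this means there exist $f^+\in F$ with $\ell(f^+)>0$ and $f^-\in F$ with $\ell(f^-)<0$; equivalently, $F\cap H_o^+\neq\emptyset$ and $F\cap H_o^-\neq\emptyset$. (One could instead invoke \ref{lemma:cut-c} to get $\dim(F\cap H^+)=\dim(F\cap H^-)=\dim(F)$ and then extract points of the open halfspaces, but negating the definition of ``supporting'' directly is cleaner here.) Since $F\subseteq P$, both $f^+$ and $f^-$ lie in $P$, so $P$ likewise meets both open halfspaces.

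From this two-sidedness for $P$ the conclusion follows in two short steps. Because $P$ contains a point with $\ell>0$ and a point with $\ell<0$, it is contained in neither $H^+$ nor $H^-$, so $H$ is not a supporting hyperplane of $P$. To rule out $H$ being a separating hyperplane I would use convexity of $P$: the segment $[f^-,f^+]$ lies in $P$, and the affine functional $\ell$ changes sign along it, so by the intermediate value theorem $\ell$ vanishes at some point of the segment, exhibiting a point of $P\cap H$ and hence $H\cap P\neq\emptyset$. Thus $H$ is neither separating nor supporting for $P$, i.e.\ a cutting hyperplane of $P$.

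There is no real obstacle here; the only care needed is the bookkeeping between open and closed halfspaces and the correct negation of ``supporting,'' which is a conjunction ($H\cap C\neq\emptyset$ \emph{and} containment in one halfspace). It is worth remarking that the argument never uses the dimension $k$, nor even the face structure of $F$ — only $F\subseteq P$ together with convexity of $P$ — so the statement in fact holds for any subset $F\subseteq P$, though I would state it exactly as given since that is the form used downstream.
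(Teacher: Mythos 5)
Your proof is correct and takes essentially the same route as the paper's: both extract points of $F$ in the open halfspaces $H_o^+$ and $H_o^-$ from the hypothesis that $H$ cuts $F$, transfer them into $P$ to rule out $H$ being a supporting hyperplane of $P$, and dispose of the separating case immediately. The only cosmetic difference is your intermediate-value argument along the segment $[f^-,f^+]$ to produce a point of $P\cap H$; this is harmless but unnecessary, since $H\cap F\neq\emptyset$ together with $F\subseteq P$ already gives $H\cap P\neq\emptyset$ directly (and your closing remark that only $F\subseteq P$ and convexity are used is accurate).
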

\begin{proof}
Since $H$ is a cutting hyperplane of $F$, $H$ is neither a separating hyperplane nor is $P\subseteq H$. Thus it suffices to show that $H$ is not a supporting hyperplane of $P$. Since $H$ cuts $F$, there exist points inside $F\cap H_o^+$ and $F \cap H_o^-$, where $H_o^+$, $H_o^-$ are the open halfspaces induced by $H$. Thus neither $P\cap H_o^+$ nor $P\cap H_o^-$ are empty, which implies that $H$ is not a supporting hyperplane of $P$, hence $H$ must also be a cutting hyperplane of $P$.

\end{proof}

Now we can proceed with the proof of Theorem 2:
\begin{theorem}
Let $f$ be a general position ReLU network, then for every $x$ in the domain of $f$, the set of elements returned by the generalized chain rule $\nabla^\#f(x)$ is exactly the generalized Jacobian:
\begin{equation}
    \nabla^\#f(x) = \delta_f(x)
\end{equation}
\end{theorem}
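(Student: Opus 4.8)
The plan is to reduce the statement to a purely local analysis at $x$ and to show that, as the choices of $\sigma'(0)$ range over the subdifferential $\delta_\sigma(0)=[0,1]$, backpropagation reproduces exactly the gradients of the affine pieces of $f$ meeting at $x$ together with their convex combinations, which is precisely $\delta_f(x)$. First I would isolate the \emph{active} set of kernels $S=\{i\mid g_i(x)=0\}$ with $|S|=k$, and the complementary set of neurons whose pre-activation is strictly signed at $x$. By continuity each of the latter keeps its sign on a neighborhood $N$ of $x$, so on $N$ the network is piecewise affine with pieces carved out only by the kernels $\{K_i\mid i\in S\}$. By Lemma~\ref{lemma:relint3}, $x$ lies in the relative interior of a single $(d-k)$-dimensional polytope $P$ that is a component of $\bigcap_{i\in S}K_i$. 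On each full-dimensional cell of $N\setminus\bigcup_{i\in S}K_i$ the function $f$ is affine and differentiable, and backpropagation run with the corresponding $\{0,1\}$ activation pattern returns exactly the gradient of that affine piece; call it $G_b$ for the sign pattern $b\in\{0,1\}^k$.

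The heart of the argument, and the step I expect to be the main obstacle, is to prove that exactly $2^{k}$ full-dimensional cells are incident to $x$, one for each sign pattern $b$, each having $x$ in its closure. General position forces the local linearizations of the kernels $K_i$ at $x$ to have linearly independent normals (otherwise $\bigcap_{i\in S}K_i$ could not be $(d-k)$-dimensional), so no single kernel is a supporting or separating hyperplane of the cells assembled from the others. I would make this precise by introducing the kernels one at a time and arguing that each new kernel is a \emph{cutting} hyperplane of every cell incident to $P$ built so far; Lemma~\ref{lemma:cut-properties} and Lemma~\ref{lemma:facet-cut} then guarantee that cutting preserves full dimension and propagates from a face to its containing cell, so the count of incident cells doubles at each step and reaches $2^{k}$ after $k$ steps. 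The subtlety that makes this delicate is that deeper kernels are only piecewise affine ``bent hyperplanes'', so the cutting argument must be carried out cell by cell, where each kernel restricts to a genuine hyperplane, and transversality must be carried across a bend by using the fact that the two affine branches of $g_i$ agree on the upstream kernel and hence their gradients differ only by a multiple of that kernel's normal.

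Once the incidence structure is in hand, both inclusions follow. For $\delta_f(x)\subseteq\nabla^\#f(x)$: every limit of gradients $\nabla f(x_j)$ with $x_j\to x$ arises from one of the incident cells, hence equals some vertex $G_b$, and each $G_b$ is produced by the chain rule under the choice $\sigma'(0)=b_i$ for $i\in S$; passing to the convex hull recovers all of $\delta_f(x)$. For $\nabla^\#f(x)\subseteq\delta_f(x)$: fixing $\sigma'(0)=t_i\in[0,1]$, the backpropagation recursion is multilinear in the $t_i$, so the multilinear interpolation identity writes any output as the convex combination $\sum_{b}\bigl(\prod_{i:b_i=1}t_i\,\prod_{i:b_i=0}(1-t_i)\bigr)G_b$ of the vertices $G_b\in\delta_f(x)$, which lies in $\delta_f(x)$ by convexity. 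Combining the inclusions gives the claimed equality; the only bookkeeping left is to confirm that the admissible choices $\sigma'(0)\in\delta_\sigma(0)$ sweep the whole subdifferential so that the vertices $G_b$ generating $\delta_f(x)$ are all attained, a point on which the correctness of the downstream optimization over the convex norm in Equation~\ref{eq:mip-optimization} ultimately rests.
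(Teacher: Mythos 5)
Your proposal is correct and follows essentially the same route as the paper: the cell-counting step is the paper's Part~1 verbatim in spirit, establishing $2^k$ incident full-dimensional regions via Lemma~\ref{lemma:relint3} and the cutting-hyperplane Lemmas~\ref{lemma:cut-properties} and~\ref{lemma:facet-cut} (the paper organizes your ``one kernel at a time'' step by peeling off a deepest active neuron through a subnetwork, which is exactly the depth bookkeeping your bent-hyperplane remark gestures at), and your multilinear interpolation identity is the closed form of the paper's Part~2 induction on the number of $\sigma'(0)$'s assigned in the open interval $(0,1)$, both expressing every chain-rule output as a convex combination of the vertex Jacobians $G_b$. Both arguments then conclude by matching extreme points and taking convex hulls, so the approaches coincide.
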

\begin{proof}
\textbf{Part 1:} 
The first part of this proof shows that if $x$ is contained in exactly $k$ ReLU kernels, then $x$ is contained in $2^k$ full-dimensional linear regions of $f$. We prove this claim by induction on $k$. The case where $k=1$ is trivial. Now assume that the claim holds up to $k-1$. Assume that $x$ lies in the ReLU kernel for every neuron in a set $S\subseteq [m]$, with $|S|=k$. Without loss of generality, let $j\in S$ be a neuron whose depth, $L$, is at least as great as the depth of every other neuron in $S$. Then one can construct a subnetwork $f'$ of $f$ by considering only the first $L$ layers of $f$ and omitting neuron $j$. Now $K_i$ is a ReLU kernel of $f'$ for every $i\in S\setminus\{j\}$, and further suppose that $f'$ is a general position ReLU net. From the inductive hypothesis, we can see that $x$ is contained in exactly $2^{k-1}$ linear regions of $f'$. By Lemma \ref{lemma:relint3}, $x$ resides in the relative interior of a $(n-k+1)$-dimensional polytope, $P$, contained in the union that defines $\cap_{i=1}^{k-1} K_i$. Since $j$ has maximal depth, $g_j(\cdot)$ is affine in $P$, and thus there exists some hyperplane $H$ such that $P\cap K_j=P\cap H$. Thus by Lemma \ref{lemma:cut-properties} \ref{lemma:cut-b}, $H$ is a cutting hyperplane of $P$. 

Consider some linear region $R$ of $f'$ containing $x$. Then $g_j(x)$ is affine inside each $R$ and hence there exists some hyperplane $H_R$ such that $R\cap K_j = R\cap H_R$, with the additional property that $H_R\cap P = H\cap P$. By general position, $H\cap P\neq P$ and thus $H_R$ is a cutting hyperplane for $P$ by Lemma \ref{lemma:cut-properties} \ref{lemma:cut-b}. Since $P$ is a $(n-k+1)$-dimensional face of $R$, we can apply Lemma \ref{lemma:facet-cut} to see that $H_R$ is a cutting hyperplane for $R$ as desired.

\textbf{Part 2:} 

Now we show that the implication proved in part 1 of the proof implies that $\nabla^\#f(x)=\delta_f(x)$. This follows in two steps. The first step is to show that $\nabla f^\#(x)$ is a convex set for all $x$, and the second step is to show the following inclusion holds: 
\begin{equation}\label{eq:gengrad-chainrule}
    \mathcal{V}(\delta_f(x)) = \mathcal{V}(\nabla^\#f(x))
\end{equation}
Where, for any convex set $C$, $\mathcal{V}(C)$ denotes the set of extreme points of $C$. Then the theorem will follow by taking convex hulls. 

To show that $\nabla^\#f(x)$ is convex, we make the following observation: every element of $\nabla^\#f(x)$ must be attainable by some implementation of the chain rule which assigns values for every $\sigma'(0)$. If $\Lambda_0\in \nabla^\#f(x)$ is attainable by setting exactly zero $\sigma'(0)'s$ to lie in the open interval $(0, 1)$, then $\Lambda_0$ is the Jacobian matrix corresponding to one of the full-dimensional linear regions that $x$ is contained in. Consider some $\Lambda_r\in \nabla^\#f(x)$ which is attainable by setting exactly $r$ $\sigma'(0)'s$ to lie in the open interval $(0, 1)$. Then certainly $\Lambda_r$ may be written as the convex combination of $\Lambda^{(1)}_{r-1}$ and $\Lambda^{(2)}_{r-1}$ for two elements of $\delta_f(x)$, attainable by setting exactly $(r-1)$ ReLU partial derivatives to be nonintegral. This holds for all $r \in \{1,\dots k\}$ and thus $\nabla^\#f(x)$ is convex. 

To show the equality in Equation \ref{eq:gengrad-chainrule}, we first consider some element of $\mathcal{V}(\delta_f(x))$. Certainly this must be the Jacobian of some full-dimensional linear region containing $x$, and hence there exists some assignment of ReLU partial derivatives such that the chain rule yields this Jacobian. On the other hand, we've shown in the previous section that every element of $\nabla^\#f(x)$ may be written as a convex combination of the Jacobians of the full-dimensional linear regions of $f$ containing $x$. Hence each extreme point of $\nabla^\#f(x)$ must be the Jacobian of one of the full-dimensional linear regions of $f$ containing $x$. 
\end{proof}

\subsection{Proof of Theorem 3}
Before presenting the proof of Theorem 3, we will more explicitly define a Lebesgue measure over parameter space of a ReLU network. Indeed, consider every ReLU network with a fixed architecture and hence a fixed number of parameters. We can identify each of these parameters with $\mathbb{R}$ such that the parameter space of a ReLU network with $k$ parameters is identifiable with $\mathbb{R}^k$. We introduce the measure $\mu_f$ as the Lebesgue measure over neural networks with the same architecture as a defined ReLU network $f$.
Now we present our Theorem:
\begin{theorem}
The set of ReLU networks not in general position has Lebesgue measure zero over the parameter space.
\end{theorem}

\begin{proof}

We prove the claim by induction over the number of neurons of a ReLU network. As every ReLU network with only one neuron is in general position, the base case holds trivially. Now suppose that the claim holds for families of ReLU networks with $k-1$ neurons. Then we can add a new neuron in one of two ways: either we add a new neuron to the final layer, or we add a new layer with only a single neuron. Every neural network may be constructed in this fashion, so the induction suffices to prove the claim. Both cases of the induction may be proved with the same argument: 

Consider some ReLU network, $f$, with $k-1$ neurons. Then consider adding a new neuron to $f$ in either of the two ways described above. Let $B_f$ denote the set of neural networks with the same architecture as $f$ that are not in general position, and similarly for $B_{f'}$. Let $C_{f'}$ denote the set of neural networks with the same architecture as $f'$ that are not in general position, but are in general position when the $k^{th}$ neuron is removed. Certainly if $f$ is not in general position, then $f'$ is not in general position. Thus 
\begin{equation}
    \mu_{f'}(B_f') \leq \mu_f(B_f) + \mu_{f'}(C_{f'}) = \mu_{f'}(C_{f'})
\end{equation}
where $\mu_f(B_f)=0$ by the induction hypothesis. We need to show the measure of $C_{f'}$ is zero as follows. Letting $K_k$ denote the ReLU kernel of the neuron added to $f$ to yield $f'$, we note that $f$ is not in general position only if one of the affine hulls of the polyhedral components of $K_k$ contains the affine hull of some polyhedral component of some intersection $\cap_{i\in S} K_i$ where $S$ is a nonempty subset of the $k-1$ neurons of $f$. We primarily control the bias parameter, as this is universal over all linear regions, and notice that this problem reduces to the following: what is the measure of hyperplanes that contain any of a finite collection of affine subspaces? By the countable subadditivity of the Lebesgue measure and the fact that the set of hyperplanes that contain any single affine subspace has measure 0, $\mu_{f'}(C_{f'})=0$. 
\end{proof}
\newpage
% Section 3: Proofs for inapproximability 
\section{Complexity Results}
\subsection{Complexity Theory Preliminaries}

Here we recall some relevant preliminaries in complexity theory. We will gloss over some formalisms where we can, though a more formal discussion can be found here \cite{williamson2011design,  hromkovivc2013algorithmics, demaine}.

We are typically interested in combinatorial optimization problems, which we will define informally as follows:

\begin{definition}
A \textbf{combinatorial optimization problem} is composed of 4 elements: i) A set of valid \textbf{instances}; ii) A set of feasible \textbf{solutions} for each valid instance; iii) A non-negative \textbf{cost} or \textbf{objective value} for each feasible solution; iv) A \textbf{goal}: signifying whether we want to find a feasible solution that either minimizes or maximizes the cost function.
\end{definition}
In this subsection, we will typically refer to problems using the letter $\Pi$, where instances of that optimization problem are $x$, and feasible solutions are $y$, and the cost of $y$ is $m(y)$. We will refer to the \textbf{cost} of the optimal solution to instance $x\in \Pi$ as $OPT(x)$. Optimization problems then typically have 3 formulations, listed in order of decreasing difficulty:
\begin{itemize}
    \item \textbf{Search Problem:} Given an instance $x$ of optimization problem $\Pi$, find $y$ such that $m(y)=OPT(x)$.
    \item \textbf{Computational Problem}:  Given an instance $x$ of optimization problem $\Pi$, find $OPT(x)$
    \item \textbf{Decision Problem}: Given an instance $x$ of optimization problem $\Pi$, and a number $k$, decide whether or not $OPT(x) \geq k$.
\end{itemize}
Certainly an efficient algorithm to do one of these implies an efficient algorithm to do the next one. Also note that by a binary search procedure, the computational problem is polynomially-time reducible to the decision problem. As complexity theory is typically couched in discussion about membership in a language, it is slightly awkward to discuss hardness of combinatorial optimization problems.  Since, every computational flavor of an optimization problem has a poly-time equivalent decision problem, we will simply claim that an optimization problem is NP-hard if its decision problem is NP-hard.

While many interesting optimization problems are hard to solve exactly, for many of these interesting problems there exist efficient approximation algorithms that can provide a guarantee about the cost of the optimal solution.
\begin{definition}
For a maximization problem $\Pi$, an approximation algorithm with approximation ratio $\alpha$ is a polynomial-time algorithm that, for every instance $x\in \Pi$, produces a feasible solution, $y$, such that $m(y) \geq OPT(x)/\alpha$.
\end{definition}

Noting that $\alpha > 1$ can either be a constant or a function parameterized by $|x|$, length of the binary encoding of instance $x$. We also note that this definition frames approximation algorithms as a ``search problem".

A very powerful tool in showing the hardness of approximation problems is the notion of a $c$-gap problem. This is a form of promise problem, and proofs of hardness here are slightly stronger than what we actually desire.
\begin{definition}
Given an instance of an maximization problem $x\in \Pi$ and a number $k$, the \textbf{c-gap problem} aims to distinguish between the following two cases:\begin{itemize}
    \item $\texttt{YES: } OPT(x) \geq k$
    \item $\texttt{NO: } OPT(x) < k/c$
\end{itemize}
where there is no requirement on what the output should be, should $OPT(x)$ fall somewhere in $[k/c, k)$. For minimization problems, $\texttt{YES}$ cases imply $OPT(x) \leq k$, and $\texttt{NO}$ cases imply $OPT(x) > k\cdot c$.
\end{definition}
Again we note that $c$ may be a function that takes the length of $x$ as an input. We now recall how a $c$-approximation algorithm may be used to solve the $c$-gap problem, implying the $c$-gap problem is at least as hard as the $c$-approximation.

\begin{proposition}
If the $c$-gap problem is hard for a maximization problem $\Pi$, then the $c$-approximation problem is hard for $\Pi$.
\end{proposition}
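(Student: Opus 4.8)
The plan is to prove the contrapositive: I will show that any polynomial-time $c$-approximation algorithm for $\Pi$ can be converted into a polynomial-time decider for the $c$-gap problem. Since the $c$-gap problem is assumed hard, no such approximation algorithm can exist, which is precisely the assertion that the $c$-approximation problem is hard for $\Pi$.

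Concretely, suppose $A$ is a polynomial-time algorithm that on every instance $x$ returns a feasible solution $y$ with $m(y) \geq OPT(x)/c$; since $y$ is feasible we also have the trivial upper bound $m(y) \leq OPT(x)$. Given a $c$-gap instance $(x,k)$, I would run $A$ on $x$, read off the cost $m(y)$, and answer \texttt{YES} exactly when $m(y) \geq k/c$ and \texttt{NO} otherwise. This decision rule runs in polynomial time because $A$ does, and reading $m(y)$ is immediate.

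It remains to verify the two promised cases of the gap problem. In a \texttt{YES} instance we have $OPT(x) \geq k$, so the approximation guarantee gives $m(y) \geq OPT(x)/c \geq k/c$ and the rule correctly outputs \texttt{YES}. In a \texttt{NO} instance we have $OPT(x) < k/c$, so the feasibility bound gives $m(y) \leq OPT(x) < k/c$ and the rule correctly outputs \texttt{NO}. Instances with $OPT(x)$ in the gap interval $[k/c, k)$ carry no correctness requirement, so they need not be analyzed. The one point that must be handled with care—and the only real content of the argument—is aligning both inequalities with the single threshold $k/c$: the lower bound $m(y) \geq OPT(x)/c$ is what rescues the \texttt{YES} case, while it is the elementary feasibility inequality $m(y) \leq OPT(x)$, not the approximation ratio, that handles the \texttt{NO} case. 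Everything else is immediate, and the minimization version follows identically after reversing the inequalities and using the threshold $k\cdot c$.
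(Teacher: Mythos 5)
Your proof is correct and follows essentially the same route as the paper's: both run the hypothetical $c$-approximation algorithm, answer \texttt{YES} exactly when $m(y)\geq k/c$, and verify the two promise cases via $m(y)\geq OPT(x)/c$ for \texttt{YES} and the feasibility bound $m(y)\leq OPT(x)$ for \texttt{NO}. If anything, your write-up is slightly more careful than the paper's, since you explicitly flag that the \texttt{NO} case rests on feasibility rather than the approximation ratio and you note the minimization variant.
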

\begin{proof}
Suppose we have an efficient $c$-approximation algorithm for $\Pi$, implying that for any instance $x\in\Pi$, we can output a feasible solution $y$ such that $OPT(x)/c \leq m(y) \leq OPT(x)$. Then we let $A_k$ be an algorithm that retuns $\texttt{YES}$ if $m(y)\geq k/c$, and $\texttt{NO}$ otherwise, where $y$ is the solution returned by the approximation algorithm Then for the gap-problem, if $OPT(x)\geq k$, we have that $m(y)\geq k/c$ so the $A_k$ will output $\texttt{YES}$. On the other hand, if $OPT(x) < k/c$, then $A_k$ will output $\texttt{NO}$. Hence, $A_k$ is an efficient algorithm to decide the $c$-gap problem.
\end{proof}

While hardness of approximation results arise from various forms, most notably the PCP theorem, we can black-box the heavy machinery and prove our desired results using only strict reductions, which we define as follows. 

\begin{definition}
A \textbf{strict reduction} from problem $\Pi$ to problem $\Pi'$, is a functions $f$, such that $f:\Pi\rightarrow \Pi'$ maps problem instances of $\Pi$ to problem instances of $\Pi'$. $f$ must satisfy the following properties that for all $x\in \Pi$
\begin{enumerate}
    \item $\frac{|f(x)|}{|x|}\leq \alpha$, where $\alpha$ is a fixed constant
    \item $OPT_\Pi(x)=OPT_{\Pi'}(f(x))$
    
\end{enumerate}
\end{definition}

For which we can now state and prove the following useful proposition:
\begin{proposition} \label{prop:strict-reduction}
If $f,g$ are a strict reduction from optimization problem $\Pi$ to optimization problem $\Pi'$, and the $c$-gap problem is hard for $\Pi$, where $c$ is polynomial in the size of $|x|$, then the $c'$-gap problem is hard for $\Pi'$, where $c'\in \Theta(c)$.
\end{proposition}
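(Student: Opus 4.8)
The plan is to prove hardness of the $c'$-gap problem for $\Pi'$ by a direct reduction from the (assumed hard) $c$-gap problem for $\Pi$, using the instance map $f$ as the reduction and relying only on the two defining properties of a strict reduction (the solution map $g$ is needed for the search version but not for the promise/gap version). First I would argue the contrapositive at the level of algorithms: suppose there were an efficient algorithm $A'$ deciding the $c'$-gap problem for $\Pi'$. I would then build an algorithm $A$ for the $c$-gap problem on $\Pi$ that, on input $(x,k)$, computes $f(x)$ and returns $A'(f(x),k)$, keeping the threshold $k$ unchanged. Property 1, $|f(x)|/|x|\le \alpha$, together with poly-time computability of $f$, guarantees that $A$ runs in polynomial time whenever $A'$ does, so the existence of $A'$ would contradict the hardness of the $c$-gap problem for $\Pi$.

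The correctness of this reduction is the heart of the argument, and it rests on property 2, $OPT_\Pi(x)=OPT_{\Pi'}(f(x))$. On a \texttt{YES} instance of the $\Pi$ gap problem we have $OPT_\Pi(x)\ge k$, hence $OPT_{\Pi'}(f(x))\ge k$, which is exactly a \texttt{YES} instance for the $\Pi'$ gap problem, so $A'$ must answer \texttt{YES}. On a \texttt{NO} instance we have $OPT_\Pi(x) < k/c(|x|)$, hence $OPT_{\Pi'}(f(x)) < k/c(|x|)$; for $A'$ to be forced to answer \texttt{NO} I need this to imply $OPT_{\Pi'}(f(x)) < k/c'(|f(x)|)$, so it suffices that the gap factor chosen for $\Pi'$ satisfies $c'(|f(x)|)\le c(|x|)$ for every $x$.

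To arrange this while keeping $c'\in\Theta(c)$ I would define the $\Pi'$ gap factor as a rescaling of $c$, namely $c'(n') := c(n'/\alpha)$. Using property 1 in the form $|f(x)|/\alpha \le |x|$ and the monotonicity of $c$ (which we may assume without loss of generality for a polynomial gap factor), this gives $c'(|f(x)|)=c(|f(x)|/\alpha)\le c(|x|)$, exactly the inequality needed above. It then remains to check that $c'\in\Theta(c)$: since $c$ is a polynomial, scaling its argument by the fixed constant $\alpha$ changes its value by at most a constant factor (the leading term scales by $\alpha^{-\deg c}$ and asymptotically dominates), so $c(n'/\alpha)=\Theta(c(n'))$.

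The routine steps here are the \texttt{YES}/\texttt{NO} case analysis and the poly-time bookkeeping; the one place requiring care is the mismatch between $c$ being a function of $|x|$ and $c'$ being a function of $|f(x)|$. The main obstacle is therefore not the reduction itself but correctly transporting the gap factor across the change in instance size and verifying that the constant-factor blowup $\alpha$ in encoding size perturbs a polynomial gap only by a constant, which is precisely what licenses the conclusion $c'\in\Theta(c)$.
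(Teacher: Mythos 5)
Your proof is correct and follows essentially the same route as the paper: both define the rescaled gap function $c'(n):=c(n/\alpha)$, transport \texttt{YES}/\texttt{NO} instances through the optimum-preserving map $f$ (property 2), conclude by contraposition, and note that $c(n/\alpha)=\Theta(c(n))$ for polynomial $c$. In fact you are more careful than the paper on one point: the paper asserts the equality $c(|x|)=c'(|f(x)|)$, which only holds if $|f(x)|=\alpha|x|$ exactly, whereas you correctly use $|f(x)|\leq \alpha|x|$ together with (harmless) monotonicity of $c$ to obtain the one-sided bound $c'(|f(x)|)\leq c(|x|)$ --- which is precisely the direction the \texttt{NO} case needs, since it gives $OPT_{\Pi'}(f(x)) < k/c(|x|) \leq k/c'(|f(x)|)$. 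The one thing you omit is the case analysis over goals: your \texttt{YES}/\texttt{NO} analysis and the choice to keep the threshold $k$ unchanged are specific to the maximization-to-maximization case, while the paper also writes out the maximization-to-minimization case (the one it actually invokes, reducing \texttt{MAX-GRAD} to \texttt{MIN-LIP}), where the gap conventions flip and the threshold must be rescaled as $k=k'\cdot c(|x|)$ rather than passed through unchanged. Your argument adapts routinely to those cases, but as written it does not cover them, so strictly speaking it proves the proposition only for the max/max instantiation.
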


\begin{proof}
Suppose both $\Pi$ and $\Pi'$ are maximization problems, and the $c$-gap problem is hard for $\Pi$. We consider the case where $c$ is a function that takes as input the encoding size of instances of $\Pi$. We can define the function $c'(n):=c(n/\alpha)$ for all $n$. Hence $c(|x|)=c'(|f(x)|)$ for all $x\in \Pi$ by point 1 of the definition of strict reduction. Then for all $k$ and all $x\in \Pi$, the following two implications hold 
\[OPT_{\Pi'}(f(x)) \geq k \implies OPT_{\Pi}(x) \geq k
\]
\[OPT_{\Pi'}(f(x)) < \frac{k}{c'(f|x|)} \implies OPT_{\Pi}(x) < \frac{k}{c(|x|)}
\]
Where both implications hold because $OPT_{\Pi}(x)=OPT_{\Pi'}\left(f(x)\right)$. If the $c'$-gap problem were efficiently decidable for $\Pi'$, then the $c'$-gap problem would be efficiently decidable for $\Pi$.

If $\Pi$ is a maximization problem and $\Pi'$ is a minimization problem, then the following two implications hold:
\[
OPT_{\Pi'}(f(x)) \leq k' \implies OPT_\Pi(x) \leq k'
\]
\[ 
OPT_{\Pi'}(f(x)) > k'\cdot c'(|f(x)|) \implies OPT_\Pi(x) > k'\cdot c(|x|)
\]
Then letting $k=k'\cdot c(|x|)$ we have that solving the $c'$-gap problem for $\Pi'$ would solve the $c$-gap problem for $\Pi$.
The proofs for $\Pi,\Pi'$ both being minimization problems, or $\Pi$ being a minimization and $\Pi'$ being a maximization hold using similar strategies.
\end{proof}

\subsection{Proof of Theorem 4}
Now we return to ReLU networks and prove novel results about the inapproximability of computing the local Lipschitz constant of a ReLU network. 
Recall that we have defined ReLU networks as compositions of functions of the form :
\begin{align}
    \begin{split}
         f(x)&=c^T\sigma\left(Z_d(x)\right)
    \end{split}\quad\quad 
    \begin{split}
        Z_i(x) &= W_i\sigma\left(Z_{i-1}(x)\right) + b_i,
    \end{split}
\end{align}
where  $Z_0(x)=x$ and $\sigma$ is the elementwise ReLU operator. In this section, we only consider scalar-valued general position ReLU networks, $f:\mathbb{R}^n\rightarrow \mathbb{R}$. Towards the end of the proof we see that we show that the general position assumption is not needed for our construction. We can formulate the chain rule as follows:

\begin{proposition}\label{prop:path-prop}
If $x$ is contained in the interior of some linear region of a general position ReLU network $f$, then the chain rule provides the correct gradient of $f$ at $x$, where the $i^{th}$ coordinate of $\nabla f(x)$ is given by:
\[
\nabla f(x)_i = \sum\limits_{\Gamma \in Paths(i)}\Big(\prod\limits_{w_j\in \Gamma} w_j\Big)
\]
where $Paths(i)$ is the set of paths from the $i^{th}$ input, $x_i$, to the output in the computation graph, where the ReLU at each vertex is on, and $w_j$ is the weight of the $j^{th}$ edge along the path.
\end{proposition}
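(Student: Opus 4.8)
The plan is to split the statement into two essentially independent claims: first, that interiority of the linear region forces the ordinary (smooth) chain rule to be valid at $x$, so that the chain rule genuinely returns $\nabla f(x)$; and second, that the resulting gradient, written as a product of matrices, expands entrywise into exactly the claimed sum over active paths. Neither step is deep once the setup is clear, so the proof is mostly bookkeeping built on top of local affineness.

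First I would establish local smoothness. If $x$ lies in the interior of a linear region, then every preactivation $g_j(x)$ (the input to the $j$-th ReLU) is strictly nonzero: if some $g_j(x)=0$ then $x\in K_j$, which places $x$ on a ReLU kernel and hence on the boundary between linear regions, contradicting interiority. By continuity of each $g_j$ there is a neighborhood of $x$ on which the sign pattern of all preactivations is constant, so on this neighborhood each ReLU acts either as the identity (input positive) or as the zero map (input negative). Consequently $\sigma$ is differentiable at each relevant preactivation, the ambiguous value $\sigma'(0)$ never arises, and $f$ coincides with a fixed affine function near $x$. Thus $f$ is continuously differentiable at $x$ and the standard chain rule for compositions of smooth maps returns the true gradient.

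Next I would write that gradient as a matrix product. Letting $\Lambda_\ell(x)$ be the diagonal $0/1$ matrix whose entries indicate which neurons in layer $\ell$ are active at $x$, the local linearization $\sigma(z)=\Lambda_\ell(x)\,z$ converts the forward recursion of Equation~\ref{eq:relunet-forward} into a composition of affine maps, whose linear part is an alternating product of the weight matrices and these diagonal activation matrices:
\begin{equation}
\nabla f(x)^{T} = c^{T}\,\Lambda_{d}(x)\,W_{d}\,\Lambda_{d-1}(x)\,W_{d-1}\cdots \Lambda_{1}(x)\,W_{1}.
\end{equation}
This is precisely the backward recurrence recorded for $\nabla^\# f$ after Lemma~\ref{lemma:grad-ACS}, specialized (up to transposition and indexing) to the single constant sign pattern valid throughout the region. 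The path-sum formula then follows by expanding this product coordinatewise: the $i$-th entry is a sum, over all ways of choosing one neuron per layer, of the product of the connecting edge weights times the product of the corresponding diagonal entries of the $\Lambda_\ell(x)$. Each such choice is exactly a path $\Gamma\in Paths(i)$ from input coordinate $x_i$ to the output, its weight product is $\prod_{w_j\in\Gamma} w_j$, and the accompanying product of $\Lambda$-entries equals $1$ iff every neuron visited by $\Gamma$ is active and $0$ otherwise; discarding the vanishing terms leaves exactly the sum over paths all of whose vertex ReLUs are on.

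I expect the only genuine content to reside in the first step, namely pinning down that interiority of a linear region forces all preactivations to be nonzero and therefore yields local affineness; the matrix-product rewriting and its combinatorial expansion are transparent. I would also emphasize, as the surrounding text anticipates, that this argument invokes only interiority of a linear region and never the general-position hypothesis, which is why general position can be dropped from the construction that follows.
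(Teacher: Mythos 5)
The paper states this proposition without proof, so there is no official argument to compare against; your two-step plan --- interiority forces a locally constant strict sign pattern, hence local affineness and validity of the smooth chain rule, followed by the entrywise expansion of $c^T\Lambda_d(x)W_d\cdots\Lambda_1(x)W_1$ into a sum over active paths --- is the natural one, and the second step is correct, routine bookkeeping.

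The soft spot is in your first step, in the sentence ``if some $g_j(x)=0$ then $x\in K_j$, which places $x$ \ldots on the boundary between linear regions.'' That implication is not automatic, and --- contrary to your closing remark --- it is exactly where the general position hypothesis gets consumed. The paper's own example $I(x)=2x-\sigma(x)+\sigma(-x)$ shows why: if ``linear region'' means a maximal region on which $f$ is affine, then the entire real line is one such region, $x=0$ lies in its interior, both preactivations vanish there, and a chain-rule implementation can return $2\neq I'(0)$. This network fails general position precisely because $K_1\cap K_2=\{0\}$ is $0$-dimensional rather than empty, so interiority alone does not rescue the chain rule. Moreover, even assuming general position, a kernel can pass through the interior of a maximal affine region when the corresponding neuron has zero downstream weights (general position constrains kernels, not outgoing weights), so under that reading your claim that every preactivation is strictly nonzero is false; the proposition survives there only because the ambiguous $\sigma'(0)$ is multiplied by zero. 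The clean repair is to read ``linear region'' as the closure of a cell on which every neuron's activation pattern is constant: such a cell is a full-dimensional convex polyhedron on which each $g_j$ is affine with a fixed sign, so if $g_j$ vanished at an interior point it would vanish identically on the cell, forcing $K_j$ to contain a $d$-dimensional polytope and contradicting general position. With that definition and this one-line argument, your step 1 --- and hence the whole proof --- goes through, but the general position hypothesis cannot be dropped from it; the paper's later ``About General Position'' remark removes the assumption from the hardness construction by a different route (bounding what non--general-position points can contribute), not by asserting this proposition without it.
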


We define the following optimization problems:
\begin{definition}
$\texttt{MAX-GRAD}$ is an optimization problem, where the set of valid instances is the set of scalar-valued ReLU networks. The feasible solutions are the set of differentiable points $x\in \mathcal{X}$, which have cost $||\nabla f(x)||_1$. The goal is to maximize this gradient norm. 
\end{definition}

\begin{definition}
$\texttt{MIN-LIP}$ is an optimization problem where the set of valid instances is the set of piecewise linear neural nets. The feasible solutions are the set of constants $L$ such that $L\geq L(f)$. The cost is the identity function, and our goal is to minimize $L$. 
\end{definition}

Of course, each of these problems have decision-problem variants, denoted by $\texttt{MAX-GRAD}_{dec}$ and $\texttt{MIN-LIP}_{dec}$. We also remark that by Theorem \ref{thm:main-app}, and proposition \ref{prop:strict-reduction}, the trivial strict reduction implies that it is at least as hard to approximate $\texttt{MIN-LIP}$ as it is to approximate $\texttt{MAX-GRAD}$. For the rest of this section, we will only strive to prove hardness and inapproximability results for $\texttt{MAX-GRAD}$. 

To do this, we recall the definition of the maximum independent set problem:
\begin{definition}
$\texttt{MIS}$ is an optimization problem, where valid instances are undirected graphs $G=(V,E)$, and feasible solutions are $U\subseteq V$ such that for any $v_i,v_j\in U$, $(v_i, v_j) \not \in E$. The cost is the size of $U$, and the goal is to maximize this cost.
\end{definition}
Classically, it has been shown that $\texttt{MIS}$ is NP-hard to optimize, but also is one of the hardest problems to approximate and does not admit a deterministic polynomial time algorithm to solve the $O(|V|^{1-\epsilon})$-gap problem \cite{Zuckerman2007-dm}.

For ease of exposition, we rephrase instances of $\texttt{MIS}$ into instances of an equivalent problem which aims to maximize the size of consistent collections of locally independent sets. Given graph $G=(V,E)$, for any vertex $v_i\in V$, we let $N(v_i)$ refer to the set of vertices adjacent to $V_i$ in $G$. We sometimes will abuse notation and refer to variables by their indices, e.g., $N(i)$. We also refer to the degree of vertex $i$ as $d(v_i)$ or $d(i)$.
\begin{definition}
A \textbf{locally indpendent set} centered at $v_i$ is a $\{-1, +1\}$-labelling of the vertices $\{v_i\}\cup N(v_i)$ such that the label of $v_i$ is $+1$ and the label of $v_j\in N(v_i)$ is $-1$. Two locally independent sets are said to be \textbf{consistent} if, for every $v_j$ appearing in both locally independent sets, the label is the same in both locally independent sets. A \textbf{consistent collection} of locally independent sets is a set of locally independent sets that is pairwise consistent.
\end{definition}
Then we can define an optimization problem:
\begin{definition}
$\texttt{LIS}$ is an optimization problem, where valid instances are undirected graphs $G=(V,E)$, and feasible solutions are consistent collections of locally indpendent sets. The cost is the size of the collection, and the goal is to maximize this cost.
\end{definition}
It is obvious to see that there is a trivial strict reduction between $\texttt{MIS}$ and $\texttt{LIS}$. Indeed, any independent set defines the centers of a consistent collection of locally independent sets, and vice versa. As we will see, this is a more natural problem to encode with neural networks than $\texttt{MIS}$.

Now we can state our first theorem about the inapproximability of $\texttt{MAX-GRAD}$.

\begin{theorem}
Let $f$ be a scalar-valued ReLU network, not necessarily in general position,  taking inputs in $\mathbb{R}^d$. Then assuming the exponential time hypothesis, there does not exist a polynomial-time approximation algorithm with ratio $\Omega(d^{1-c})$ for computing $L^\infty(f, \mathcal{X})$ and $L^1(f, \mathcal{X})$ for any constant $c >0$. 
\end{theorem}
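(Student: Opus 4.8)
The plan is to construct a \emph{strict reduction} from \texttt{MIS} (equivalently \texttt{LIS}) to \texttt{MAX-GRAD}, and then to transfer the resulting inapproximability to $L^\infty$ and $L^1$. Recall from Theorem~\ref{thm:main-app} that for a scalar-valued network $L^\infty(f,\mathcal{X})=\sup_{x}\|\nabla f(x)\|_1$ and $L^1(f,\mathcal{X})=\sup_x\|\nabla f(x)\|_\infty$, since the $\ell_1$ and $\ell_\infty$ norms are mutually dual; thus $L^\infty$ is exactly the objective of \texttt{MAX-GRAD}. Given the $\Omega(|V|^{1-\epsilon})$-gap hardness of \texttt{MIS} under ETH \cite{Zuckerman2007-dm} and Proposition~\ref{prop:strict-reduction}, it suffices to encode an arbitrary graph $G=(V,E)$ as a ReLU network $f$ whose maximal gradient norm equals the maximum independent set size, with the encoding size of $f$ linear in $|V|+|E|$ so that the gap ratio is preserved up to constants and $d\in\Theta(|V|)$.

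For the $L^\infty$ case, I would use one input coordinate $x_i$ per vertex, with the sign of $x_i$ encoding whether $v_i$ is selected, and build the gadget
\[
    f(x)=\sum_{i\in V}\sigma\!\left(\sigma(x_i)-M\sum_{j\in N(i)}\sigma(x_j)\right),
\]
for a large constant $M$. A direct computation using Proposition~\ref{prop:path-prop} shows that $\partial f/\partial x_i=1$ exactly when $x_i>0$ and $x_j<0$ for every neighbor $j\in N(i)$, and $0$ otherwise; the cross-terms $\partial g_k/\partial x_i$ for $k\in N(i)$ vanish because whenever $x_i>0$ the outer ReLU of the $k$-th gadget is forced off. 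Hence $\|\nabla f(x)\|_1$ counts the vertices that are selected while all of their neighbors are unselected. Such ``counting'' vertices always form an independent set, and conversely setting exactly an independent set $S$ to positive coordinates makes all of $S$ count; therefore $\sup_x\|\nabla f(x)\|_1$ equals the maximum independent set size, giving a strict reduction and hence $\Omega(d^{1-c})$-inapproximability of $L^\infty$.

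For $L^1$ the objective is $\sup_x\|\nabla f(x)\|_\infty$, so the count must be funneled into a single coordinate rather than spread across many. I would introduce one shared input $x_0$ feeding every gadget, replace each summand by $\sigma\!\left(x_0+\sigma(x_i)-M\sum_{j\in N(i)}\sigma(x_j)\right)$, and restrict $\mathcal{X}$ to a bounded box (which the theorem permits, since $\mathcal{X}$ is the neighborhood of a bounded polytope). Then $\partial f/\partial x_0$ equals the number of active gadgets, and the box bound on $x_0$ together with the big-$M$ penalty forces a gadget to be active precisely when its vertex is a consistent locally independent set; one checks the remaining partials $\partial f/\partial x_j$ cannot exceed this count, so $\|\nabla f\|_\infty$ again equals the maximum independent set size. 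Finally, since $L^\infty$ and $L^1$ are the objectives of \texttt{MAX-GRAD} and its $\ell_\infty$-variant, and \texttt{MIN-LIP} is at least as hard to approximate as \texttt{MAX-GRAD} by the trivial strict reduction already noted, the claimed hardness follows.

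The main obstacle I anticipate is the $L^1$ construction: ensuring the funnel coordinate $x_0$ faithfully counts independent-set membership without admitting a ``cheating'' maximizer that inflates the gradient (e.g.\ by driving $x_0$ large), which is exactly why the domain must be bounded and the constant $M$ chosen relative to the box radius. Verifying that the off-diagonal partial derivatives never dominate, and that the reduction remains strict (encoding size linear, optimal values exactly equal) across both norms, is the delicate bookkeeping to which the plan reduces.
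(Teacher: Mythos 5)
Your high-level frame is exactly the paper's: identify $L^\infty(f,\mathcal{X})=\sup_x\|\nabla f(x)\|_1$ via duality, give a strict reduction from \texttt{MIS} (through locally independent sets), and transfer the $\Omega(|V|^{1-\epsilon})$-gap hardness of \cite{Zuckerman2007-dm} via Proposition~\ref{prop:strict-reduction}. But your gadget is genuinely broken, and the failure is precisely the ``cheating maximizer'' you flag for the $L^1$ case --- it already kills the $L^\infty$ case. In $f(x)=\sum_i \sigma\bigl(\sigma(x_i)-M\sum_{j\in N(i)}\sigma(x_j)\bigr)$, your claim that $x_i>0$ forces off the outer ReLU of every neighboring gadget is false: take $x_k$ very large and every neighbor coordinate $x_j$ small and positive; then gadget $k$'s input $\sigma(x_k)-M\sum_{j\in N(k)}\sigma(x_j)$ is positive, the gadget is on, and each neighbor coordinate picks up a derivative contribution of $-M$ with $\sigma'(x_j)=1$, so $\|\nabla f(x)\|_1\geq 1+M\,d(k)$. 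Even with $M=1$ this breaks strictness: on the star $K_{1,n-1}$ the cheat yields $n$ while the maximum independent set is $n-1$; with the large $M$ your penalty needs, the optimum inflates by a factor of $M$. So $OPT_{\texttt{MAX-GRAD}}(f)\neq \texttt{MIS}(G)$, and the same unbounded $-M$ cross-partials defeat the funneled $x_0$ variant of the $L^1$ construction inside any box, since $|\partial f/\partial x_j|$ can then dominate the count carried by $\partial f/\partial x_0$.

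The missing idea, which is the heart of the paper's construction, is to eliminate big-$M$ entirely by \emph{clipping} the inputs: each $x_i$ is passed through $\psi(x)=\sigma(x+1)-\sigma(x-1)-1$, which takes values in $[-1,1]$ with $|\psi'|\leq 1$ and saturates. The gate $I_i(x)=\psi(x_i)-\sum_{j\in N(i)}\psi(x_j)-(d(i)+1-\epsilon)$ can only be positive if every one of its $d(i)+1$ clipped terms exceeds $1-\epsilon$, so activity forces $x_i>1-\epsilon$ and $x_j<-1+\epsilon$ for all neighbors --- saturation makes your large-$x_k$ cheat impossible, with all weights $\pm 1$. The second device you are missing is the output normalization $h(x)=\sum_i\sigma(I_i(x))/(d(i)+1)$, which caps each active gadget's total $\ell_1$ gradient mass at one; together with a sign-consistency argument ($\nabla h(x)_i\cdot x_i\geq 0$, so overlapping gadget contributions never cancel), this gives $\|\nabla h(x)\|_1\leq|\mathcal{I}(x)|$ globally and equality at an explicit witness, whence $OPT$ equals the \texttt{MIS} value exactly and the reduction works over $\mathcal{X}=\mathbb{R}^d$ with no bounded box. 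For $L^1$ the paper uses your funnel idea, but again clipped: each gate receives $(d(i)+1)\psi(x_{n+1})$ and the count is read off $\partial h/\partial x_{n+1}$, so boundedness comes from $\psi$, not from the domain. Your plan becomes correct once you replace the raw $\sigma(x_i)$ and big-$M$ penalties with the unit-weight clipped gadget and the degree normalization.
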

\begin{proof}
We will prove this first for $L^\infty(f, \mathcal{X})$ and then slightly modify the construction to prove this for $L^1(f, \mathcal{X})$. We will throughout take $\mathcal{X}:=\mathbb{R}^d$. The key idea of the proof is to, given a graph $G=(V,E)$ with $|V|=:n$, encode a neural network $h$ with $n$ inputs, each representing the labeling value. We then build a neuron for each possible locally independent set, where the neuron is 'on' if and only if the labelling is close to a locally independent set. And then we also ensure that each locally independent set contributes $+1$ to the norm of the gradient of $f$.

A critical gadget we will use is a function $\psi(x): \mathbb{R}\rightarrow \mathbb{R}$ defined as follows:

\begin{figure}[h]

  \begin{minipage}{.5\textwidth}
    \centering
    \includegraphics[scale=0.7]{figures/lipmip_complexity1.png}
  \end{minipage}%
    \begin{minipage}{.5\textwidth}
    \begin{equation}
\psi(x) = \begin{cases}
-1 &\; x\in (-\infty, -1]\\
x &\; x\in [-1, 1]\\
1 &\; x\in [1, +\infty)
\end{cases}    
\end{equation}
  \end{minipage}
\end{figure}

which is implementable with affine layers and ReLU's as:
$\psi(x) = \sigma(x+1) - \sigma(x-1) - 1$.
We are now ready to construct our neural net. For every vertex $v_i$ in $V$, we construct an input to the neural net, hence $f:\mathbb{R}^n\rightarrow \mathbb{R}$. We denote the $i^{th}$ input to $f$ as $x_i$. The first order of business is to map each $x_i$ through $\psi(\cdot)$, which can be done by two affine layers and one ReLU layer. e.g., we can define $\psi(x_i)=A_1(ReLU(A_0(x_i))$ where 

\[A_0(x) := \left[
\begin{array}{c}
1 \\
\hline
1
\end{array}
\right] x + \left[
\begin{array}{c}
\mathbf{1} \\
\hline
\mathbf{-1}
\end{array}
\right]
\]

\[A_1(z) :=
 \left[\begin{array}{c|c} 1 &
 -1\end{array}\right]z -1
\]
Next we define the second layer of ReLU's, which has width $n$, and each neuron represents the status of a locally indpendent set. We define the input to the $i^{th}$ ReLU in this layer as $I_i$ with

\begin{equation}
    I_i(x) := \psi(x_i) -\sum_{j\in N(i)} \psi(x_j) - (d+1 - \epsilon)
\end{equation}
for some fixed-value $\epsilon$ to be chosen later. Finally, we conclude our construction with a final affine layer to our neural net as 
\begin{equation}
    h(x) := \sum_{i=1}^n \frac{\sigma\left(I_i(x)\right)}{d(i)+1}
\end{equation}

Let $\mathcal{I}(x)$ denote the set of indices of ReLU's that are `on' in the second-hidden layer of $h$: $\mathcal{I}(x):= \{i\;\;|\;\; I_i(x) > 0\}$. Now we make the following claims about the structure of $h$.
\begin{figure}[b]
    \centering
    \includegraphics[scale=0.5]{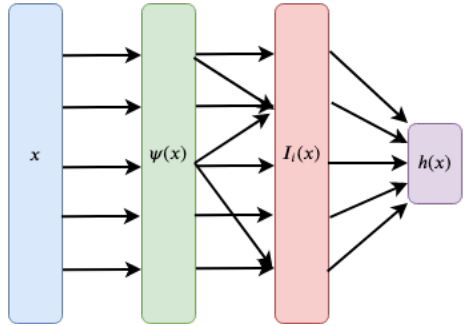}
    \caption{Complete construction of a neural network $h$ that is the reduction from $\texttt{LIS}$, such that the supremal gradient of $h$ corresponds to the maximum locally indendent set. The first step is to map each $x_i$ to $\psi(x_i)$, then to construct $I_i(x)$. Finally, we route each $\sigma(I_i(x))$ to the output.}
    \label{fig:main-construction}
\end{figure}

\begin{claim}\label{claim:complexity-claim-1}
For every $i$, if $i\in \mathcal{I}(x)$ then $x_i > 1-\epsilon$ and $x_j < -1+\epsilon$ for all $j\in N(i)$. In addition, $\mathcal{I}(x)$ denotes the centers of a consistent collection of locally independent sets. 
\end{claim}
\begin{proof}
Indeed, if $I_i(x) >0$, then the sum of $(d(i)+1)$ $\psi$-terms is greater than $1-\epsilon$. As each $\psi$-term is in the range $[-1, 1]$, each $\psi$-term must individually be at least $1-\epsilon$. And $\psi(x_i) \geq 1-\epsilon$ implies $x_i \geq 1-\epsilon$. Similarly, $-\psi(x_j) \geq 1-\epsilon$ implies that $x_j \leq -1 + \epsilon$. Now consider any $i_1, i_2$ in $\mathcal{I}(x)$. Then the pair of locally independent sets centered at $v_{i_1}$ and $v_{i_2}$ is certainly consistent. 
\end{proof}
\begin{claim}\label{claim:complexity-claim-2} For any $x$ such that $h$ is differentiable at $x$, $\nabla h(x)_i \cdot x_i \geq 0$.
\end{claim}
\begin{proof}
We split into cases based on the value of $x_i$ and rely on Claim \ref{claim:complexity-claim-1}. Suppose $x_i\in (-1+\epsilon, 1-\epsilon)$, then we have $I_j < 0$ for any $j$ in $\{i\}\cup N(i)$ and hence $\nabla h(x)_i = 0$. If $x_i \geq 1-\epsilon$, then every $j\in N(i)$ has $I_j < 0$ and hence by Proposition \ref{prop:path-prop}, the only contributions to the $\nabla h(x)_i$ can be from paths that route from $x_i$ to the output through $I_i$. Hence 
\begin{equation}
    \nabla h(x)_i = \dfrac{\delta h}{\delta I_i} \cdot \dfrac{\delta I_i}{\delta x_i},
\end{equation}
where both terms are nonnegative and hence so is $\nabla h(x)_i$. Finally, if $x_i \leq -1 + \epsilon$, then the only contributions to $\nabla h(x)_i$ come from paths that route through $I_j$ for $j \in N(i)$, hence 
\begin{equation}
    \nabla h(x)_i = \sum_{j\in N(i)} \dfrac{\delta h}{\delta I_j}\cdot \dfrac{\delta I_j}{\delta x_i}
\end{equation}
where the first term is nonnegative and the second term is always nonpositive. We remark that because we have assumed $h$ to be differentiable at $x$, the chain rule provides correct answers, by Rademacher's theorem.
\end{proof}
\begin{claim}\label{claim:complexity-claim3}
For any $x$, let $\mathcal{I}(x)$ be defined as above,  $\mathcal{I}(x) := \{i\;\;|\;\; I_i(x) > 0\}$. Then $||\nabla h(x)||_1 \leq |\mathcal{I}(x)|$ and for every $x$. In addition, for every $x$, there exists a $y$ with $||\nabla h(y)||_1 \geq |\mathcal{I}(x)|$.
\end{claim}
\begin{proof}
To show the first part, observe that 
\begin{equation}\label{eq:claim3chain}
h(x)=\sum_{i=1}^n \frac{\sigma\left(I_i(x)\right)}{d(i)+1}= \sum_{i\in \mathcal{I}(x)} \frac{I_i(x)}{d(i)+1} \implies
\nabla h(x) = \sum_{i\in \mathcal{I}(x)} \frac{\nabla I_i(x)}{d(i)+1}
\end{equation}
hence
\begin{equation}\label{eq:claim3-pt1}
||\nabla h(x)||_1 \leq \sum_{i \in \mathcal{I}(x)}^n \frac{1}{d(i)+1} ||\nabla I_i(x)||_1
\end{equation}
 
and 
\begin{equation}
      I_i(x) := \psi(x_i) -\sum_{j\in N(i)} \psi(x_j) - (d+1 - \epsilon) \implies \nabla I_i(x) = \nabla \psi(x_i) -\sum_{j\in N(i)} \nabla \psi(x_j)
\end{equation}
hence 
\begin{equation}\label{eq:claim3-pt2}
    ||\nabla I_i(x)||_1 \leq ||\nabla \psi(x_i)||_1 +\sum_{j\in N(i)} ||\nabla \psi(x_j)||_1 \leq d(i)+1
\end{equation}
where the final inequality follows because $||\nabla \psi(x_i)||_1\leq 1$ everywhere it is defined. Combining equations \ref{eq:claim3-pt1} and \ref{eq:claim3-pt2} yields that $||\nabla h(x)||_1\leq |\mathcal{I}(x)|$. 

On the other hand, suppose $\mathcal{I}(x)$ is given. Then we can construct $y$ such that $\mathcal{I}(y)=\mathcal{I}(x)$ and $||\nabla h(y)||_1 = |\mathcal{I}(x)|$. To do this, set $y_i=1-\frac{\epsilon}{2n}$ if $i\in \mathcal{I}(x)$ and $y_i=-1+\frac{\epsilon}{2n}$ otherwise. Then note that $\mathcal{I}(x)=\mathcal{I}(y)$ and for every $i\in \mathcal{I}(y)$ 
\[
\nabla I_i(y)_k = \begin{cases}
+1 &\; k = i\\
-1 &\; k \in N(i)\\
0 &\; \text{ otherwise }
\end{cases}
\]
and hence by Claim 2, we can replace the inequalities in equations \ref{eq:claim3-pt1} and \ref{eq:claim3-pt2} we have that 
\[
||\nabla h(y)||_1 = \sum_{i\in \mathcal{I}(x)} \frac{1}{d(i)+1} ||\nabla I_i(y)||_1 = |\mathcal{I}(x)|
\]
as desired.\\

\end{proof}
To demonstrate that this is indeed a strict reduction, we need to define functions $f$ and $g$, where $f$ maps instances of $\texttt{LIS}$ to instances of $\texttt{MAX-GRAD}$, and $g$ maps feasible solutions of $\texttt{MAX-GRAD}$ back to $\texttt{LIS}$. Clearly the construction we have defined above is $f$. The function $g$ can be attained by reading off the indices in $\mathcal{I}(x)$. 

To demonstrate that the size of this construction does not blow up by more than a constant factor, observe that by representing weights as sparse matrices, the number of nonzero weights is a constant factor times the number of edges in $G$. Indeed, encoding each $\psi$ in the first layer takes $O(1)$ parameters for each vertex in $G$. Encoding $I_i(x)$ requires only $O(d(i))$ parameters for each $i$, and hence $2|E|$ parameters total. Assuming a RAM model where numbers can be represented by single atomic units, and $\epsilon$ is chosen to be $(n+2)^{-1}$, this is only a constant factor expansion. 

The crux of this argument is to demonstrate that $OPT_{\texttt{MAX-GRAD}}(f(q))=OPT_{\texttt{LIS}}(q)$. It suffices to show that for every locally independent set $L$, there exists an $x$ such that $||\nabla h(x)||_1 \geq k$, and for every $y$, there exists a locally independent set $L'$ such that $|L'|\geq ||\nabla h(y)||_1$. 

Suppose $L$ is a consistent collection of locally independent sets, with $|L|=k$. Any consistent collection of locally independent sets equivalently defines a labelling of each vertex of $G$, where $l_i$ denotes the label of vertex $v_i$:  $l_i:=+1$ if the locally independent set centered at $v_i$ is contained in the collection, and $l_i:=-1$ otherwise.  Then one can construct an $x$ such that $||\nabla h(x)||_1 \geq k$. Indeed, for every $v_i$ with label $l_i$, set $x_i=l_i(1-\frac{\epsilon}{2n})$. By Claim \ref{claim:complexity-claim-1}, under this $x$, $\mathcal{I}_i(x) \geq 0$ for every $i$ such that $l_i =+1$, $I_i(x) > 0$. Then $|\mathcal{I}(x)| = k$ and by Claim \ref{claim:complexity-claim3} there exists a $y$ such that $||\nabla h(y)||_1 \geq k$.
 
 On the other hand, suppose the maximum gradient of $h$ is $k$. Then there exists an $x$ that attains this and by Claim \ref{claim:complexity-claim3}, $|\mathcal{I}(x)| \geq k$. By Claim \ref{claim:complexity-claim-1}, we have that $\mathcal{I}(x)$ denotes the centers of a consistent collection of locally independent sets.

\textbf{Approximating the maximum $\ell_\infty$ norm:} We only slightly modify our construction of the reduction for $\texttt{MAX-GRAD}$ to show inapproximability of $L^1(f, \mathcal{X})$. Namely, we add a new input so $h:\mathbb{R}^{n+1}\rightarrow \mathbb{R}$, called $x_{n+1}$. Then we map $x_{n+1}$ through $\psi$ like all the other indices, but instead redefine 
\[
I_i(x) := \psi(x_i) + (d(i)+1)\psi(x_{n+1}) - \sum_{j\in N(i)} \psi(x_j) - 2(d(i) + 1 - \epsilon)
\]
\[
H(x):= \sum_{i=1}^n \frac{\sigma\left(I_i(x)\right)}{2(d(i)+1)}
\]
The rest of this contsruction is nearly identical to the preceding construction, with the exception being that $||\nabla h(x)||_1$ can be replaced by $\dfrac{\delta h(x)}{\delta x_{n+1}}$ throughout as an indicator to count the size of $\mathcal{I}(x)$.

\textbf{About General Position:} Finally we note that this proof is valid even when you do not assume the network be in general position. Observe that the optimal value of the gradient norm is attained at a point in the strict interior of some linear region. Next observe that a network not being in general position may only increase the optimal objective value over what is reported by the Jacobians of the linear regions of $f$, but this cannot happen by claim \ref{claim:complexity-claim3}. Thus general position-ness has no bearing on this construction. 

\end{proof}

As an aside, we note that the strict reduction demonstrates that $\texttt{MAX-GRAD}_{dec}$ is NP-complete, which implies that $\texttt{MIN-LIP}_{dec}$ is CoNP-complete.

\newpage
% Section 4: Proofs for LipMIP
% Construction of LipMIP
% -- ACS of Neural networks 
% -- ACS MIP-encodability 
% -- Abstract Interpretation 
\section{LipMIP Construction}\label{app-lipmip-construction}
In this appendix we will describe in detail the necessary steps for LipMIP construction. In particular, we will present how to formulate the gradient norm $||\nabla^\# f||_*$ for scalar-valued, general position ReLU network $f$ as a composition of affine, conditional and switch operators. Then we will present the proofs of MIP-encodability of each of these operators. Finally, we will describe how the global upper and lower bounds are obtained using abstract interpretation. 

\subsection{MIP-encodable components of ReLU networks}
Our aim in this section is to demonstrate how $||\nabla^\# f||_*$ may be written as a composition of affine, conditional, and switch operators. For completeness, we redefine these operators here: 

\textbf{Affine operators} $A:\mathbb{R}^n\rightarrow \mathbb{R}^m$ are defined as 
\begin{equation}
    A(x)=Wx+b,
\end{equation}
for some fixed matrix $W$ and vector $b$.

The \textbf{conditional operator} $C:\mathbb{R}\rightarrow\mathcal{P}(\{0,1\})$ is defined as 
\begin{equation}\label{eq:def-conditional}
    C(x) = \begin{cases}
            \{1\} &\text{ if } x>  0\\
            \{0\} &\text{ if } x < 0 \\
            \{0, 1\} &\text{ if } x = 0
           \end{cases}
\end{equation}

The \textbf{switch operator} $S:\mathbb{R}\times\{0, 1\}\rightarrow \mathbb{R}$ is defined as 
\begin{equation}
    S(x, a) = x\cdot a.
\end{equation}
We will often abuse notation, and let conditional and switch operators apply to vectors, where the operator is applied elementwise. Now we recover Lemma 1 from section 5 of the main paper.

\begin{lemma}\label{lemma:app-lemma-acs}
Let $f$ be a scalar-valued general position ReLU network. Then $f(x)$, $\nabla^\#f(x)$, $||\cdot||_1$, and $||\cdot||_\infty$ may all be written as a composition of affine, conditional and switch operators.
\end{lemma}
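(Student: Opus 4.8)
The plan is to show each of the four objects---$f(x)$, $\nabla^\#f(x)$, $\|\cdot\|_1$, and $\|\cdot\|_\infty$---can be expressed purely through affine, conditional, and switch operators. Since a composition of MIP-encodable building blocks is the goal, I would handle each object separately, reusing the conditional operator as the bridge between continuous pre-activations and the binary indicators needed to gate the switch operators.

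**Forward pass $f(x)$.** First I would unroll the recurrence in Equation \ref{eq:relunet-forward}. The only nonlinearity is the ReLU $\sigma$, so it suffices to write $\sigma$ itself as a composition of the three primitives. The key observation is that $\sigma(z) = S(z, a)$ where $a = C(z)$: the conditional operator reads off whether $z>0$ (producing the binary gate $a$), and the switch operator multiplies $z$ by that gate, recovering $\max(z,0)$ exactly (the ambiguous case $z=0$ is harmless since $z\cdot a = 0$ for either choice of $a\in\{0,1\}$). The affine maps $W_i(\cdot)+b_i$ are themselves affine operators, so alternating affine operators with this $C$-then-$S$ encoding of $\sigma$ reconstructs $f$.

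**Backward pass $\nabla^\#f(x)$.** I would use the backward recurrence given just after Lemma \ref{lemma:grad-ACS}, namely $Y_i(x)=W_{i+1}^T\,\mathrm{Diag}(\Lambda_i(x))\,Y_{i+1}(x)$ with $Y_{d+1}=c$ and $\nabla^\#f(x)=W_1^T Y_1(x)$. The gating vector $\Lambda_i(x)$ is exactly the conditional operator $C$ applied to the pre-activation entering layer $i$ (which is an affine function of the already-computed forward-pass quantities, hence available in the composition). The crucial identity is $\mathrm{Diag}(\Lambda_i(x))\,Y_{i+1}(x) = S(Y_{i+1}(x),\Lambda_i(x))$, applied elementwise, which is precisely a switch operator gated by the same binaries $\Lambda_i$ produced by $C$. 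Left-multiplication by $W_{i+1}^T$ is an affine operator. Thus the entire backward recurrence is an alternation of affine and switch operators, with the switch gates supplied by conditional operators acting on the forward pre-activations. Because $f$ is in general position, Theorem 2 guarantees $\nabla^\#f(x)=\delta_f(x)$, so this composition faithfully captures the generalized Jacobian rather than a spurious chain-rule artifact.

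**The norms.** For $\|v\|_\infty=\max_i|v_i|$ I would write $|v_i|=\sigma(v_i)+\sigma(-v_i)$ (each a $C$-then-$S$ composition as above, preceded by the affine map $v_i\mapsto -v_i$), and realize the pairwise maximum via $\max(a,b)=a+\sigma(b-a)$, again an affine-then-$\sigma$ pattern, iterated over the coordinates. For $\|v\|_1=\sum_i|v_i|$ the absolute values are encoded identically and summation is a single affine operator.

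**The main obstacle** I anticipate is not any single encoding---each reduces to the $\sigma = S(\cdot, C(\cdot))$ trick---but rather ensuring the conditional operators in the backward pass and the norm read off the \emph{correct} sign information, i.e. that the binary gates $\Lambda_i$ genuinely match the activation pattern determining the linear region. This is exactly where the general-position hypothesis and Theorem 2 do the heavy lifting: they certify that the chain-rule gates $\nabla^\#f$ coincide with the true generalized Jacobian, so the composition computes the quantity we intend. The remaining bookkeeping (MIP-encodability of the individual operators) is deferred to Lemma \ref{lemma:acs-mip-encodable}.
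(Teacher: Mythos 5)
Your proposal is correct and follows essentially the same route as the paper's proof: encoding $\sigma(z)=S(z,C(z))$ for the forward recurrence, using the identity $\mathrm{Diag}(\Lambda_i(x))Y_{i+1}(x)=S(Y_{i+1}(x),\Lambda_i(x))$ for the backward recurrence, and building the norms from $|z|=S(z,C(z))+S(-z,C(-z))$ together with $\max(a,b)=a+S(b-a,C(b-a))$. Your appeal to Theorem 2 and general position is a harmless extra remark rather than a needed step---the lemma itself only concerns expressibility of $\nabla^\#f$ as a composition, and the paper's proof does not invoke the general-position hypothesis.
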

\begin{proof}

We recall that $f$ is defined recursively like:
\begin{equation}
\begin{split}
    f(x)=c^T\sigma\left(Z_d(x)\right)
\end{split}\quad\quad 
\begin{split}
    Z_i(x) = W_i\sigma\left(Z_{i-1}(x)\right) + b_i
\end{split}\quad\quad
    Z_0(x)=x
\end{equation}

It amounts to demonstrate how $Z_i(x)$ may be computed as a composition of affine, conditional and switch operator. Since $\sigma(x)=S(x,C(x))$, one can write, $Z_1(x)=W_iS(x, C(x))+b_i$. Letting $\Lambda_i(x):=C(Z_i(x))$ and $A_i(x):=W_i(x)+b_i$, one can write $Z_i (x)=A_i\circ S\left(Z_i(x), \Lambda_i(x))\right)$. Since $f(x)$ is an affine operator applied to $Z_d(x)$, $f(x)$ can certainly be encoded using only affine, switch, and conditional operators. 

To demonstrate that $\nabla f(x)$ may also be written as such a composition, we require the same definition to compute $Z_i(x)$ as above. Then by the chain rule, we have that 
\begin{equation}
\begin{split}
    \nabla^\# f(x) = W_1^TY_1(x) 
\end{split}\quad\quad
\begin{split}
    Y_i(x) = W_{i+1}^T\text{Diag}(\Lambda_i(x))Y_{i+1}(x) 
\end{split}\quad\quad 
\begin{split}
    Y_{d+1}(x) = c
\end{split}
\end{equation}
As the $\nabla^\# f(x)$ is an affine operator applied to $Y_1(x)$, and $Y_{d+1}(x)$ is constant, we only need to show that $Y_i(x)$ may be written as a composition of affine, conditional, and switch operators. This follows from the fact that \begin{equation}
    \text{Diag}(\Lambda_i(x))Y_{i+1}(x)  = S(Y_{i+1}(x), \Lambda_i(x))
\end{equation} 
Then letting $A_i^T(x):=W_i^Tx$ we have that $Y_i(x)=A_i^T\circ(S\left(Y_{i+1}(x), \Lambda_i(x)\right)$. 
Hence $\nabla^\# f(x)$ may be encoded as a composition of affine, conditional, and switch operators. 

All that is left is to show that $||\cdot||_1,||\cdot||_\infty$ may be encoded likewise. For each of these, we require $|\cdot|$ which can equivalently be written $|x|=\sigma(x)+\sigma(-x)$, and hence $|x|=S(x,C(x)) + S(-x,C(-x))$. $||x||_1$ then is encoded as the sum of the elementwise sum over $|x_i|$. $||\cdot||_\infty$ requires the $\max(\dots)$ operator. To encode this, we see that $\max(x_1,\dots)=\max(x_1,\max(\dots))$ and $\max(x,y)=x + \sigma(y-x)=x + S(y-x,C(y-x))$.
\end{proof}

\subsection{MIP-encodability of Affine, Conditional, Switch:} 
Here we will explain the MIP-encodability each of the affine, conditional, and switch operators. For completeness, we copy the definition of MIP-encodability:

\begin{definition}
We say that a function $g$ is \textbf{MIP-encodable} if, for every mixed-integer polytope $M$, the image of $M$ mapped through $g$ is itself a mixed-integer polytope.
\end{definition}

We now prove Lemma 2 from the main paper:
\begin{lemma}\label{lemma:app-lemma-mipencode}
Let $g$ be a composition of affine, conditional, and switch operators, where global lower and upper bounds are known for each input to each element of the composition. Then $g$ is a MIP-encodable function.
\end{lemma}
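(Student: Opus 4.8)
The plan is to reduce the lemma to two facts: first, that MIP-encodability is preserved under composition, and second, that each of the three primitive operators is individually MIP-encodable. The first fact is almost immediate from the definition: if $g_1$ and $g_2$ each carry mixed-integer polytopes to mixed-integer polytopes, then for any mixed-integer polytope $M$ the image $g_2(g_1(M))$ is the image under $g_2$ of the mixed-integer polytope $g_1(M)$, and is therefore again a mixed-integer polytope. Iterating this over the finitely many operators that make up $g$ shows that $g$ is MIP-encodable as soon as each primitive is. So the real work lies in handling the affine, conditional, and switch operators one at a time.

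The affine case has already been treated in the main text: given $M=\{(x,a)\mid Ax+Ba\leq c\}$, the operator $y=Dx+e$ is encoded by lifting to a new continuous variable $y$ and appending the pair of inequalities $y\leq Dx+e$ and $y\geq Dx+e$, adding only one variable and a constant number of constraints. It remains to give big-M encodings for the conditional and switch operators, and this is precisely where the hypothesis on known finite global bounds $l\leq x\leq u$ for each input is used.

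For the conditional operator $C(x)$, I would introduce a fresh binary variable $a$ together with the two constraints $x\leq u\,a$ and $x\geq l\,(1-a)$. When $x>0$ the first forces $a=1$; when $x<0$ the second forces $a=0$; and when $x=0$ both constraints leave $a$ free in $\{0,1\}$, exactly reproducing the set-valued output $\{0,1\}$ of $C$ at the origin. For the switch operator $S(x,a)=x\cdot a$, I would linearize the bilinear product with the standard four inequalities $y\leq u\,a$, $y\geq l\,a$, $y\leq x-l(1-a)$, and $y\geq x-u(1-a)$: setting $a=1$ collapses these to $y=x$, while $a=0$ forces $y=0$, matching $S$ in both cases. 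In each case only one new variable and a constant number of linear inequalities are added, so the encoding grows by at most a constant factor per operator and the full program remains efficiently encodable.

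The main obstacle, and the reason the hypothesis of the lemma is indispensable, is the finiteness of the global bounds $l,u$: the correctness of both big-M encodings rests on these being valid and finite, since an unbounded input would make the conditional constraints vacuous and would prevent an exact linearization of the switch product. The remaining care is in verifying that the encodings faithfully reproduce the set-valued behavior of the conditional at $x=0$ rather than an over- or under-approximation, and in confirming that propagating the bounds through the composition (as described for LipMIP) does supply a valid finite pair $(l,u)$ for every intermediate input.
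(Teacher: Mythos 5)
Your proof is correct and follows essentially the same route as the paper's: reduce to MIP-encodability of each primitive operator, encode the affine map by a lifted equality, and use big-$M$ constraints built from the known bounds $l \le x \le u$ for the conditional and switch operators, your four switch inequalities being identical to the paper's. The only divergence is the conditional operator, where your pair $x \le u\,a$, $x \ge l\,(1-a)$ is the standard encoding and exactly reproduces the set-valued output $\{0,1\}$ at $x=0$, whereas the paper encodes $a=1 \Leftrightarrow x\ge 0$ via $x \ge (a-1)\,u$ and $x \le a\,(1-l)-1$, which as printed is flawed (e.g.\ it leaves no feasible $a$ for $x\in(-1,0)$), so your variant is, if anything, the cleaner and more faithful one.
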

\begin{proof}
It suffices to show that each of the primitive operators are MIP-encodable. This amounts to, for each operator $g$, to define a system of linear inequalities $\Gamma(x,x')$ which is satisfied if and only if $g(x)=x'$ (or $x'\in g(x)$ for set valued $g$), provided $x$ lies in the global lower and upper bounds, $x\in [l,u]$.

\paragraph{Affine Operators:} The affine operator is trivially attainable by letting $\Gamma(x,x')$ be the equality constraint
\begin{equation}
    x'=Wx+b
\end{equation}

\paragraph{Conditional Operators:} To encode $C(x)$ as a system of linear constraints, we introduce the integer variable $a$ and wish to encode $a=C(x)$, or equivalently, $a=1\Leftrightarrow x \geq 0$. We assume that we know values $l,u$ such that $l \leq x \leq u$. Then the implication $a=1\Rightarrow x\geq 0$ is encoded by the constraint:
\begin{equation}
    x\geq (a-1)\cdot u
\end{equation}
Since if $x < 0$, then $a=1$ yields a contradiction in that $0 > x \geq (1-1)\cdot u =0$.
The implication $x\geq 0 \Rightarrow a=1$ is encoded by the constraint 
\begin{equation}
    x \leq a\cdot(1-l) - 1
\end{equation}
Since if $x\geq 0$, then $a=0$ yields a contradiction in that $0 \leq x \leq (0)\cdot(1-l)-1=-1$. Hence $a=1\Leftrightarrow x\geq 0$. We note that if $l>0$ or $u<0$, then the value of $a$ is fixed and can be encoded with one equality constraint.

\paragraph{Switch Operators:} Encoding $S(x,a)$ as a system of linear inequalities requires the introduction of continuous variable $y$. As we assume we know $l,u$ such that $l \leq x \leq u$. Denote $\hat{l}:=\min(l, 0)$ and $\hat{u}:=\max(u, 0)$. The system of linear inequalities $\Gamma(a,x,y)$ is defined as the conjunction of:
\begin{equation}\label{eq:switch-constraints}
    \begin{split}
        y \geq x- u\cdot (1-a)\\
        y \leq x - l\cdot (1-a)
    \end{split}\quad\quad 
    \begin{split}
        y \geq l\cdot a\\
        y \leq u\cdot a
    \end{split}
\end{equation}
We wish to show that $y=S(x,a)\Leftrightarrow \Gamma(a,x,y)$. Suppose that $\Gamma(a,x,y)$ is satisfied. Then if $a=1$, $x$ must equal $y$, since it is implied by left-column constraints of equation \ref{eq:switch-constraints}. The right-column constraints are satisfied by assumption. Alternatively, if $a=0$ then $y$ must equal $0$: it is implied by the right-column constraints of equation \ref{eq:switch-constraints}. The left columns are satisfied with $a=0$ and $y=1$ since $l\leq x \leq u$ by assumption.
On the other hand, suppose $y=S(x,a)$. If $a=1$, then $y=x$ by definition and we have already shown that $\Gamma(1,x,x)$ satisfied for all $x\in [l,u]$. Similarly, if $a=0$, then $y=0$ and we have shown that $\Gamma(0,x,0)$ is satisfied for all $x\in[l,u]$.

Finally we note that if one can guarantee that $a=0$ or $a=1$ always, then only the equality constraint $y=x$ or $y=0$ is needed.
\end{proof}

\paragraph{More efficient encodings:}
Finally we'll remark that while the above are valid encodings of affine, conditional and switch operators, encodings with fewer constraints for compositions of these primitives do exist. For example, suppose we instead wish to encode a continuous piecewise linear function with one breakpoint over one variable \begin{equation}
    R(x) = \begin{cases}
            A_1(x) &\text{ if } x \geq z\\
            A_2(x) &\text{ if } x < z
           \end{cases}
\end{equation}
for affine funtcions $A_1,A_2:\mathbb{R}\rightarrow\mathbb{R}$, with $A_1(z)=A_2(z)$. Certainly we could use affine, conditional and switch operators, as 
\begin{equation}
    R(x)=A_1(x) + S\left(A_1(x)-A_2(x), C(z-x)\right) 
\end{equation}
Where which requires 12 linear inequalities. Instead we can encode this function using only 4 linear inequalities. 
Supposing we know $l,u$ such that $l\leq x \leq u$, then $R(x)$ can be encoded by introducing an auxiliary integer variable $a$ and four constraints. Letting 
\begin{align}
        \zeta^-&=\min_{x\in[l,u]} A_1(x)-A_2(x)\\
        \zeta^+&=\max_{x\in[l,u]} A_1(x)-A_2(x)
\end{align}
then the constraints $\Gamma(a,x,y)$ are 
\begin{equation}
    \begin{split}
        y \geq A_1(x) -a\zeta^+\\
        y \leq A_1(x) -a\zeta^-
    \end{split}\quad\quad
    \begin{split}
        y \geq A_2(x) +(1-a)\zeta^-\\
        y \leq A_2(x) + (1-a)\zeta^+
    \end{split}
\end{equation}
This formulation admits a more efficient encoding for functions like $\sigma(\cdot)$, and $|\cdot|$. 

\subsection{Abstract Interpretations for Bound Propagation}
Here we discuss techniques to compute the lower and upper bounds needed for the MIP encoding of affine, conditional and switch operators. We will only need to show that for each of our primitive operators, we can map sound input bounds to sound output bounds.

For this, we turn to the notion of abstract interpretation. Classically used in static program analysis and control theory, abstract interpretation develops machinery to generate sound approximations for passing sets through functions. This has been used to great success to develop certifiable robustness techniques for neural networks \cite{SinghGagandeep2019-ki}. Formally, this requires an abstract domain, abstraction and concretization operators, and a pushforward operator for every function we wish to model. An abstract domain $\mathcal{A}^n$ is a family of abstract mathematical objects, each of which \emph{represent} a set over $\mathbb{R}^n$. An abstraction operator $\alpha^n: \mathcal{P}(\mathbb{R}^n)\rightarrow \mathcal{A}^n$ maps subsets of $\mathbb{R}^n$ into abstract elements, and a concretization operator $\gamma^n:\mathcal{A}^n\rightarrow \mathcal{P}(\mathbb{R})^n$ maps abstract elements back into subsets of $\mathbb{R}^n$. A pushforward operator for function $f:\mathcal{R}^n\rightarrow \mathcal{R}^m$, is denoted as $f^\#: \mathcal{A}^n\rightarrow \mathcal{A}^m$, and is called \emph{sound}, if for all $\mathcal{X}\subset \mathbb{R}^n$, 
\begin{equation}
    \{f(x)\;\;|\;\; x\in \mathcal{X}\} \subseteq \gamma^m(f^\#(\alpha^n(\mathcal{X})))
\end{equation}

\subsubsection{Hyperboxes and Boolean Hyperboxes}
The simplest abstract domains are the hyperbox and boolean hyperbox domains. The hyperbox abstract domain over $\mathbb{R}^n$ is denoted as $\mathcal{H}^n$. For each $\mathcal{X}\subset \mathbb{R}^n$ such that $H=\alpha(\mathcal{X})$, $H$ is parameterized by two vectors $l, u$ such that 
\begin{equation}
    \begin{split}
        l_i \leq \inf_{x\in\mathcal{X}} x_i
    \end{split}\quad \quad
    \begin{split}
        u_i \geq \sup_{x\in\mathcal{X}} x_i
    \end{split}
\end{equation}
and $\gamma^n(H)=\{x\;|\; l \leq x \leq u\}$. An equivalent parameterization is by vectors $c,r$ such that $c=\frac{l + u}{2}$ and $r=\frac{u-l}{2}$. We will sometimes use this parameterization when it is convenient. 

Similarly, the boolean hyperbox abstract domain $\mathcal{B}^n$ represents sets over $\{0,1\}^n$. For each $\mathcal{X}_b\subseteq \{0,1\}^n$ such that $B=\alpha(\mathcal{X}_b)$, $B$ is parameterized by a vector $v\in \{0, 1, ?\}^n$ such that 
\begin{equation}
    v_i = \begin{cases}
        1&\text{ if } x_i=1\quad \forall x\in \mathcal{X}_b\\
        0&\text{ if } x_i=0\quad \forall x\in \mathcal{X}_b\\
        ?&\text{ otherwise}
    \end{cases}
\end{equation}
And \begin{equation}
    \gamma^n(B) = \{x\;\;|\;\; (x_i=v_i)\lor (v_i=?)\}
\end{equation}

Finally, we can compose these two domains to represent subsets of $\mathbb{R}^n\times \{0,1\}^m$. For any set $\mathcal{X} \subseteq \mathbb{R}^n\times \{0,1\}^m$, we let $\mathcal{X}_\mathbb{R}$ refer to the restriction of $\mathcal{X}$ to $\mathbb{R}^n$ and let $\mathcal{X}_b$ refer to the restriction of $\mathcal{X}$ to $\{0,1\}^m$. Then $\alpha(\mathcal{X}) := (H,B)$ where $H=\alpha(\mathcal{X}_\mathbb{R})$, $B=\alpha(\mathcal{X}_b)$. The concretization operator is defined $\gamma(H,B):=\gamma(H)\times \gamma(B)$.

\subsubsection{Pushforwards for Affine, Conditional, Switch}
We will now define pushforward operators for each of our primitives.
\paragraph{Affine Pushforward Operators}
Provided with bounded set $\mathcal{X}\subset \mathbb{R}^n$, with $H=\alpha(\mathcal{X})$ parameterized by $c,r$, and affine operator $A(x)=Wx+b$, the pushforward $A^\#$ is defined $A^\#(H)=H'$, where $H'$ is parameterized by $c',r'$ with 
\begin{equation}
    \begin{split}
        c' = Wc + b
    \end{split}\quad\quad 
    \begin{split}
        r' = |W|r
    \end{split}
\end{equation}
where $|W|$ is the elementwise absolute value of $W$.
To see that this is sound, it suffices to show that for every $x\in \mathcal{X}$, $c'_i-r'_i \leq A(x_i) \leq c'_i+r'_i$. Fix an $x\in \mathcal{X}$ and consider $A(x)_i=w_i^Tx + b_i$ where $w_i$ is the $i^{th}$ row of $W$. Note that $x=c+e$ for some vector $e$ with $|e|\leq r$ elementwise. Then
\begin{align*}
\begin{split}
    w_i^Tx+b_i &=
    w_i^Tc + w_i^Te + b_i\\
    &\geq w_i^Tc - |w_i^Te| + b_i\\ 
    &\geq w_i^Tc - |w_i^T||e| + b_i\\
    &\geq w_i^Tc - |w_i^T|r + b_i
\end{split}\quad\quad
\begin{split}
    w_i^Tx+b_i &=
    w_i^Tc + w_i^Te + b_i\\
    &\leq w_i^Tc + |w_i^Te| + b_i\\ 
    &\leq w_i^Tc + |w_i^T||e| + b_i\\
    &\leq w_i^Tc + |w_i^T|r + b_i
\end{split}
\end{align*}
as desired.

\paragraph{Conditional Pushforward Operators}
Provided with bounded set $\mathcal{X}\subset \mathbb{R}^n$ with $H=\alpha^n(\mathcal{X})$ parameterized by $l,u$, the elementwise conditional operator is defined $C^\#(H)=B$ where $B$ is parameterized by $v$ with 
\begin{equation}
    v_i = \begin{cases}
            0 & \text{ if } u_i < 0\\
            1 & \text{ if } l_i > 0\\
            ? & \text{ otherwise}
          \end{cases}
\end{equation}
Soundness follows trivially: for any $x\in\mathcal{X}$, if $l_i > 0$, then $x_i > 0$ and $C(x)_i=1$. If $u_i < 0$, then $x_i < 0$ and $C(x)_i=0$. Otherwise, $v_i=?$, which is always a sound approximation as $C(x)_i\in\{0,1\}$.

\paragraph{Switch Pushforward Operators}
Provided with $\mathcal{X}\subset \mathbb{R}^n\times\{0,1\}^n$, we define $\mathcal{X}_\mathbb{R}:=\{x\;|\; (x,a)\in \mathcal{X}\}$ and $\mathcal{X}_b:=\{a\;|\; (x,a)\in \mathcal{X}\}$. We've defined $\alpha(\mathcal{X}):=(\alpha(\mathcal{X}_\mathbb{R}), \alpha(\mathcal{X}_b))$. Then if $H:=\alpha(\mathcal{X}_\mathbb{R})$ parameterized by $l,u$, and $B:=\alpha(\mathcal{X}_b)$ parameterized by $v$, we define the pushforward operator for switch $S^\#(H, B)=H'$ where $H'$ is parameterized by $l',u'$ with 
\begin{equation}
        l'_i = \begin{cases}
                l_i &\text{ if } v_i = 1\\
                0   &\text{ if } v_i = 0\\
                \min(l_i, 0) &\text{ otherwise}
            \end{cases}
\end{equation}
\begin{equation}
        u'_i = \begin{cases}
                u_i &\text{ if } v_i = 1\\
                0   &\text{ if } v_i = 0\\
                \max(u_i, 0) &\text{ otherwise}
            \end{cases}
\end{equation}
Soundness follows: letting $(x,a)\in\mathcal{X}$, if $v_i=0$, then $a_i=0$, and $S(x,a)_i=0$, hence $l'_i,u'_i=0$ is sound. If $v_i=1$, then $a_i=1$ and $S(x,a)_i=x_i$ and hence $l'_i=l_i$, $u'_i=u_i$ is sound by the soundness of $H$ over $\mathcal{X}_\mathbb{R}$. Finally, if $v_i=?$, then $a_i=0$ or $a_i=1$, and $S(x,a)_i \in \{0\} \cup [l_i,u_i] \subseteq [\min(l_i,0), \max(u_i,0)]$.

\subsubsection{Abstract Interpretation and Optimization}
We make some remarks about the applications of abstract interpretation as a technique for optimization. Recall that, for any set $\mathcal{X}$ and functions $g,f$, if $\mathcal{Y}=\{f(x)\;|\; x\in\mathcal{X}\}$ we have that
\begin{equation}
    \max_{x\in\mathcal{X}} g(f(x)) = \max_{y\in\mathcal{Y}} g(y)
\end{equation}
Instead if $\mathcal{Z}$ is such that $\{f(x)\;|\; x\in\mathcal{X}\}\subseteq \mathcal{Z}$, then 
\begin{equation}\label{eq:ai-optimization}
    \max_{x\in\mathcal{X}} g(f(x)) \leq \max_{z\in\mathcal{Z}} g(z)
\end{equation}
In particular, suppose $f$ is a nasty function, but $g$ has properties that make it amenable to optimization. Optimization frameworks may not be able to solve $\max_{x\in\mathcal{X}} g(f(x))$. On the other hand, it might be the case that the RHS of equation \ref{eq:ai-optimization} is solvable. In particular, if $g$ is concave and $\mathcal{Z}$ is a convex set obtained by $\mathcal{Z}:=\gamma(f^\#(\alpha(\mathcal{X})))$, then by soundness we have $\mathcal{Z}\supset\mathcal{Y}$. In fact, this is the formal definition of a convex relaxation.

Under this lens, one can use the abstract domains and pushforward operators previously defined to recover FastLip \cite{Weng2018-gr}, though the algorithm was not presented using abstract interpretations. Indeed, using the hyperbox and boolean hyperbox domains, over a set $\mathcal{X}$, one can recover a hyperbox $\mathcal{Z}\supseteq\{\nabla f(x)\;|\; x\in\mathcal{X}\}$. Then we have that 
\begin{equation}
    \max_{x\in\mathcal{X}} ||\nabla f(x)||\leq \max_{z\in\mathcal{Z}} ||z||
\end{equation}
where it is easy to optimize $\ell_p$-norms over hyperboxes. In addition, many convex-relaxation approaches towards certifiable robustness may be recovered by this framework \cite{Zico_Kolter2017-va, Raghunathan2018-zu, Zhang2018-fl, SinghGagandeep2019-ki}.
\newpage

% Section 5: Extensions to LipMIP 
\section{Extensions of LipMIP}\label{app:extensions}
This section will provide more details regarding how we extend LipMIP to be applicable to vector-valued functions and to other norms. We will present an example of a nonstandard norm by detailing an application towards untargeted classification robustness.

\subsection{Extension of LipMIP to Vector-Valued Networks}
Letting $f:\mathbb{R}^n\rightarrow \mathbb{R}^m$ be a vector-valued ReLU network in general position, suppose $||\cdot||_\alpha$ is a norm over $\mathbb{R}^n$ and $||\cdot||_\beta$ is a norm over $\mathbb{R}^m$. Further, suppose $\mathcal{X}$ is some open subset of $\mathbb{R}^n$. Then Theorem 1 states that 
\begin{equation}\label{eq:extension-1}
    L^{(\alpha, \beta)}(f, \mathcal{X}) := \sup_{x\neq y \in\mathcal{X}} \frac{||f(x)-f(y)||_\beta}{||x-y||_\alpha} = \sup_{G\in \delta_f(\mathcal{X})}|| G^T||_{\alpha, \beta}.
\end{equation}
And noting that 
\begin{equation}\label{eq:extension-2}
    ||G^T||_{\alpha,\beta}:= \sup_{||y||_{\alpha\leq1}} \sup_{||z||_{\beta^*} \leq 1} |zG^Ty|
\end{equation}
one can substitute Equation \ref{eq:extension-2} into Equation \ref{eq:extension-1} to yield

\begin{equation} \label{eq:multi-var-optimization}
 L^{(\alpha, \beta)}(f, \mathcal{X}) =  \sup_{G\in \delta_f(\mathcal{X})} \sup_{||y||_\alpha \leq 1} \sup_{||z||_{\beta^*}\leq 1} |zG^T y|
\end{equation}

The key idea is that we can define function $g_z: \mathbb{R}^n\rightarrow \mathbb{R}$ as 
\begin{equation}
    g_z(x) = \langle z, f(x)\rangle 
\end{equation}
Where 
\begin{equation}
    \delta_{g_z}(x)^T = \{Gz \mid G \in \delta_f(x)\}
\end{equation}
The plan is to make LipMIP optimize over $x$ and $z$ simultaneously and maximize the gradient norm of $g_z(x)$. To be more explicit, we note that the scalar-valued LipMIP solves::
\begin{equation}
    \sup_{x\in\mathcal{X}} ||\nabla^\# f(x)^T||_{\alpha, |\cdot|} = \sup_{G\in \nabla^\#f\mathcal{X}} ||G||_{\alpha^*} = \sup_{x\in \nabla^\#f(\mathcal{X})} \sup_{||y||_\alpha \leq 1} |\nabla^\# f(x)^Ty|
\end{equation}
where we have shown that $\nabla f(x)$ is MIP-encodable and the supremum over $y$ can be encoded for $||\cdot||_1$, $||\cdot||_\infty$, because there exist nice closed form representations of $||\cdot||_1$, $||\cdot||_\infty$. 
The extension, then, only comes from the $\sup_{||z||_{\beta^*}\leq 1}$ term. We can explicitly define $f$ as 

\begin{equation}
\begin{split}
    f(x)=W_{d+1}\sigma\left(Z_d(x)\right)
\end{split}\quad\quad 
\begin{split}
    Z_i(x) = W_i\sigma\left(Z_{i-1}(x)\right) + b_i
\end{split}\quad\quad
    Z_0(x)=x
\end{equation}
such that 
\begin{equation}
\begin{split}
    g_z(x)=z^TW_{d+1}\sigma\left(Z_d(x)\right)
\end{split}\quad\quad 
\begin{split}
    Z_i(x) = W_i\sigma\left(Z_{i-1}(x)\right) + b_i
\end{split}\quad\quad
    Z_0(x)=x
\end{equation}
And the recursion for $\nabla^\# g_z(x)$ is defined as 

\begin{equation}
\begin{split}
    \nabla^\# g_z(x) = W_1^TY_1(x) 
\end{split}\quad\quad
\begin{split}
    Y_i(x) = W_{i+1}^T\text{Diag}(\Lambda_i(x))Y_{i+1}(x) 
\end{split}\quad\quad 
\begin{split}
    Y_{d+1}(x) = W_{d+1}^Tz
\end{split}
\end{equation}
Thus we notice the only change occurs in the definition of $Y_{d+1}(x)$. In the scalar-valued $f$ case, $Y_{d+1}(x)$ is always the constant vector, $c$. In the vector-valued case, we can let $Y_{d+1}(x)$ be the output of an affine operator. Thus as long as the dual ball $\{z\;|\; ||z||_\beta^{*}\}$ is representable as a mixed-integer polytope, we may solve the optimization problem of Equation \ref{eq:multi-var-optimization}. 

\begin{definition}
We say that a norm $||\cdot||_\beta$ is a linear norm over $\mathbb{R}^k$ if the set of points such that $||z||_{\beta^*} \leq 1$ is representable as a mixed-integer polytope.
\end{definition}
Certainly the $\ell_1, \ell_\infty$ norms are linear norms, but we will demonstrate another linear norm in the next subsection. 
\begin{corollary}
In the same setting as Theorem 5, if $||\cdot||_\alpha$ is $||\cdot||_1$ or $||\cdot||_\infty$, and $||\cdot||_\beta$ is a linear norm, then LipMIP applied to $f$ and $\mathcal{X}$ yields the answer 
\begin{equation}
    L^{(\alpha, \beta)}\left(f, \mathcal{X}\right)
\end{equation} 
where the parameters of LipMIP have been adjusted to reflect the norms of interest.
\end{corollary}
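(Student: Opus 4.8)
The plan is to reduce the vector-valued problem to the scalar-valued LipMIP already constructed, by treating the dual vector $z$ as an additional block of continuous optimization variables. Starting from Equation \ref{eq:multi-var-optimization}, which combines Theorem 1 (specialized to general position networks, where Theorem 2 gives $\delta_f = \nabla^\# f$) with the dual characterization of the matrix norm, the target is
\begin{equation}
    L^{(\alpha, \beta)}(f, \mathcal{X}) = \sup_{x \in \mathcal{X}} \; \sup_{||z||_{\beta^*} \leq 1} \; \sup_{||y||_\alpha \leq 1} |\nabla^\# g_z(x)^T y|,
\end{equation}
where $g_z(x) = \langle z, f(x)\rangle$. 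The first step is to observe that, for fixed $z$, the inner two suprema are precisely what scalar-valued LipMIP solves: the supremum over $||y||_\alpha \leq 1$ equals $||\nabla^\# g_z(x)||_{\alpha^*}$, which for $\alpha \in \{1, \infty\}$ is the $\ell_\infty$ or $\ell_1$ norm and is expressible as a composition of affine, conditional, and switch operators by Lemma \ref{lemma:app-lemma-acs}.

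The key structural point is that $\nabla^\# g_z$ obeys the same backpropagation recursion as the scalar case, with the sole modification that the terminal vector $Y_{d+1}$ becomes the affine image $W_{d+1}^T z$ of the new variable $z$ rather than the constant $c$. I would therefore extend Lemma \ref{lemma:app-lemma-acs} by checking that every downstream operator remains one of the three primitives: each product $\mathrm{Diag}(\Lambda_i(x)) Y_{i+1}$ is still a switch $S(Y_{i+1}, \Lambda_i)$ whose continuous argument is now an affine function of $z$ and the earlier layer variables, while the leading multiplication by $W_1^T$ is affine. Since $z$ enters only linearly and the conditional masks $\Lambda_i(x)$ depend on $x$ alone, no new nonlinearity in the pair $(x,z)$ is introduced.

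With the composition established, MIP-encodability follows from Lemma \ref{lemma:app-lemma-mipencode}, provided global lower and upper bounds are available for each primitive input. Here $x$ ranges over a bounded polytope and $z$ over the dual unit ball, which is bounded, so I would propagate both sets of bounds jointly through the abstract-interpretation pushforwards of Appendix \ref{app-lipmip-construction}, ensuring every switch and conditional receives valid box bounds. The remaining ingredient is to encode the feasible region $\{z : ||z||_{\beta^*} \leq 1\}$ as a mixed-integer polytope, which is exactly the hypothesis that $||\cdot||_\beta$ is a linear norm; the $y$-supremum needs no explicit variable, being absorbed into the dual norm $||\cdot||_{\alpha^*}$, which is MIP-encodable for $\ell_1$ and $\ell_\infty$. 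Assembling these pieces through the composition identity of Equation \ref{eq:mip-composition} produces a single MIP whose optimum is $L^{(\alpha,\beta)}(f,\mathcal{X})$.

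The main obstacle I anticipate is the joint bound propagation together with confirming that the encoding never escapes the MIP-encodable class. Because $z$ is now a variable contracted (through the switch operators) against $x$-dependent $0/1$ masks, one must verify that treating each switch's continuous argument as an affine function of $(x,z)$ still yields finite, sound box bounds, and that the bilinear-looking term $z G^T y$ never forces a genuine product of two continuous variables. The resolution is that the only products appearing in the recursion are continuous-times-binary (the switch), while the outer $y$ contraction is handled by the dual-norm closed form; verifying this carefully is the crux of the argument.
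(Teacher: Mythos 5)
Your proposal is correct and takes essentially the same approach as the paper: reduce to the scalar case via $g_z(x) = \langle z, f(x)\rangle$, observe that the backpropagation recursion changes only in that $Y_{d+1} = W_{d+1}^T z$ becomes an affine image of the new variable $z$, and encode the dual ball $\{z : ||z||_{\beta^*} \leq 1\}$ as a mixed-integer polytope using the linear-norm hypothesis. Your explicit verification that every product in the recursion remains continuous-times-binary (a switch) and that joint $(x,z)$ bound propagation stays sound simply makes precise what the paper leaves implicit in its appeal to the preceding construction.
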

\begin{proof}
The proof ideas are identical to that for Theorem 5. The only difference is that the norm $||\cdot||_\beta$ has been replaced from $|\cdot|$ to an arbitrary linear norm. The argument for correctness in this case is presented in the paragraphs preceding the corollary statement. 
\end{proof}
\subsection{Application to Untargeted Classification Robustness}
Now we turn our attention towards untargeted classification robustness. In the binary classification setting, we let $f:\mathbb{R}^n\rightarrow \mathbb{R}$ be a scalar-valued ReLU network. Then the label that classifier $f$ assigns to point $x$ is $\text{sign}(f(x))$. In this case, it is known that for any open set $\mathcal{X}$, any norm $||\cdot||_\alpha$, any $x,y \in\mathcal{X}$, 
\begin{equation}\label{eq:single-var-robust-imp}
    ||x-y||_\alpha < \frac{|f(x)|}{L^{\alpha}(f,\mathcal{X})} \implies \text{sign}(f(y)) = \text{sign}(f(x))
\end{equation}
Indeed, this follows from the definition of the Lipschitz constant as 
\begin{equation}
    L^{\alpha}(f,\mathcal{X}):=\sup_{x\neq y}\frac{|f(x)-f(y)|}{||x-y||_\alpha}.
\end{equation}
Then, by the contrapositive of implication \ref{eq:single-var-robust-imp} , if $\text{sign}(f(x)) \neq \text{sign}(f(y))$ then $|f(x)-f(y)|\geq |f(x)|$ and for all $x,y\in\mathcal{X}$,
\begin{equation}
    L^{\alpha}(f, \mathcal{X}) \geq \frac{|f(x)-f(y)|}{||x-y||_\alpha}
\end{equation}
Rearranging, we have 
\begin{equation}
    ||x-y||_\alpha \geq \frac{|f(x)-f(y)|}{L^\alpha(f,\mathcal{X})} \geq \frac{|f(x)|}{L^\alpha(f,\mathcal{X})}
\end{equation}
arriving at the desired contrapositive implication. 

In the multiclass classification setting, we introduce the similar lemma.
\begin{lemma}\label{lemma:multi-var-robust-imp}
$f:\mathbb{R}^n\rightarrow \mathbb{R}^m$ assigns the label as the index of the maximum logit. We will define the hard classifier $F:\mathbb{R}^n\rightarrow [m]$ as $F(x)=\argmax_i f(x)_i$. We claim that for any $\mathcal{X}$, and norm $||\cdot||_\alpha$, if $F(x)=i$, then for all $y\in \mathcal{X}$, 
\begin{equation}\label{eq:multi-var-robust-imp}
    ||x-y||_\alpha < \min_{j} \frac{|f_{ij}(x)|}{L^{\alpha}(f_{ij}, \mathcal{X})} \implies F(y)=i
\end{equation}
where we've defined $f_{ij}(x) := (e_i-e_j)^Tf(x)$. 
\end{lemma}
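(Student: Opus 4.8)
The plan is to reduce the multiclass statement to repeated application of the binary robustness implication \ref{eq:single-var-robust-imp} that was just established for scalar-valued networks. The key observation is that $F(x) = i$ means precisely that $f(x)_i > f(x)_j$ for every $j \neq i$, which is equivalent to $f_{ij}(x) = (e_i - e_j)^T f(x) > 0$, i.e. $\text{sign}(f_{ij}(x)) = +1$, for all $j \neq i$. Each $f_{ij}$ is itself a scalar-valued ReLU network (it is the original network followed by the affine readout $z \mapsto (e_i - e_j)^T z$), and it inherits $(\alpha, |\cdot|)$-Lipschitz continuity over $\mathcal{X}$ from $f$, so the scalar result applies to it directly.

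First I would fix an arbitrary $j \neq i$ and apply implication \ref{eq:single-var-robust-imp} to the scalar network $f_{ij}$. This gives
\begin{equation}
    ||x - y||_\alpha < \frac{|f_{ij}(x)|}{L^\alpha(f_{ij}, \mathcal{X})} \implies \text{sign}(f_{ij}(y)) = \text{sign}(f_{ij}(x)) = +1,
\end{equation}
and $\text{sign}(f_{ij}(y)) = +1$ unpacks to $f(y)_i > f(y)_j$. Next I would observe that the hypothesis of the lemma, $||x-y||_\alpha < \min_{j \neq i} |f_{ij}(x)| / L^\alpha(f_{ij}, \mathcal{X})$, is exactly the statement that $||x-y||_\alpha$ is strictly below \emph{every} one of these per-class thresholds simultaneously. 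Hence the scalar implication fires for all $j \neq i$ at once, yielding $f(y)_i > f(y)_j$ for every $j \neq i$, which is the definition of $F(y) = i$. This chain of implications is the entire argument.

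Two edge cases must be addressed, and these are the only delicate points rather than a genuine obstacle. First, the index $j = i$ should be excluded from the minimum, since $f_{ii} \equiv 0$ makes the corresponding ratio an indeterminate $0/0$; I would state the convention that the minimum runs over $j \neq i$. Second, one should note that when $i$ is the strict argmax each $f_{ij}(x)$ is strictly positive, so every threshold $|f_{ij}(x)| / L^\alpha(f_{ij}, \mathcal{X})$ is strictly positive and the guaranteed robustness radius is nonzero; if ties were allowed the corresponding radius would collapse to zero and the implication would be vacuous, consistent with the fact that a point on a decision boundary admits no robustness certificate. With these conventions fixed, the proof is a one-line componentwise reduction, and I expect no substantive difficulty beyond verifying that $f_{ij}$ satisfies the hypotheses of the scalar lemma.
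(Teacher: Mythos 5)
Your proof is correct and takes essentially the same route as the paper: the paper's own argument is the contrapositive unfolded inline---assuming $F(y)=j\neq i$ forces $|f_{ij}(x)-f_{ij}(y)|\geq |f_{ij}(x)|$ and hence $||x-y||_\alpha \geq |f_{ij}(x)|/L^{\alpha}(f_{ij},\mathcal{X})$---which is precisely the scalar binary-classification implication you invoke as a black box, applied to the same pairwise margin functions $f_{ij}$. Your two edge-case remarks (restricting the minimum to $j\neq i$, and the vacuous radius under argmax ties) are sound and consistent with the paper's intended reading.
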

\begin{proof}
To see this, suppose $F(y)=j$ for some $j\neq i$. Then $|f_{ij}(x) - f_{ij}(y)| \geq |f_{ij}(x)|$, as by definition $f_{ij}(x) > 0$ and $f_{ij}(y) < 0$. Then by the definition of Lipschitz constant :
\begin{equation}
    L^{\alpha}(f_{ij}, \mathcal{X}) \geq \frac{|f_{ij}(x)-f_{ij}(y)|}{||x-y||_\alpha} \geq \frac{|f_{ij}(x)|}{||x-y||_\alpha}
\end{equation}
arriving at the desired contrapositive LHS. We only note that we need to take $\min$ over all $j$ so that $f_{ij}(y)\geq 0$ for all $j$.
\end{proof}

Now we present our main Theorem regarding multiclassification robustness:
\begin{theorem}
Let $f$ be a vector-valued ReLU network, and let $||\cdot||_\times$ be a norm over $\mathbb{R}^m$, such that for any $x$ in any open set $\mathcal{X}$, with $F(x)=i$,
\begin{equation}\label{eq:times-norm-requirement}
    \min_j \frac{|f_{ij}(x)|}{L^{(\alpha, \times)}(f, \mathcal{X})} \leq \min_j\frac{|f_{ij}(x)|}{L^\alpha(f_{ij}, \mathcal{X})}.
\end{equation}
Then for $x,y\in \mathcal{X}$ with $F(x)=i$, 
\begin{equation}\label{eq:times-norm-implication}
    ||x-y||_\alpha < \min_j \frac{|f_{ij}(x)|}{L^{(\alpha, \times)}(f, \mathcal{X})} \implies F(y)=i.
\end{equation}
In addition, if $||\cdot||_\times$ is a linear norm, and $f$ is in general position, then $L^{(\alpha, \times)}(f, \mathcal{X})$ is computable by LipMIP.
\end{theorem}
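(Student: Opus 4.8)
The statement decomposes into two essentially independent claims: the robustness implication \ref{eq:times-norm-implication}, and the LipMIP-computability of $L^{(\alpha,\times)}(f,\mathcal{X})$. Both reduce to facts already established in this section, so my plan is simply to chain them together. I would stress up front that the stated conditional is the easy part; the substantive content is really the existence of a norm $||\cdot||_\times$ meeting hypothesis \ref{eq:times-norm-requirement}, and I would treat that as the main obstacle (see below).

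For the implication, I would argue directly by combining the hypothesis \ref{eq:times-norm-requirement} with Lemma \ref{lemma:multi-var-robust-imp}. Fix $x$ with $F(x)=i$ and suppose $||x-y||_\alpha < \min_j |f_{ij}(x)| / L^{(\alpha,\times)}(f,\mathcal{X})$. By \ref{eq:times-norm-requirement}, the right-hand side is bounded above by $\min_j |f_{ij}(x)|/L^{\alpha}(f_{ij},\mathcal{X})$, hence $||x-y||_\alpha$ lies strictly below that latter quantity as well. The multiclass certificate \ref{eq:multi-var-robust-imp} of Lemma \ref{lemma:multi-var-robust-imp} then forces $F(y)=i$. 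This step is immediate once the two inequalities are lined up.

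For computability I would invoke the vector-valued extension corollary proved earlier in this section, applied with $\beta = \times$. Taking $||\cdot||_\alpha$ to be $\ell_1$ or $\ell_\infty$ and $f$ in general position, the recursion for $\nabla^\# g_z$ shows that the only change from the scalar-valued case is replacing the constant seed $c$ by $Y_{d+1}(x) = W_{d+1}^T z$ and optimizing jointly over $x$ and the dual vector $z$. Because $||\cdot||_\times$ is a linear norm, the dual ball $\{z \mid ||z||_{\times^*} \le 1\}$ is a mixed-integer polytope, so every ingredient of the supremum in \ref{eq:multi-var-optimization} is MIP-encodable and LipMIP returns $L^{(\alpha,\times)}(f,\mathcal{X})$ exactly.

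The main obstacle, and what makes the theorem non-vacuous, is exhibiting a concrete linear norm that satisfies \ref{eq:times-norm-requirement}. Here I would reduce \ref{eq:times-norm-requirement} to a cleaner sufficient condition: since $L^{\alpha}(f_{ij},\mathcal{X}) = \sup_G ||G(e_i-e_j)||_{\alpha^*} = \sup_G \sup_{||v||_\alpha \le 1} |(e_i-e_j)^T G^T v|$, any norm obeying the pointwise domination $||u||_\times \ge \max_{i,j} |u_i - u_j|$ yields $L^{(\alpha,\times)}(f,\mathcal{X}) \ge \max_j L^{\alpha}(f_{ij},\mathcal{X})$, which in turn gives \ref{eq:times-norm-requirement} term-by-term in the minimum over $j$. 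The natural candidate is the margin expression $\max_{i,j}|u_i-u_j|$; the delicate issues I expect to spend effort on are (i) promoting this seminorm (it vanishes on the all-ones direction) to a genuine norm without destroying the domination property, and (ii) checking that its dual ball is a mixed-integer polytope so that it indeed qualifies as a \emph{linear} norm in the sense required by the computability claim.
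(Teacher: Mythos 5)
Your proposal is correct and matches the paper's proof essentially step for step: the implication \ref{eq:times-norm-implication} is obtained exactly as in the paper by chaining hypothesis \ref{eq:times-norm-requirement} with Lemma \ref{lemma:multi-var-robust-imp}, and the computability claim is, as you say, immediate from the vector-valued corollary since the dual ball of a linear norm is a mixed-integer polytope. Your remarks on the ``main obstacle'' also anticipate the paper's own subsequent development precisely: the cross-norm of Definition \ref{def:times-norm-def} resolves your issue (i) by taking the hull $\mathcal{E}$ to include the vectors $e_i$ alongside the differences $e_i-e_j$ (so that $\mathcal{E}$ contains an open set and Lemma \ref{lem:arbitrary-norm} gives point separation without losing the domination $||u||_\times \geq \max_{i,j}|\langle e_i-e_j,u\rangle|$), and your issue (ii) via the explicit H-description of $\mathcal{E}$, with the requirement \ref{eq:times-norm-requirement} then following from $L^{(\alpha,\times)}(f,\mathcal{X})\geq \max_j L^{\alpha}(f_{ij},\mathcal{X})$ just as you sketch.
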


\begin{proof}
Certainly equation \ref{eq:times-norm-implication} follows directly from Lemma \ref{lemma:multi-var-robust-imp} and equation \ref{eq:times-norm-requirement}. 
\end{proof}
What remains to be shown is a $||\cdot||_\times$ such that equation \ref{eq:times-norm-requirement} holds. To this end, we present a lemma describing convenient formulations for norms:
\begin{lemma}\label{lem:arbitrary-norm}
Let $\mathcal{C}\subseteq\mathbb{R}^n$ be a set that contains an open set. Then 
\begin{equation}
    ||x||_\mathcal{C} := \sup_{y\in \mathcal{C}} |y^Tx|
\end{equation}
is a norm.
\end{lemma}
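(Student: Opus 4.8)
The plan is to verify directly the three defining properties of a norm from the definition given earlier in the paper, treating the open-set hypothesis as the crucial ingredient for point separation. First I would address well-definedness: the quantity $\sup_{y\in\mathcal{C}}|y^Tx|$ is a supremum of nonnegative reals, so it is automatically nonnegative, but it is finite for every $x$ only if $\mathcal{C}$ is bounded (otherwise the supremum may be $+\infty$). I would therefore either assume, or explicitly remark, that $\mathcal{C}$ is a \emph{bounded} set containing an open set, since boundedness is what guarantees $||x||_\mathcal{C}<\infty$ and hence that $||\cdot||_\mathcal{C}$ is a genuine (real-valued) norm rather than an extended one.

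Next I would dispatch the triangle inequality and absolute homogeneity, both of which follow from elementary properties of the supremum. For the triangle inequality, write $||x+z||_\mathcal{C}=\sup_{y\in\mathcal{C}}|y^Tx+y^Tz|\le \sup_{y\in\mathcal{C}}\bigl(|y^Tx|+|y^Tz|\bigr)\le ||x||_\mathcal{C}+||z||_\mathcal{C}$, where the first inequality is the ordinary triangle inequality applied pointwise and the second is subadditivity of the supremum. For absolute homogeneity, pull the scalar out of the absolute value: since $|y^T(ax)|=|a|\,|y^Tx|$ and $|a|\ge 0$ is independent of $y$, the factor $|a|$ passes through the supremum to give $||ax||_\mathcal{C}=|a|\,||x||_\mathcal{C}$.

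The main obstacle, and the only place the hypothesis is genuinely used, is point separation: I must show that $||x||_\mathcal{C}=0$ forces $x=0$. Assuming $||x||_\mathcal{C}=0$ immediately gives $y^Tx=0$ for every $y\in\mathcal{C}$, and in particular for every $y$ in the open set $U\subseteq\mathcal{C}$. The key step is the standard fact that a linear functional vanishing on a nonempty open set vanishes identically: fixing $y_0\in U$ together with a radius $r>0$ whose ball about $y_0$ lies in $U$, the relations $(y_0+te_i)^Tx=0$ for each standard basis vector $e_i$ and all sufficiently small $t$ yield $y_0^Tx+t\,x_i=0$, and subtracting $y_0^Tx=0$ gives $t\,x_i=0$, so $x_i=0$ for every coordinate $i$ and therefore $x=0$. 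Having verified nonnegativity, finiteness, the triangle inequality, homogeneity, and point separation, I would conclude that $||\cdot||_\mathcal{C}$ is a norm.
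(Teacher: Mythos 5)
Your proof is correct and follows essentially the same route as the paper's: direct verification of the norm axioms, with the triangle inequality handled by subadditivity of the supremum and the open-set hypothesis used only for point separation (your perturbation argument with $y_0 + te_i$ merely spells out what the paper asserts in a single line). Your caveat about finiteness is a legitimate observation the paper's proof elides --- as stated, $||x||_\mathcal{C}$ can be $+\infty$ for unbounded $\mathcal{C}$ --- though in the paper's only application $\mathcal{C}=\mathcal{E}$ is the convex hull of a finite point set, hence bounded, so the lemma is used soundly.
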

\begin{proof}
Nonnegativity and absolute homogeneity are trivial. To see the triangle inequality holds for $||\cdot||_\mathcal{C}$, we see that, for any $x,y$,
\begin{equation}
    ||x+y||_\mathcal{C} := \sup_{z\in \mathcal{C}} |z^T(x+y)| \leq \sup_{z\in\mathcal{C}} |z^Tx| + \sup_{z'\in\mathcal{C}}|z'^Ty| \leq ||x||_\mathcal{C} + ||y||_\mathcal{C}
\end{equation}
And point separation follows because $\mathcal{C}$ contains an open set and if $x\neq 0$, then there exists at least one $y$ in $\mathcal{C}$ such that $|y^Tx| > 0$. 
\end{proof}

Now we can define our norm $||\cdot||_\times$ that satisfies equation \ref{eq:times-norm-requirement}:
\begin{definition}\label{def:times-norm-def}
Let $e_{ij}:= e_i-e_j$ where each $e_i$ is the elementary basis vector in $\mathbb{R}^m$. Then let $\mathcal{E}$ be the convex hull of all such $e_{ij}$ and all $e_i$, $\mathcal{E}:= \text{Conv}(\{e_i\;|\; i\in [m]\}\cup \{e_{ij}\; | \; i\neq j \in [m]\})$. We define the \textbf{cross-norm}, $||\cdot||_\times$ as 
\begin{equation}
    ||x||_{\times}:= \sup_{y\in \mathcal{E}}|y^Tx|
\end{equation}
\end{definition}
We note that by Lemma \ref{lem:arbitrary-norm} and since $\mathcal{E}$ contains the positive simplex, $\mathcal{E}$ contains an open set and hence the cross-norm is certainly a norm. Indeed, because the convex hull of a finite point-set is a polytope, the cross-norm is a linear norm. Further, we note that the polytope $\mathcal{E}$ has an efficient H-description.
\begin{proposition}
The set $\mathcal{E} \subseteq \mathbb{R}^m$, is equivalent to the polytope $\mathcal{P}$ defined as 
\begin{equation}
 \mathcal{P} = \left\{ x\ \middle\vert \begin{array}{l}
    x=x^+-x^- \\
    x^+\geq 0\quad;\quad \sum_i x_i^+ \leq 1\\
    x^-\geq 0\quad;\quad \sum_i x_i^- \leq 1\\
    \sum_i x_i^+ \geq \sum_i x_i^-
  \end{array}\right\}
\end{equation}
\end{proposition}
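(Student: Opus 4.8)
The plan is to prove the claimed set equality by establishing the two inclusions $\mathcal{E}\subseteq\mathcal{P}$ and $\mathcal{P}\subseteq\mathcal{E}$ separately. I first note that $\mathcal{P}$ is genuinely a convex set: it is the projection onto the $x$-coordinates of the bounded polyhedron in the variables $(x,x^+,x^-)$ cut out by the listed linear (in)equalities, and projections of polytopes are polytopes. Throughout I use that we are in the multiclass regime $m\geq 2$, so that the vectors $e_{ij}$ exist and, in particular, the origin is available as a generator-combination via $0=\tfrac{1}{2}(e_{kl}+e_{lk})$ for any fixed $k\neq l$ (since $e_{lk}=-e_{kl}$).

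For $\mathcal{E}\subseteq\mathcal{P}$, since $\mathcal{P}$ is convex it suffices to check that each generator lies in $\mathcal{P}$. For $e_i$ take $x^+=e_i$ and $x^-=0$; for $e_{ij}=e_i-e_j$ with $i\neq j$ take $x^+=e_i$ and $x^-=e_j$. In both cases all constraints are immediate, with $\sum_i x_i^+\geq\sum_i x_i^-$ holding with equality.

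The substantive direction is $\mathcal{P}\subseteq\mathcal{E}$, which I would prove by exhibiting every $x\in\mathcal{P}$ explicitly as a convex combination of generators. Given $x\in\mathcal{P}$, I reduce to the minimal decomposition $x_i^+=\max(x_i,0)$, $x_i^-=\max(-x_i,0)$, whose supports are disjoint; this decomposition still satisfies all constraints, because both $\sum_i x_i^+$ and $\sum_i x_i^-$ only decrease while their difference $\sum_i x_i$ is preserved. Writing $P:=\sum_i x_i^+$ and $N:=\sum_i x_i^-$, the membership constraints give $0\leq N\leq P\leq 1$. The idea is that each unit of negative mass at coordinate $j$ must be carried by the $-e_j$ component of some $e_{ij}$, while leftover positive mass is carried by $e_i$ terms. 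Concretely I seek weights $\mu_{ij}\geq 0$ on $e_{ij}$ and $\lambda_i\geq 0$ on $e_i$ with $\sum_i\mu_{ij}=x_j^-$ for every $j$ and $\lambda_i=x_i^+-\sum_j\mu_{ij}\geq 0$ for every $i$. This is precisely a bipartite transportation problem shipping supplies $x_i^+$ to meet demands $x_j^-$ exactly; because the supports are disjoint no shipping edge is a forbidden self-loop ($i=j$), and because $P\geq N$ a feasible flow exists (e.g.\ by a greedy northwest-corner assignment). The resulting weights satisfy $\sum_i\lambda_i+\sum_{i\neq j}\mu_{ij}=(P-N)+N=P$, leaving slack $1-P\geq 0$ assigned to the origin, so that $x=\sum_i\lambda_i e_i+\sum_{i\neq j}\mu_{ij}e_{ij}+(1-P)\cdot 0$ is a convex combination of generators, giving $x\in\mathcal{E}$.

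The main obstacle is the transportation feasibility step in the second inclusion: guaranteeing that nonnegative $\mu_{ij}$ meeting the demand constraints exactly while respecting the supply bounds always exist. This is exactly where the two nonobvious constraints defining $\mathcal{P}$ earn their keep — the inequality $\sum_i x_i^+\geq\sum_i x_i^-$ guarantees enough supply to meet all demand, and $\sum_i x_i^+\leq 1$ guarantees the total generator weight does not exceed $1$ so the origin can absorb the remaining slack. Everything else is routine bookkeeping.
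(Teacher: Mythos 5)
Your proof is correct and follows essentially the same route as the paper's: both directions are handled identically, with the generators checked against the constraints for $\mathcal{E}\subseteq\mathcal{P}$, and for $\mathcal{P}\subseteq\mathcal{E}$ the same positive/negative decomposition of $x$, matching of the negative mass against part of the positive mass via the $e_{ij}$'s, leftover positive mass on the $e_i$'s, and remaining slack absorbed by $0=\tfrac{1}{2}(e_{ij}+e_{ji})\in\mathcal{E}$. Your write-up is in fact tighter than the paper's, since you make explicit two points the paper glosses over — that the minimal decomposition (whose disjoint supports rule out forbidden self-loops $i=j$) still satisfies the constraints, and that the matching step is a feasible transportation problem precisely because $\sum_i x_i^+\geq\sum_i x_i^-$ — whereas the paper simply asserts that $y^+-x^-$ can be written as a combination of $e_{ij}$'s.
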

\begin{proof}
As $\mathcal{E}$ is the convex hull of $e_{ij}$ and $e_i$ for all $i\neq j\in[m]$. Certainly each of these points is feasible in $\mathcal{P}$, and since $\mathcal{P}$ is convex, by the definition of a convex hull, $\mathcal{E} \subseteq \mathcal{P}$. In the other direction, consider some $x\in\mathcal{P}$. Decompose $x$ into $x^+$, and $-x^-$, by only considering the positive and negative components of $x$. The goal is to write $x$ as a convex combination of $\{e_{ij}, e_i\}$. Further decompose $x^+$ into $y^+, z^+$ such that $x^+=y^+ + z^+$, $y^+\geq 0$, $z^+\geq 0$, and $\sum_i y_i^+ = \sum_i x_i^-$. Then we can write $y_i^+ - x_i^-$ as a convex combination of $e_{ij}'s$ and$z_i^+$ is a convex combination of $e_i's$ and $0$, where we note that $0\in \mathcal{E}$ because $e_{ij}$, $e_{ji}$ are in $\mathcal{E}$. 
\end{proof}

Now we desire to show equation \ref{eq:times-norm-requirement} holds for the cross-norm. 
\begin{proposition}
For any open set $\mathcal{X}$, the Lipschitz constant with respect to the cross norm, $||\cdot||_\times$, and any norm $||\cdot||_\alpha$, for $x\in\mathcal{X}$ with $F(x)=i$, then 
\begin{equation}
    \min_j \frac{|f_{ij}(x)|}{L^{(\alpha, \times)}(f, \mathcal{X})} \leq \min_j \frac{|f_{ij}(x)|}{L^\alpha (f_{ij}, \mathcal{X})}.
\end{equation}
\end{proposition}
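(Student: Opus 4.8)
The plan is to reduce the claimed inequality to a single per-index comparison of Lipschitz constants, namely $L^{(\alpha,\times)}(f,\mathcal{X}) \geq L^{\alpha}(f_{ij},\mathcal{X})$ for every index $j$, and then conclude by monotonicity. Indeed, once this comparison is in hand, since $|f_{ij}(x)| \geq 0$ and both Lipschitz constants are positive, dividing through gives $\frac{|f_{ij}(x)|}{L^{(\alpha,\times)}(f,\mathcal{X})} \leq \frac{|f_{ij}(x)|}{L^{\alpha}(f_{ij},\mathcal{X})}$ for each $j$; taking $\min_j$ on both sides preserves this pointwise inequality and yields exactly the proposition. So all the content lies in the per-index bound, which notably does not even require the minimization.

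To prove $L^{(\alpha,\times)}(f,\mathcal{X}) \geq L^{\alpha}(f_{ij},\mathcal{X})$, I would first invoke Theorem 1 to write $L^{(\alpha,\times)}(f,\mathcal{X}) = \sup_{G\in\delta_f(\mathcal{X})} ||G^T||_{\alpha,\times}$. The key step is to unfold the matrix norm $||G^T||_{\alpha,\times}$ using the definition of the cross-norm from Definition \ref{def:times-norm-def}. Writing $||G^Tv||_\times = \sup_{y\in\mathcal{E}} |y^TG^Tv| = \sup_{y\in\mathcal{E}}|(Gy)^Tv|$ and interchanging the two suprema (legitimate since all quantities are nonnegative) gives
\[
||G^T||_{\alpha,\times} = \sup_{||v||_\alpha\leq 1}\sup_{y\in\mathcal{E}} |(Gy)^Tv| = \sup_{y\in\mathcal{E}} ||Gy||_{\alpha^*}.
\]

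Next I would use the decisive structural fact about the cross-norm: by construction $\mathcal{E}$ is the convex hull of the vectors $\{e_i\}$ and $\{e_{ij}\}$, so in particular $e_{ij}\in\mathcal{E}$. Restricting the supremum over $\mathcal{E}$ to this one point yields $||G^T||_{\alpha,\times} \geq ||Ge_{ij}||_{\alpha^*}$, and hence $L^{(\alpha,\times)}(f,\mathcal{X}) \geq \sup_{G\in\delta_f(\mathcal{X})} ||Ge_{ij}||_{\alpha^*}$. Finally, since $f_{ij}(x) = \langle e_{ij}, f(x)\rangle$, the generalized-Jacobian relation for linear functionals gives $\delta_{f_{ij}}(x)^T = \{Ge_{ij} \mid G\in\delta_f(x)\}$, so applying Theorem 1 to the scalar-valued function $f_{ij}$ (and using $||H^T||_{\alpha,|\cdot|} = ||H||_{\alpha^*}$ for row vectors) identifies $\sup_{G\in\delta_f(\mathcal{X})}||Ge_{ij}||_{\alpha^*}$ with precisely $L^{\alpha}(f_{ij},\mathcal{X})$, closing the per-index bound.

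The main obstacle is really just the bookkeeping in the middle step, together with recognizing why the cross-norm was engineered this way: its dual ball $\mathcal{E}$ was defined exactly so as to contain every pairwise-difference direction $e_{ij}$, which is what allows a single matrix norm $||G^T||_{\alpha,\times}$ to simultaneously dominate all the scalar margin Lipschitz constants $L^{\alpha}(f_{ij},\mathcal{X})$. Everything else is a direct application of Theorem 1 and the supremum interchange.
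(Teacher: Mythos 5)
Your proof is correct, but it proves the key inequality by a genuinely different mechanism than the paper. Both arguments hinge on the same engineered fact---$e_{ij}=e_i-e_j\in\mathcal{E}$, so the cross-norm dominates every margin functional---and both reduce the proposition to showing $L^{(\alpha,\times)}(f,\mathcal{X})\geq L^\alpha(f_{ij},\mathcal{X})$ for every $j$; your pointwise reduction (divide each $j$-th term, then take $\min_j$, which is monotone) is in fact slightly cleaner than the paper's intermediate bound $\frac{\min_j|f_{ij}(x)|}{\max_j L^\alpha(f_{ij},\mathcal{X})}\leq \min_j\frac{|f_{ij}(x)|}{L^\alpha(f_{ij},\mathcal{X})}$. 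Where you diverge: the paper establishes the key inequality directly from the difference-quotient definition of the Lipschitz constant, observing that $\max_j|\langle e_i-e_j, f(x)-f(y)\rangle|\leq ||f(x)-f(y)||_\times$ and dividing by $||x-y||_\alpha$---a two-line argument requiring no differentiability machinery at all. You instead invoke Theorem 1 twice, compute $||G^T||_{\alpha,\times}=\sup_{y\in\mathcal{E}}||Gy||_{\alpha^*}$ (correct; interchanging iterated suprema is unconditionally valid, no nonnegativity needed), restrict the supremum to $y=e_{ij}$, and identify $\sup_{G\in\delta_f(\mathcal{X})}||Ge_{ij}||_{\alpha^*}$ with $L^\alpha(f_{ij},\mathcal{X})$ via the scalarization identity $\delta_{f_{ij}}(x)^T=\{Ge_{ij}\mid G\in\delta_f(x)\}$. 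That identity is the one ingredient you assert rather than prove; note that you only need the inclusion $\delta_{f_{ij}}(x)^T\subseteq\{Ge_{ij}\mid G\in\delta_f(x)\}$ (yielding $L^\alpha(f_{ij},\mathcal{X})\leq \sup_G ||Ge_{ij}||_{\alpha^*}$), which holds because the Clarke Jacobian can be computed from limits over any full-measure set of differentiable points of $f$---and it is exactly the identity the paper itself states for $g_z$ earlier in the same appendix, so the reliance is fair. The trade-off: the paper's route is more elementary and self-contained, while yours makes the mechanism visible at the Jacobian level---the dual ball $\mathcal{E}$ containing every $e_{ij}$ is precisely what lets a single matrix norm dominate all the margin Lipschitz constants, which is also the structural observation that the vector-valued LipMIP extension exploits.
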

To see this, notice 
\begin{equation}
    \frac{\min_j |f_{ij}(x)|}{\max_j L^{\alpha}(f_{ij}, \mathcal{X})} \leq \min_j \frac{|f_{ij}(x)|}{L^\alpha (f_{ij}, \mathcal{X})}.
\end{equation}
so it amounts to show that $L^{(\alpha, \times)}(f, \mathcal{X}) \geq \max_j L^{\alpha}(f_{ij}, \mathcal{X})$. By the definition of the Lipschitz constant:
\begin{equation}
    \max_j L^{\alpha}(f_{ij}, \mathcal{X}) = \max_j\sup_{x\neq y\in \mathcal{X}} \frac{\left|\left\langle e_i-e_j, f(x)-f(y)\right\rangle  \right|}{||x-y||_\alpha}
\end{equation}
By switching the $\sup$ and $\max$ above, and observing that, for all $z$,
\begin{equation}
    \max_{j} \left|\left\langle e_i-e_j, z\right\rangle  \right| \leq ||z||_\times
\end{equation}
we can bound
\begin{equation}
    \max_j L^{\alpha}(f_{ij}, \mathcal{X}) \leq \frac{||f(x)-f(y)||_\times}{||x-y||_\alpha} \leq L^{(\alpha, \times)}(f, \mathcal{X})
\end{equation}
as desired.

\newpage
% Section 6: Extra experiments 
\section{Experiments}\label{app:experiments}
In this section we describe details about the experimental section of the main paper and present additional experimental results.

\subsection{Experimental Setup}
\paragraph{Computing environment:} All experiments were run on a desktop with an Intel Core i7-9700K 3.6 GHz 8-Core Processor and 64GB of RAM. All experiments involving mixed-integer or linear programming were optimized using Gurobi, using two threads maximum \cite{gurobi}.

\paragraph{Synthetic Datasets:} The main synthetic dataset used in our experiments is generated procedurally with the following parameters: 
\[\{ \texttt{dim}, \; \texttt{num\_points}, \;\texttt{min\_separation},\; \texttt{num\_classes}, \;\texttt{num\_leaders}\}\]
The procedure is as follows: randomly sample $\texttt{num\_points}$ from the unit hypercube in $\texttt{dim}$ dimensions. Points are sampled sequentially, and a sample is replaced if it is within $\texttt{min\_separation}$ of another, previously sampled point. Next, $\texttt{num\_leaders}$ points are selected uniformly randomly from the set of points and uniformly randomly assigned a label from 1 to $\texttt{num\_classes}$. The remaining points are labeled according to the label of their closest `leader'. An example dataset and classifier learned to classify it are presented in Figure \ref{fig:synthetic-datasets}.

\begin{figure}[h]%
    \centering
    \subfloat{{\includegraphics[width=0.45\textwidth]{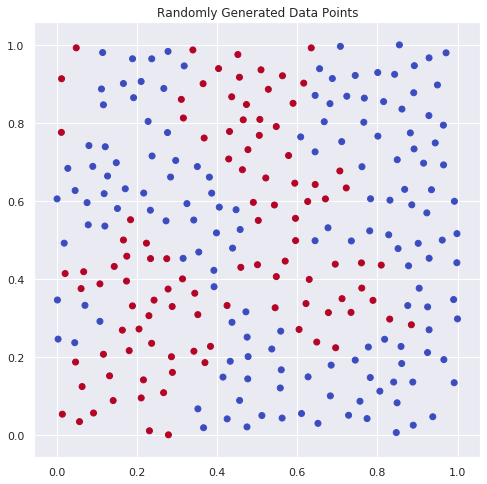}}}%
    \qquad
    \subfloat{{\includegraphics[width=0.45\textwidth]{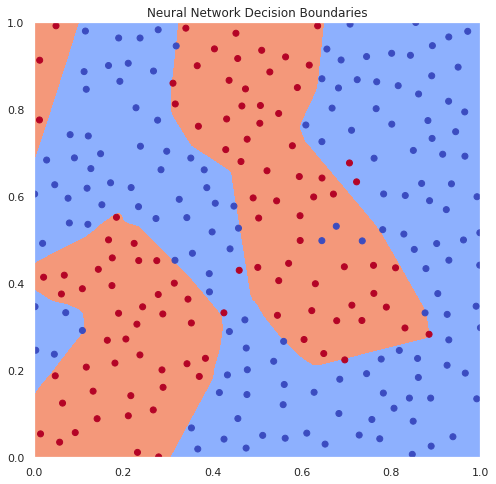}} }%
    \caption{(Left): Synthetic dataset used for evaluating effect of training on Lipschitz Estimation. (Right): Decision boundaries of a neural network trained using only CrossEntropy loss for 1000 epochs on the synthetic dataset.}%
    \label{fig:synthetic-datasets}%
\end{figure}

\paragraph{Estimation Techniques:} Here we will outline the hyperparameters and computing environment for each estimation technique compared against. 
\begin{itemize}
    \item \textbf{RandomLB: } We randomly sample 1000 points in the domain of interest. At each point, we evaluate the appropriate gradient norm that lower-bounds the Lipschitz constant. We report the maximum amongst these sampled gradient norms.
    \item \textbf{CLEVER: } We randomly sample 500 batches of size 1024 each and compute the appropriate gradient norm for each, for a total of 512,000 random gradient norm evaluations. The hyperparameters used to estimate the best-fitting Reverse Weibull distribution are left to their defaults from the CLEVER Github Repository: \url{https://github.com/IBM/CLEVER-Robustness-Score}, \cite{Weng2018-lf}.
    \item \textbf{LipMIP: } LipMIP is evaluated exactly without any early stopping or timeout parameters, using 2 threads and the Gurobi optimizer. 
    \item \textbf{LipSDP: } We use the LipSDP-Network formulation outlined in \citet{Fazlyab2019-im}, which is the slowest but tightest formulation. We note that since this only provides an upper bound of the $\ell_2$-norm of the gradient. We scale by a factor of $\sqrt{n}$ for $\ell_1,\ell_\infty$ estimates over domains that are subsets of $\mathbb{R}^n$.
    \item \textbf{SeqLip: } SeqLip bounds are attained by splitting each network into subproblems, with one subproblem per layer. The $||\cdot||_{2,2}$ norm of the Jacobian of each layer is estimated using the \emph{Greedy SeqLip} heuristic \cite{Virmaux2018-ti}. We scale the resulting output by a factor of $\sqrt{d}$. We remark that a cheap way to make this technique local would be to use interval analysis over a local domain to determine which neurons are fixed to be on or off, and do not include decision variables for these neurons in the optimization step.
    \item \textbf{FastLip: } We use a custom implementation of FastLip that more deeply represents the abstract interpretation view of this technique. As we have noted several times throughout this paper, this is equivalent to the FastLip formulation of \citet{Weng2018-gr}.
    \item \textbf{NaiveUB: } This naive upper bound is generated by multiplying the operator norm of each affine layer's Jacobian matrix and scaling by $\sqrt{d}$. 
\end{itemize}

\subsection{Experimental Details}
Here we present more details about each experiment presented in the main paper. 
\paragraph{Accuracy vs. Efficiency Experiments}

In the random dataset example, we evaluated 20 randomly generated neural networks with layer sizes $[16, 16, 16, 2]$. Parameters were initialized according to He initialization \cite{He2015-ns}.

In the synthetic dataset example, a random dataset with 2000 points over $\mathbb{R}^{10}$, 20 leaders 2 classes, were used to train 20 networks, each of layer sizes $[10, 20, 30, 30, 2]$ and was trained for 500 epochs using CrossEntropy loss with the Adam optimizer with learning rate $0.001$ and no weight decay \cite{Kingma2014-au}.

In the MNIST example, only MNIST 1's and 7's were selected for our dataset. We trained 20 random networks of size $[784, 20, 20, 20, 2]$. We trained for 10 epochs using the Adam optimizer, with a learning rate of $0.001$, where the loss was the CrossEntropy loss and an $\ell_1$ weight decay regularization term with value $1\cdot 10^{-5}$. For each of these networks, 20 randomly centered $\ell_\infty$ balls of radius 0.1 were evaluated. 

For each experiment, we presented only the results for compute time and standard deviations, as well as mean relative error with respect to the answer returned by LipMIP. 

\paragraph{Effect of Training on Lipschitz Constant} 
When demonstrating the effect of different regularization schemes we train a 2-dimensional network with layer sizes $[2, 20, 40, 20, 2]$ over a synthetic dataset generated using 256 points over $\mathbb{R}^2$, 2 classes and 20 leaders. All training losses were optimized for 1000 epochs using the Adam optimizer with a learning rate of 0.001, and no implicit weight decay. Snapshots were taken every 25 epochs and LipMIP was evaluated over the $[0,1]^2$ domain. All loss functions incorporated the CrossEntropy loss with a scalar value of 1.0. The FGSM training scheme replaced all clean examples with adversarial examples generated via FGSM and a step size of 0.1. The $\ell_1$-weight regularization scheme had a penalization weight of $1\cdot 10 ^{-4}$ and the $\ell_2$-weight regularization scheme had a penalization weight of $1\cdot 10^{-3}$. Weights for $\ell_p$ weight penalties were chosen to not affect training accuracy and were determined by a line search.

The same setup was used to evaluate the accuracy of various estimators during training. The network that was considered in this case was the one trained only using CrossEntropy loss.

\paragraph{Random Networks and Lipschitz Constants}
For the random network experiment, 5000 neural networks with size [10, 10, 10, 1] were initialized using He initialization. LipMIP evaluated the maximal $\ell_1$ norm of the gradient over an origin-centered $\ell_\infty$ ball of radius 1000.

\subsection{Additional Experiments}

\paragraph{Effect of Architecture on Lipschitz Estimation}
We investigate the effects of changing architecture on Lipschitz estimation techniques. We generate a single synthetic dataset, train networks with varying depth and width, and evaluate each Lipschitz Estimation technique on each network over the $[0,1]^2$ domain. The synthetic dataset used is  over 2 dimensions, with 300 random points, 10 leaders and 2 classes. Training for both the width and depth series is performed using 200 epochs of Adam with learning rate 0.001 over the CrossEntropy loss, with no regularization.

To investigate the effects of changing width, we train networks with size $[2, C, C, C, 2]$ where $C$ is the x-axis displayed in Figure \ref{fig:arch-series} (left).

To explore the effects under changing depth, we train networks with size $[2] + [20]\times C + [2]$, where $C$ is the x-axis displayed in Figure \ref{fig:arch-series} (right). Note that the $y$-axis is a \emph{log-scale}: indicating that estimated Lipschitz constants rise exponentially with depth.

\begin{figure}%
    \centering
    \subfloat{{\includegraphics[width=0.45\textwidth]{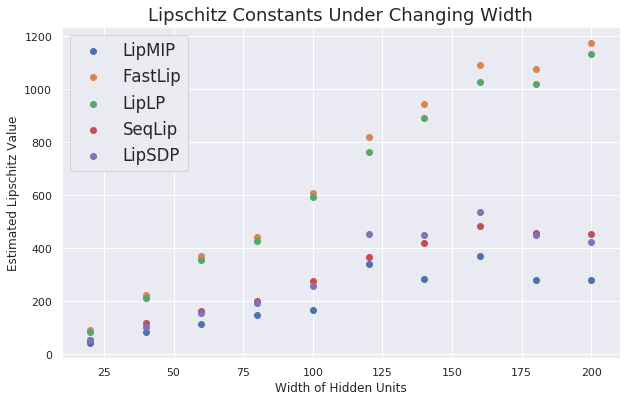}}}%
    \qquad
    \subfloat{{\includegraphics[width=0.45\textwidth]{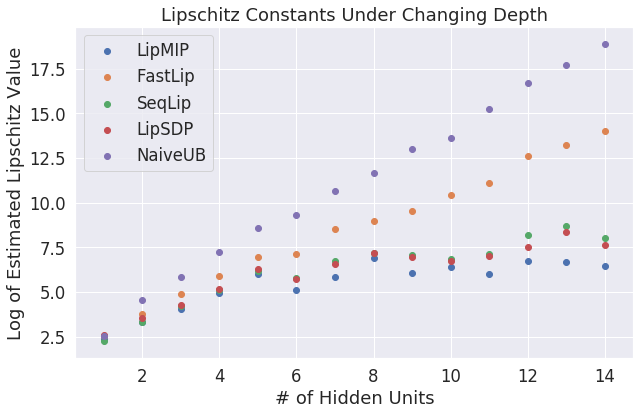}} }%
    \caption{(Left): Evaluation of Lipschitz estimators over changing width. A fixed dataset and training scheme is picked, and multiple networks with varying width are trained. Notice the small increase in Lipschitz constant returned by LipMIP, while the ever-increasing absolute error between efficient estimation techniques. (Right): Evaluation of Lipschitz estimators over changing depth. A fixed dataset and training scheme is picked, and multiple networks with varying depth are trained. Note that the $y$-axis is a \emph{log-scale}, implying that Lipschitz constants are much more sensitive to depth than width}%
    \label{fig:arch-series}%
\end{figure}

\paragraph{Estimation of $L^1(f, \mathcal{X})$:} Experiments presented in the main paper evaluate $L^\infty(f, \mathcal{X})$, which is the maximal $\ell_1$ norm of the gradient. We can present the same experiments over the $\ell_\infty$ norm of the gradient. Tables \ref{table:l1-exp1} and \ref{table:l1-exp2} display results comparing estimation techniques for $L^{1}(f,\mathcal{X})$ under the same settings as the ``Accuracy vs. Efficiency'' experiments in the main paper: that is, we estimate the maximal $||\nabla f||_\infty$ over random networks, networks trained on synthetic datasets, and MNIST networks. Figure \ref{fig:l1-training-fig} demonstrates the effects of training and regularization on Lipschitz estimators on $L^{1}(f, \mathcal{X})$. Figure \ref{fig:l1-random-conc} plots a histogram of both $L^{1}(f)$ and $L^{\infty}(f)$ over random networks. 

\begin{table}[]
\centering
\begin{tabular}{@{}r|l|r|l|r@{}}
                                          & \multicolumn{2}{c|}{\textbf{Random Network}}                                          & \multicolumn{2}{c}{\textbf{Synthetic Dataset}}                                       \\ \midrule\midrule
\multicolumn{1}{c|}{\textbf{Method}}     & \multicolumn{1}{c|}{\textbf{Time (s)}} & \multicolumn{1}{c|}{\textbf{Relative Error}} & \multicolumn{1}{c|}{\textbf{Time (s)}} & \multicolumn{1}{c}{\textbf{Relative Error}} \\ \midrule
\multicolumn{1}{r|}{\textbf{RandomLB}}   & $     0.238 \pm      0.004$            & $    -30.43\%$                               & $     0.301 \pm      0.004$            & $    -29.27\%$                               \\ \midrule
\multicolumn{1}{r|}{\textbf{CLEVER}}     & $     1.442 \pm      0.071$            & $    -13.00\%$                               & $    55.847 \pm     79.212$            & $      0.00\%$                               \\ \midrule
\multicolumn{1}{r|}{\textbf{LipMIP}} & $    18.825 \pm     19.244$            & $      0.00\%$                               & $     1.873 \pm      0.030$            & $      4.18\%$                               \\ \midrule
\multicolumn{1}{r|}{\textbf{FastLip}}    & $     0.001 \pm      0.000$            & $    167.55\%$                               & $     0.028 \pm      0.001$            & $    156.67\%$                               \\ \midrule
\multicolumn{1}{r|}{\textbf{LipLP}}      & $     0.018 \pm      0.009$            & $    167.55\%$                               & $     0.001 \pm      0.000$            & $    156.67\%$                               \\ \midrule
\multicolumn{1}{r|}{\textbf{LipSDP}}     & $     2.624 \pm      0.026$            & $    559.97\%$                               & $     2.705 \pm      0.030$            & $    432.47\%$                               \\ \midrule
\multicolumn{1}{r|}{\textbf{SeqLip}}     & $     0.007 \pm      0.001$            & $    773.38\%$                               & $     0.015 \pm      0.002$            & $    674.84\%$                               \\ \midrule
\multicolumn{1}{r|}{\textbf{NaiveUB}}    & $     0.000 \pm      0.000$            & $   3121.51\%$                               & $     0.000 \pm      0.000$            & $  11979.62\%$                               \\ \bottomrule
\end{tabular}
\caption{Accuracy vs. Efficiency tradeoffs for estimating $L^{1}(f, \mathcal{X})$ over the unit hypercube on random networks of layer sizes $[16,16,16, 2]$ and networks with layer sizes $[10, 20, 30, 2]$ trained over a synthetic dataset.}\label{table:l1-exp1}
\end{table}

\begin{table}[]
\centering
\begin{tabular}{@{}r|l|r|l|r@{}}
\textbf{}         & \multicolumn{2}{c|}{\textbf{Radius 0.1}}                                              & \multicolumn{2}{c|}{\textbf{Radius 0.2}}                                             \\ \midrule\midrule
\textbf{Method}   & \multicolumn{1}{c|}{\textbf{Time (s)}} & \multicolumn{1}{c|}{\textbf{Relative Error}} & \multicolumn{1}{c|}{\textbf{Time (s)}} & \multicolumn{1}{c}{\textbf{Relative Error}} \\ \midrule
\textbf{RandomLB} & $     0.330 \pm      0.006$            & $-44.25 \%$                                  & $     0.325 \pm      0.005$            & $-35.08 \%$                                 \\ \midrule
\textbf{CLEVER}   & $     6.855 \pm      4.984$            & $-38.31 \%$                                  & $    23.010 \pm      0.612$            & $-27.24 \%$                                 \\ \midrule
\textbf{LipMIP}   & $     4.550 \pm      2.519$            & $0.00 \%$                                    & $     4.292 \pm      1.183$            & $0.00 \%$                                   \\ \midrule
\textbf{FastLip}  & $     0.001 \pm      0.000$            & $+43.45 \%$                                  & $     0.233 \pm      0.012$            & $+32.66 \%$                                 \\ \midrule
\textbf{LipLP}    & $     0.229 \pm      0.021$            & $+43.45 \%$                                  & $     0.002 \pm      0.001$            & $+32.66 \%$                                 \\ \midrule
\textbf{NaiveUB}  & $     0.000 \pm      0.000$            & $+961.31 \%$                                 & $     0.000 \pm      0.000$            & $+891.10 \%$                                \\ \midrule
\textbf{LipSDP}   & $    18.184 \pm      1.935$            & $+16147.15 \%$                               & $    20.161 \pm      2.333$            & $+14526.92 \%$                              \\ \midrule
\textbf{SeqLip}   & $     0.013 \pm      0.004$            & $+16559.65 \%$                               & $     0.021 \pm      0.004$            & $+14917.94 \%$                              \\ \bottomrule
\end{tabular}
\caption{Accuracy vs. Efficiency for estimating local $L^1(f)$ Lipschitz constants on a network with layer sizes $[784, 20, 20, 20, 2]$ trained to distinguish between MNIST 1's and 7's. We evaluate the local lipschitz constant where $\mathcal{X}$'s are chosen to be $\ell_1$-balls with specified radius centered at random points in the unit hypercube. }\label{table:l1-exp2}
\end{table}

\begin{figure}%
    \centering
    \subfloat{{\includegraphics[width=0.45\textwidth]{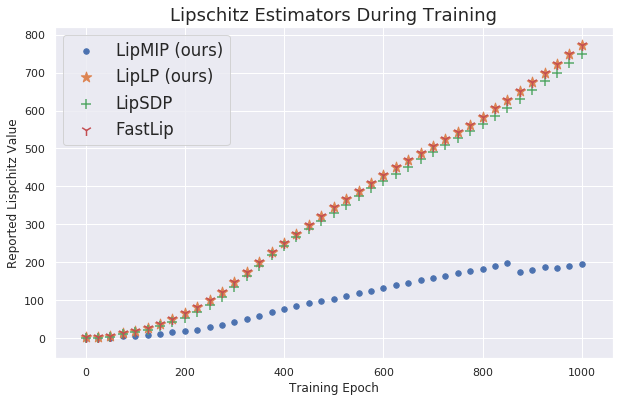}}}%
    \qquad
    \subfloat{{\includegraphics[width=0.45\textwidth]{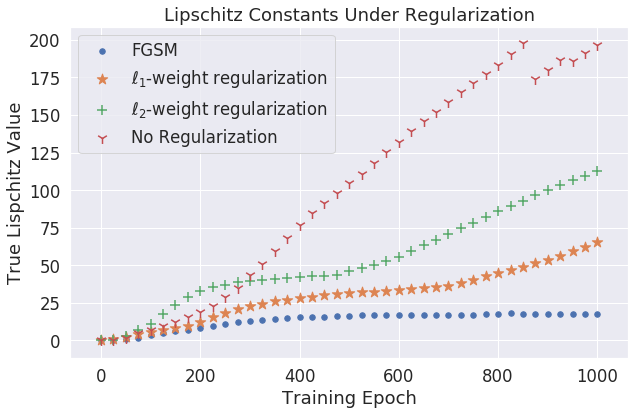}} }%
    \caption{(Left) Effect of training on various Lipschitz estimators in the $L^1(f, \mathcal{X})$ setting. A network of layer sizes [2, 20, 40, 20, 2] was trained using Adam to minimize CrossEntropy loss over a synthetic dataset. Notice how in this setting, even LipSDP does not provide a tight bound. (Right) Effect of regularization scheme on $L^1(f, \mathcal{X})$. }%
    \label{fig:l1-training-fig}%
\end{figure}

\begin{figure}
    \centering
    \includegraphics[width=0.7\textwidth]{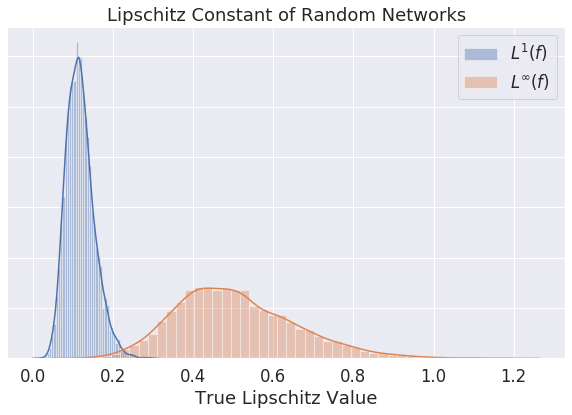}
    \caption{Histograms for $L^1(f)$ and $L^\infty(f)$ over random networks with layer sizes $[10, 10, 10, 1]$. Notice the much tighter concentration for $L^{1}(f)$.}
    \label{fig:l1-random-conc}
\end{figure}

\paragraph{Relaxed LipMIP} In section 7 of the main paper, we described two relaxed forms of LipMIP: one that leverages early stopping of mixed-integer programs that can be terminated at a desired integrality gap, and one that is a linear-programming relaxation of LipMIP. Here we present results regarding the accuracy vs. efficiency tradeoff for these techniques. We evaluate LipMIP with integrality gaps of at most $\{100\%, 10\%, 1\%, 0\%\}$ and LipLP over the unit hypercube on the same random networks and synthetic datasets used to generate the data in Table \ref{table:l1-exp1}. These results are displayed in Table \ref{table:relaxed-lipmip}. 
\begin{table}[]
\centering
\begin{tabular}{@{}l|l|r|r|l|r|r@{}}
\multicolumn{1}{c|}{\textbf{}}       & \multicolumn{3}{c|}{\textbf{Random Network}}                                                                                & \multicolumn{3}{c|}{\textbf{Synthetic Dataset}}                                                                            \\ \midrule\midrule
\multicolumn{1}{c|}{\textbf{Method}} & \multicolumn{1}{c|}{\textbf{Time (s)}} & \multicolumn{1}{c|}{\textbf{Value}} & \multicolumn{1}{c|}{\textbf{Relative Error}} & \multicolumn{1}{c|}{\textbf{Time (s)}} & \multicolumn{1}{c|}{\textbf{Value}} & \multicolumn{1}{c}{\textbf{Relative Error}} \\ \midrule
\textbf{LipLP}                       & $     0.017 \pm      0.003$            & $     5.508$                        & $+462.26 \%$                                 & $     0.032 \pm      0.003$            & $  2087.957$                        & $+389.80 \%$                                \\ \midrule
\textbf{LipMIP(100\%)}               & $   132.988 \pm    119.857$            & $     1.937$                        & $+91.98 \%$                                  & $    14.690 \pm     12.773$            & $   742.395$                        & $+69.91 \%$                                 \\ \midrule
\textbf{LipMIP(10\%)}                & $   355.974 \pm    294.666$            & $     1.102$                        & $+9.37 \%$                                   & $    50.931 \pm     53.403$            & $   484.550$                        & $+8.75 \%$                                  \\ \midrule
\textbf{LipMIP(1\%)}                 & $   357.620 \pm    287.511$            & $     1.015$                        & $+0.72 \%$                                   & $    59.969 \pm     63.484$            & $   448.872$                        & $+0.57 \%$                                  \\ \midrule
\textbf{LipMIP}                      & $   362.533 \pm    304.685$            & $     1.009$                        & $0.00 \%$                                    & $    60.123 \pm     63.721$            & $   446.327$                        & $0.00 \%$                                   \\ \bottomrule
\end{tabular}
\caption{Performance vs. accuracy evaluations of various relaxations of LipMIP. LipLP is the linear programming relaxation, and LipMIP(x\%) refers to early stopping of LipMIP once an integrality gap of x\% has been attained. It can be significantly more efficient to attain reasonable upper bounds than it is to compute the Lipschitz constant exactly.}\label{table:relaxed-lipmip}
\end{table}

\paragraph{Cross-Lipschitz Evaluation} 
We evaluate the $||\cdot||_\times$ norm for applications in untargeted robustness verification. We generate a synthetic dataset over 8 dimensions, with 2000 data points, 200 leaders, and 100 classes. We run 10 trials of the following procedure: train a network with layer sizes $[8, 40, 40, 40, 100]$ and pick 20 random data points to evaluate over an $\ell_\infty$ ball of radius 0.1. We train the network with CrossEntropy loss, $\ell_1$-regularization with constant $5\cdot 10^{-4}$ and train for 2000 epochs using Adam with a learning rate of $0.001$ and no other weight-decay terms. Accuracy is at least 65\% for each trained network. We evaluate the time and reported Lipschitz value for the following  metrics, for data points that have label $i$:
\begin{equation}
    \min_j L^{\infty}(f_{ij}, \mathcal{X}),
\end{equation}
where we evaluate this naively (\textbf{Naive}), or with the search space of all $e_{ij}$ encoded directly with mixed-integer-programming (\textbf{MIPCrossLip(i)}). We also evaluate the  $||\cdot||_{\times(i)}$ norm in lieu of $||\cdot||_\beta$, where $||\cdot||_{\times(i)}$ is defined as 
\begin{equation}
    ||x||_{\times(i)}: = \sup_{y\in \mathcal{P}(i)} |y^Tx|
\end{equation}
\begin{equation}
    \mathcal{P}(i):= \text{Conv}\left(\{e_i-e_j\;|\; j\in[m]\} \cup \{e_i\;|\; i\in [m]\}\right)
\end{equation}
and we evaluate 
\begin{equation}
    L^{(\infty, ||\cdot||_{\times(i)})}(f, \mathcal{X})
\end{equation}
where we denote this technique (\textbf{CrossLip(i)}). Times and returned Lipschitz values are displayed in Table \ref{table:crosslip-table}.

\begin{table}[]
\centering
\begin{tabular}{@{}r|l|r|r@{}}
\textbf{Method}         & \multicolumn{1}{c|}{\textbf{Time (s)}} & \multicolumn{1}{c|}{\textbf{Value}} & \multicolumn{1}{c}{\textbf{Relative Error}} \\ \midrule
\textbf{Naive}          & $    97.698 \pm    104.281$            & $    72.083$                        & $0.00 \%$                                   \\ \midrule
\textbf{CrossLip(i)}    & $     6.156 \pm     14.256$            & $   350.176$                        & $+441.14 \%$                                \\ \midrule
\textbf{MIPCrossLip(i)} & $     6.987 \pm     14.704$            & $   350.176$                        & $+441.14 \%$                                \\ \bottomrule
\end{tabular}
\caption{Application to multiclass robustness verification. On networks trained on a synthetic dataset with 100 classes, we evaluate untargeted robustness verification techniques across random datasets. The value column refers to the computed Lipschitz value, and relative error is relative to the Naive value. Notice that a 15x speedup is attainable on average at the cost of providing a 4.5x looser bound on robustness.}\label{table:crosslip-table}
\end{table}

\end{document}